\documentclass{article}

\usepackage{etoolbox}
\newtoggle{longversion}
\settoggle{longversion}{true} 
\iftoggle{longversion}{\newcommand{\extversion}{the Appendix}}
{\newcommand{\extversion}{the extended version of this paper \cite{chen2018understanding-arxiv}}}




\iftoggle{longversion}{
\usepackage[preprint, nonatbib]{neurips_2018}
}
{
\usepackage[final]{neurips_2018}
}

\bibliographystyle{plain}



\usepackage{microtype}
\usepackage{graphicx}
\usepackage{amsmath,amssymb}
\usepackage{mathabx} 
\usepackage{booktabs} 
\usepackage{xspace}
\usepackage{enumerate}

\usepackage{hyperref}
\usepackage{xcolor}

\usepackage{algorithm}
\usepackage[noend]{algorithmic}







\newcommand{\BlackBox}{\rule{1.5ex}{1.5ex}}  
\newenvironment{proof}{\par\noindent{\bf Proof\ }}{\hfill\BlackBox\\[2mm]}
 
\newtheorem{theorem}{Theorem}
\newtheorem{lemma}[theorem]{Lemma} 
\newtheorem{proposition}[theorem]{Proposition}

\newcommand{\yuxin}[1]{\ifthenelse{\boolean{showcomments}}{\textcolor{blue}{YC: #1}}{}}
\newcommand{\adish}[1]{\ifthenelse{\boolean{showcomments}}{\textcolor{red}{AS: #1}}{}}
\newcommand{\oisin}[1]{\ifthenelse{\boolean{showcomments}}{\textcolor{green}{OMA: #1}}{}}


\newcommand{\commentout}[1]{}

\newcommand{\denselist}{
\itemsep -4pt\topsep-8pt\partopsep-8pt\itemindent-10pt
}

\newcommand{\Hypotheses}{\mathcal{H}}
\newcommand{\hypotheses}{H}

\newcommand{\hstar}{\hypothesis^*}
\newcommand{\hinit}{{\hypothesis_0}}

\newcommand{\oracle}{\textsf{Oracle}\xspace}
\newcommand{\teachalg}{\textsf{Teacher}\xspace}

\newcommand{\Examples}{\mathcal{Z}}
\newcommand{\Instances}{\mathcal{X}}
\newcommand{\Clabels}{\mathcal{Y}}
\newcommand{\commentfmt}[1]{\textit{\textcolor{gray}{#1}}}
\newcommand{\examples}{Z}

\renewcommand{\example}{{z}}
\newcommand{\instance}{{x}}
\newcommand{\clabel}{{y}}

\newcommand{\TD}{\textsf{TD}}

\newcommand{\dist}{\ensuremath{\text{dist}}}

\newcommand{\edgedist}{\ensuremath{\text{dist}_e}}

\newcommand{\prefset}{\mathcal{S}}
\newcommand{\ordering}{\sigma}
\newcommand{\uordering}{\sigma_{\textsf{unif}}}

\newcommand{\orderingof}[2]{\ordering({#1};{#2})}

\newcommand{\randomstate}{\phi}

\newcommand{\adarteacher}{\textsf{Ada-R-Teacher}\xspace}

\newcommand{\adar}{\textsf{Ada-R}\xspace}

\newcommand{\nonadar}{\textsf{Non-R}\xspace}
\newcommand{\nonadal}{\textsf{Non-L}\xspace}

\newcommand{\setcover}{\textsf{SC}\xspace}
\newcommand{\random}{\textsf{Rand}\xspace}

\newcommand{\tworec}{\textsc{2-Rec}\xspace}
\newcommand{\lattice}{\textsc{Lattice}\xspace}

\newcommand{\optcost}{\ensuremath{\text{cost}^{*}}\xspace}

\newcommand{\greedycost}{\ensuremath{\text{cost}^{\text{g}}}\xspace}

\newcommand{\futurecostopt}{\ensuremath{D^*}\xspace}
\newcommand{\futurecostapprox}{\ensuremath{\tilde{D}\xspace}}

\newcommand{\lbl}{y}

\DeclareMathOperator{\cost}{cost}



\newcommand{\paren} [1] {\ensuremath{ \left( {#1} \right) }}




\def \argmin {\mathop{\rm arg\,min}}


\newcommand{\reals}{\ensuremath{\mathbb{R}}}

\newcommand{\hypothesis}[0]{\ensuremath{h}}



\newcommand{\bigO}[1]{\ensuremath{O\paren{#1}}}

\newcommand{\bigOmega}[1]{\ensuremath{\Omega\paren{#1}}}








\vfuzz2pt 
\hfuzz2pt 
\newtheorem{exampledef}{Example}
\numberwithin{equation}{section}


\newcommand{\figref}[1]{Fig.~\ref{#1}}

\newcommand{\secref}[1]{\S\ref{#1}}

\newcommand{\thmref}[1]{Theorem~\ref{#1}}

\newcommand{\lemref}[1]{Lemma~\ref{#1}}
\newcommand{\algref}[1]{Algorithm~\ref{#1}}


\usepackage{subcaption}
\usepackage{wrapfig}






\newcommand{\given}{\mid}


\usepackage[utf8]{inputenc} 
\usepackage[T1]{fontenc}    
\usepackage{hyperref}       
\usepackage{url}            
\usepackage{booktabs}       
\usepackage{amsfonts}       
\usepackage{nicefrac}       
\usepackage{microtype}      

\title{Understanding the Role of Adaptivity in Machine Teaching: The Case of Version Space Learners}

%

\author{
  Yuxin Chen\textsuperscript{\textdagger} \quad \textbf{Adish Singla\textsuperscript{\textdaggerdbl}} \quad Oisin Mac Aodha\textsuperscript{\textdagger}\\
  \textbf{Pietro Perona\textsuperscript{\textdagger}} \quad \textbf{Yisong Yue\textsuperscript{\textdagger}}\\
  \\
  \textsuperscript{\textdagger}Caltech, \texttt{\{chenyux, macaodha, perona, yyue\}@caltech.edu}, \\
  \textsuperscript{\textdaggerdbl}MPI-SWS, \texttt{adishs@mpi-sws.org}\\
}

\begin{document}

\maketitle


\begin{abstract}
\looseness -1 In real-world applications of education, an effective teacher adaptively chooses the next example to teach based on the learner's current state. However, most existing work in \emph{algorithmic machine teaching} focuses on the batch setting, where adaptivity plays no role. In this paper, we study the case of teaching consistent, version space learners in an interactive setting.
At any time step, the teacher provides an example, the learner performs an update, and the teacher observes the learner's new state. 
We highlight that adaptivity does not speed up the teaching process when considering existing models of version space learners, such as the ``worst-case'' model (the learner picks the next hypothesis randomly from the version space) and the ``preference-based'' model (the learner picks hypothesis according to some global preference). 
Inspired by human teaching, we propose a new model where the learner picks hypotheses according to some local preference defined by the current hypothesis. We show that our model exhibits several desirable properties, e.g., adaptivity plays a key role, and the learner's transitions over hypotheses are smooth/interpretable. We develop adaptive teaching algorithms, and demonstrate our results via simulation and user studies.

\end{abstract}



\section{Introduction}\label{sec:intro}
Algorithmic machine teaching studies the interaction between a teacher and a student/learner where the teacher's objective is to find an optimal training sequence to steer the learner towards a desired goal \cite{DBLP:journals/corr/ZhuSingla18}. Recently, there has been a surge of interest in machine teaching as several different communities have found connections to this problem setting: (i) machine teaching provides a rigorous formalism for a number of real-world applications including personalized educational systems~\cite{zhu2015machine}, adversarial attacks~\cite{DBLP:conf/aaai/MeiZ15}, imitation learning~\cite{cakmak2012algorithmic,haug2018teachingrisk}, and program synthesis~\cite{DBLP:journals/acta/JhaS17}; (ii) the complexity of teaching (``Teaching-dimension'') has strong connections with the information complexity of learning (``VC-dimension'') \cite{doliwa2014recursive}; and (iii) the optimal teaching sequence has properties captured by new models of interactive machine learning, such as curriculum learning \cite{DBLP:conf/icml/BengioLCW09} and self-paced learning \cite{DBLP:journals/isci/MengZJ17}.

In the above-mentioned applications, adaptivity clearly plays an important role. For instance, in automated tutoring, adaptivity enables personalization of the content based on the student's current knowledge~\cite{DBLP:conf/icassp/TekinBS15,weld2012personalized,hunziker2018teachingmultiple}. In this paper, we explore the \emph{adaptivity gain} in algorithmic machine teaching, i.e., how much speedup a teacher can achieve via adaptively selecting the next example based on the learner's current state? While this question has been well-studied in the context of active learning and sequential decision making \cite{DBLP:conf/ciac/HellersteinKL15}, the role of adaptivity is much less understood in algorithmic machine teaching. 
A deeper understanding would, in turn, enable us to develop better teaching algorithms and more realistic learner models to exploit the adaptivity gain.

We consider the well-studied case of teaching a consistent, version space learner.
A learner in this model class maintains a version space (i.e., a subset of hypotheses that are consistent with the examples received from a teacher) and outputs a hypothesis from this version space. Here, a hypothesis can be viewed as a function that assigns a label to any unlabeled example. Existing work has studied this class of learner model to establish theoretical connections between the information complexity of teaching vs. learning \cite{goldman1995complexity,zilles2011models,gao2017preference}.  
Our main objective is to understand, when and by how much, a teacher can benefit by adapting the next example based on the learner's current hypothesis. We compare two types of teachers: (i) an \emph{adaptive teacher} that observes the learner's hypothesis at every time step, and (ii) a  \emph{non-adaptive teacher} that only knows the initial hypothesis of the learner and does not receive any feedback during teaching. The non-adaptive teacher operates in a batch setting where the complete sequence of examples can be constructed before teaching begins.
\begin{wrapfigure}{R}{0.35\textwidth}
  \centering
  \includegraphics[trim={10pt 70pt 10pt 70pt},clip,width=0.35\textwidth]{./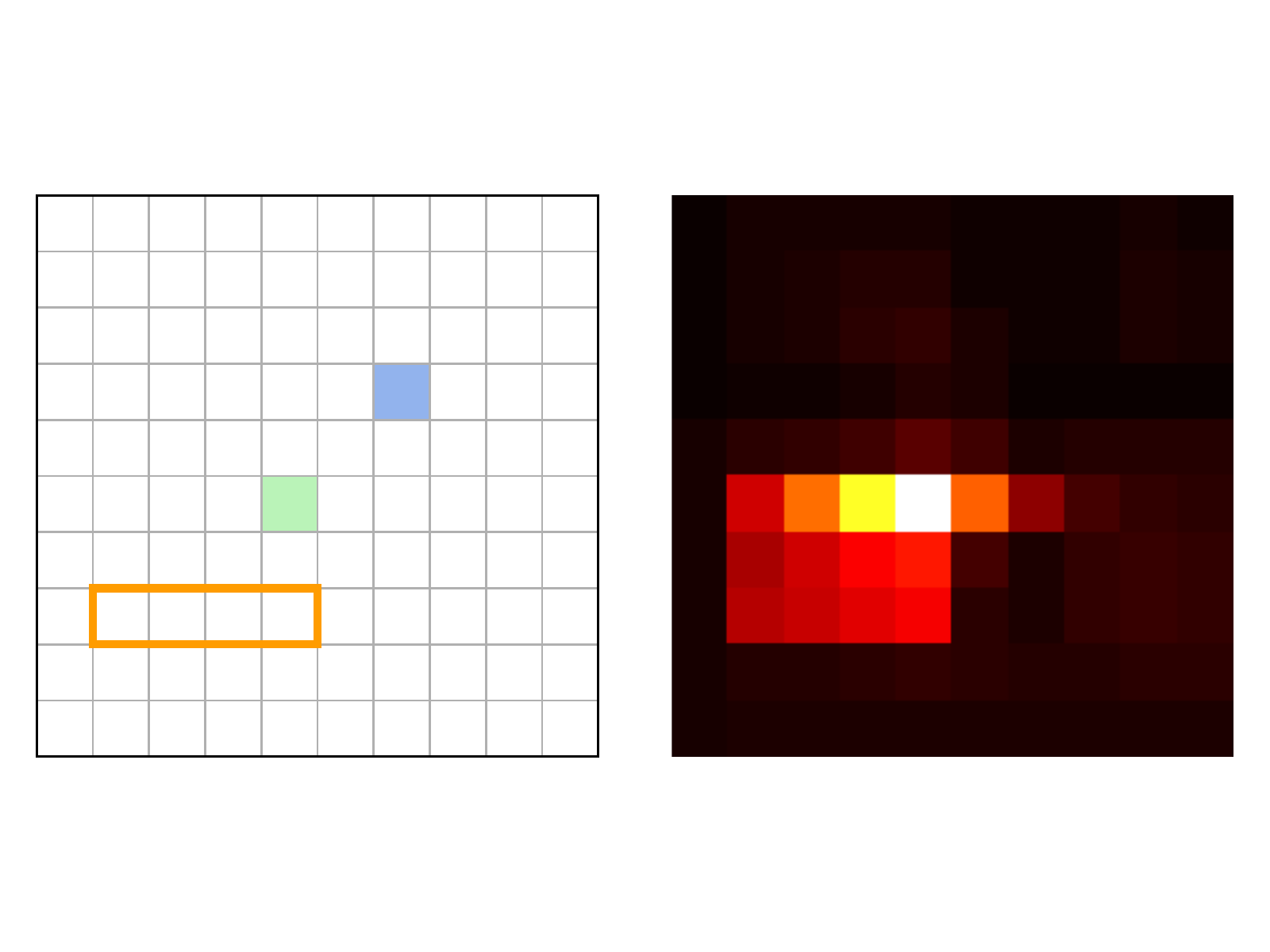}
  \caption{Local update preference. Users were asked to update the position of the orange rectangle so that green cells were inside and blue ones outside. The heatmap on the right displays the updated positions.}
  \label{fig:intro.smoothness}
\end{wrapfigure}
Inspired by real-world teaching scenarios and as a generalization of the global ``preference-based'' model  \cite{gao2017preference}, we propose a new model where the learner's choice of next hypothesis $\hypothesis'$  depends on some \emph{local} preferences defined by the current hypothesis $\hypothesis$. 
For instance, the local preference could encode that the learner prefers to make smooth transitions by picking a consistent hypothesis  $\hypothesis'$ which is ``close'' to $\hypothesis$.  
Local preferences, as seen in \figref{fig:intro.smoothness}, are an important aspect of many machine learning algorithms (e.g., incremental or online learning algorithms \cite{ross2008incremental,shalev2012online}) in order to increase robustness and reliability. 
We present results in the context of two different hypotheses classes, and show through simulation and user studies that adaptivity can play a crucial role when teaching learners with local preferences.
\section{Related Work} \label{sec:related}
\noindent\textbf{Models of version space learners}
Within the model class of version space learners, there are different variants of learner models depending upon their anticipated behavior, and these models lead to different notions of teaching complexity. For instance, (i) the ``worst-case'' model \cite{goldman1995complexity} essentially assumes nothing and the learner's behavior is completely unpredictable, (ii) the ``cooperative''  model \cite{zilles2011models} assumes a smart learner who anticipates that she is being taught, and (iii) the ``preference-based'' model \cite{gao2017preference} assumes that she has a global preference over the hypotheses. Recently, some teaching complexity results have been extended beyond version space learners, such as Bayesian learners \cite{zhu2013machine}, probabilistic/randomized learners \cite{singla2014near,balbach2011teaching}, learners implementing an optimization algorithm \cite{liu2016teaching}, and for iterative learning algorithms based on gradient updates \cite{liu2017iterative}. Here, we focus on the case of version space learners, leaving the extension to other types of learners for future work.


%
%
%
%
%

\noindent\textbf{Batch vs. sequential teaching}
Most existing work on algorithmic machine teaching has focused on the batch setting, where the teacher constructs a set of examples and provides it to the learner at the beginning of teaching \cite{goldman1995complexity,zilles2011models,gao2017preference,chen18explain}. 
There has been some work on sequential teaching models that are more suitable for understanding the role of adaptivity.  Recently, \cite{liu2017iterative} studied the problem of iteratively teaching a gradient learner by providing a sequence of carefully constructed examples. However, since the learner's update rule is completely deterministic, a non-adaptive teacher with knowledge of the learner's initial hypothesis $h^0$ would behave exactly the same as an adaptive teacher (i.e., the adaptivity gain is zero). \cite{balbach2011teaching} studied randomized version-space learners with limited memory, and demonstrated the power of adaptivity for a specific class of hypotheses. Sequential teaching has also been studied in the context of crowdsourcing applications by \cite{johns2015becoming} and \cite{singla2013actively}, empirically demonstrating the improved performance of adaptive vs. non-adaptive teachers. However, these approaches do not provide any theoretical understanding of the adaptivity gain as done in our work.
%

\noindent\textbf{Incremental learning and teaching} Our learner model with local preferences is quite natural in real-world applications.  A large class of iterative machine learning algorithms are based on the idea of incremental updates which in turn is important for the robustness and generalization of learning \cite{ross2008incremental,shalev2012online}. From the perspective of a human learner, the notion of incremental learning aligns well with the concept of the ``Zone of Proximal Development (ZPD)'' in the educational research and psychology literature \cite{vygotsky1987zone}. The ZPD suggests that teaching is most effective when focusing on a task \emph{slightly} beyond the current abilities of the student as the human learning process is inherently incremental. 
Different variants of learner model studied in the cognitive science literature \cite{levine1975cognitive,bonawitz2014win,rafferty2016faster} have an aspect of incremental learning. For instance, the ``win stay lose shift’’ model \cite{bonawitz2014win} is a special case of the local preference model that we propose in our work.
Based on the idea of incremental learning, \cite{balbach2005teaching} studied the case of teaching a variant of the version space learner when restricted to incremental learning and is closest to our model with local preferences. 
However, there are two key differences in their model compared to ours: (i) they allow learners to select inconsistent hypotheses (i.e., outside the version space), (ii) the restricted movement in their model is a hard constraint which in turns means that teaching is not always feasible -- given a problem instance it is NP-Hard to decide if a given target hypothesis is teachable or not.

\section{The Teaching Model}\label{sec:model}
We now describe the teaching domain, present a generic model of the learner and the teacher, and then state the teacher's objective.

\subsection{The Teaching Domain}
Let $\Instances$ denote a ground set of unlabeled examples, and $\Clabels$ denote the set of possible labels that could be assigned to elements of $\Instances$. We denote by $\Hypotheses$ a finite class of hypotheses, each element $\hypothesis\in \Hypotheses$ is a function $\hypothesis: \Instances \rightarrow \Clabels$. In this paper, we will only consider boolean functions and hence $\Clabels = \{0,1\}$. In our model, $\Instances$, $\Hypotheses$, and $\Clabels$ are known to both the teacher and the learner.
There is a \emph{target hypothesis} $\hstar\in \Hypotheses$ that is known to the teacher, but not the learner.  Let $\Examples \subseteq \Instances \times \Clabels$ be the ground set of labeled examples. Each element $\example = (\instance_\example,\clabel_\example) \in \Examples$ represents a labeled example where the label is given by the target hypothesis $\hstar$, i.e., $\clabel_\example = \hstar(\instance_\example)$.
Here, we define the notion of \emph{version space} needed to formalize our model of the learner. Given a set of labeled examples $\examples \subseteq \Examples$, the version space induced by $\examples$ is the subset of hypotheses $\Hypotheses(\examples) \in \Hypotheses$ that are consistent with the labels of all the examples, i.e., $\Hypotheses(\examples):= \{\hypothesis \in \Hypotheses: \forall \example = (\instance_\example, \clabel_\example) \in \examples, h(\instance_\example) = \clabel_\example\}$.

\subsection{Model of the Learner}\label{sec:model:learner}
We now introduce a generic model of the learner by formalizing our assumptions about how she adapts her hypothesis based on the labeled examples received from the teacher. A key ingredient of this model is the \emph{preference function} of the learner over the hypotheses as described below. As we show in the next section, by providing specific instances of this preference function, our generic model reduces to existing models of version space learners, such as the ``worst-case'' model \cite{goldman1995complexity} and the global ``preference-based" model \cite{gao2017preference}.


Intuitively, the preference function encodes the learner's transition preferences. 
Consider that the learner's current hypothesis is $h$, and there are two hypotheses $h'$, $h''$ that they could possibly pick as the next hypothesis.
We want to encode whether the learner has any preference in choosing $h'$ or  $h''$. Formally, we define the preference function as $\sigma: \Hypotheses \times \Hypotheses \rightarrow \reals_+$. Given current hypothesis $h$ and any two hypothesis $\hypothesis', \hypothesis''$, we say that $\hypothesis'$ is preferred to $\hypothesis''$ from $\hypothesis$, iff $\sigma(\hypothesis' ; \hypothesis) < \sigma(\hypothesis''; \hypothesis)$. If $\sigma(\hypothesis'; \hypothesis) = \sigma(\hypothesis''; \hypothesis)$, then the learner could pick either one of these two.


The learner starts with an initial hypothesis $\hinit\in \Hypotheses$ before receiving any labeled examples from the teacher. 
Then, the interaction between the teacher and the learner proceeds in discrete time steps. At any time step $t$, let us denote the labeled examples received by the learner up to (but not including) time step $t$ via a set $\examples_{t}$, the learner's version space as $\Hypotheses_t=\Hypotheses(\examples_{t})$,  and the current hypothesis as $\hypothesis_t$.
At time step $t$, we model the learning dynamics as follows:
(i) the learner receives a new example $\example_t$; 
and (ii) the learner updates the version space $\Hypotheses_{t+1}$, and picks the next hypothesis based on the current hypothesis $\hypothesis_t$, version space $\Hypotheses_{t+1}$, and the preference function $\ordering$:
  \begin{align}
    \hypothesis_{t+1} \in \{\hypothesis\in \Hypotheses_{t+1} : \orderingof{\hypothesis}{\hypothesis_t} = \min_{\hypothesis'\in \Hypotheses_{t+1}} \orderingof{\hypothesis'}{\hypothesis_t}\}.
    \label{eq.learners-jump}
  \end{align}



\subsection{Model of the Teacher and the Objective}
\looseness -1 The teacher's goal is to steer the learner towards the target hypothesis $\hstar$ by providing a sequence of labeled examples. At time step $t$, the teacher selects a labeled example $\example_t \in \Examples$ and the learner transitions from the current $\hypothesis_t$ to the next hypothesis $\hypothesis_{t+1}$ as per the model described above. Teaching finishes at time step $t$ if the learner's hypothesis $\hypothesis_{t}=\hypothesis^*$. Our objective is to design teaching algorithms that can achieve this goal in a minimal number of time steps. We study the \emph{worst-case} number of steps needed as is common when measuring the information complexity of teaching~\cite{goldman1995complexity,zilles2011models,gao2017preference}.

We assume that the teacher knows the learner's initial hypothesis $\hypothesis_0$ as well as the preference function $\orderingof{\cdot}{\cdot}$. 
In order to quantify the gain from adaptivity, we compare two types of teachers: (i) an \emph{adaptive teacher} who observes the learner's hypothesis $\hypothesis_t$ before providing the next labeled example $\example_{t}$ at any time step $t$; and (ii) a \emph{non-adaptive teacher} who only knows the initial hypothesis of the learner and does not receive any feedback from the learner during the teaching process.
Given these two types of teachers, we want to measure the \emph{adaptivity gain} by quantifying the difference in teaching complexity of the \emph{optimal adaptive} teacher compared to the \emph{optimal non-adaptive} teacher. 
\section{The Role of Adaptivity}\label{sec:adaptivity}
In this section, we study different variants of the learner's preference function, and formally state the adaptivity gain with two concrete problem instances.

\subsection{State-independent Preferences}
We first consider a class of preference models where the learner's preference about the next hypothesis does not depend on her current hypothesis. 
The simplest state-independent preference is captured by the ``worst-case'' model \cite{goldman1995complexity}, where the learner's preference over all hypotheses is uniform, i.e., $\forall \hypothesis, \hypothesis'$, $\orderingof{\hypothesis'}{\hypothesis} = c$, where $c$ is some constant.

\looseness -1 A more generic state-independent preference model is captured by non-uniform, global preferences. 
More concretely, for any $\hypothesis' \in \Hypotheses$, we have $\sigma(\hypothesis'; \hypothesis) = c_{\hypothesis'} \ \forall \hypothesis \in \Hypotheses$, a constant dependent only on $\hypothesis'$. 
This is similar to the notion of the global ``preference-based" version space learner introduced by \cite{gao2017preference}.


\begin{proposition}\label{prop:ada-state-ind}
  For the state-independent preference, adaptivity plays no role, i.e.,  the sample complexities of the optimal adaptive teacher and the optimal non-adaptive teacher are the same.
\end{proposition}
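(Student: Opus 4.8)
The plan is to show that, under state-independent preferences, observing the learner's current hypothesis gives the teacher no information that helps predict the learner's future transitions, so that the adaptive and non-adaptive teachers face one and the same optimization problem.

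First I would substitute the state-independent form $\orderingof{\hypothesis}{\hypothesis_t} = c_{\hypothesis}$ into the learner's update rule \eqnref{eq.learners-jump}. The set of admissible next hypotheses then becomes
\[
  P(\Hypotheses_{t+1}) := \set{\hypothesis \in \Hypotheses_{t+1} : c_{\hypothesis} = \min_{\hypothesis' \in \Hypotheses_{t+1}} c_{\hypothesis'}},
\]
which depends only on the version space $\Hypotheses_{t+1}$ and not on the current hypothesis $\hypothesis_t$. Since $\Hypotheses_{t+1} = \Hypotheses(\examples_{t+1})$ is the intersection of the constraints imposed by the received labeled examples, it is a deterministic, order-independent function of the \emph{set} of examples provided so far. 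Hence both the version space and the admissible set at every step are fixed entirely by the examples the teacher has chosen, independently of which hypothesis the learner actually realized along the way.

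Next I would use this to characterize the worst-case stopping time. Because every example is labeled by $\hstar$, the target is always consistent and thus $\hstar \in \Hypotheses_t$ for all $t$. An adversarial (worst-case) learner avoids $\hstar$ whenever it has a tie-breaking choice, so it is forced into $\hstar$ exactly at the first step where $\hstar$ becomes the unique minimizer of $c$ over the current version space, i.e.\ $P(\Hypotheses_t) = \set{\hstar}$. I would note that this condition is monotone: shrinking the version space cannot dislodge a unique minimizer that already lies in it, so ``first time forced'' is well-defined. Consequently, for any fixed collection of examples, the worst-case number of steps equals the first time the induced version space makes $\hstar$ the unique $c$-minimizer, equivalently the first time every $\hypothesis \neq \hstar$ with $c_{\hypothesis} \le c_{\hstar}$ has been eliminated.

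Finally I would compare the two teachers. The non-adaptive teacher must commit to a set of examples minimizing this stopping quantity, whereas the adaptive teacher may in principle branch on the observed $\hypothesis_t$; but since (by the first part of the argument) the realized hypothesis carries no information about future version spaces or about the stopping condition, every adaptive strategy collapses to a fixed set of examples attaining the same stopping time. Both teachers therefore solve the identical combinatorial problem of choosing a minimum collection of examples eliminating all $\hypothesis \neq \hstar$ with $c_{\hypothesis} \le c_{\hstar}$, so their optimal sample complexities coincide. The main obstacle I anticipate is the worst-case bookkeeping: formally arguing that adversarial tie-breaking reduces the objective to the unique-minimizer condition, and that the adaptive teacher's observations are genuinely vacuous rather than merely intuitively so.
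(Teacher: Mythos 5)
Your proposal is correct and follows essentially the same route as the paper, which gives no separate formal proof of this proposition but instead justifies it by the characterization in \eqnref{eq:td} and \eqnref{eq:pbtd}: under state-independent preferences the learner's admissible transitions depend only on the version space, hence only on the set of examples provided, so both teachers face the identical set-cover problem of eliminating every $\hypothesis \neq \hstar$ that is weakly preferred to $\hstar$. Your write-up merely makes explicit the adversarial tie-breaking and monotonicity bookkeeping that the paper leaves implicit.
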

In fact, for the uniform preference model, the teaching complexity of the adaptive teacher is the same as the \emph{teaching dimension} of the hypothesis class with respect to teaching $\hypothesis^*$, given by
\begin{align}\label{eq:td}
  \TD(\hypothesis^*, \Hypotheses) := \min_{\examples} |\examples|, \text{~s.t.~} \Hypotheses(\examples) = \{\hypothesis^*\}.
\end{align}
For the global preference model, similar to the notion of \emph{preference-based teaching dimension} \cite{gao2017preference}, the teaching complexity of the adaptive teacher is given by
\begin{align}\label{eq:pbtd}
                     \min_{\examples} |\examples|, \text{~s.t.~} \forall \hypothesis \in \Hypotheses(\examples) \setminus \{\hypothesis^*\}, \orderingof{\hypothesis}{\cdot} > \orderingof{\hstar}{\cdot}.
\end{align}

\subsection{State-dependent Preferences}\label{sec:statepref}
In real-world teaching scenarios, human learners incrementally build up their knowledge of the world, and their preference of the next hypothesis naturally depends on their current state. To better understand the behavior of an adaptive teacher under a state-dependent preference model, we investigate the following two concrete examples:
\begin{exampledef}[\tworec]\label{example:rect}
  $\Hypotheses$ consists of up to two disjoint rectangles\footnote{For simplicity of discussion, we assume that for the \tworec hypothesis that contains two rectangles, the edges of the two rectangles do not overlap.} on a grid and $\Instances$ represents the grid cells (cf. \figref{fig:intro.smoothness} and \figref{fig:exp:recexample}). Consider an example $\example = (\instance_\example,\clabel_\example) \in \Examples$: $\clabel_\example=1$  (positive) if the grid cell $\instance_\example$ lies inside the target hypothesis, and $0$ (negative) elsewhere.
\end{exampledef}
The \tworec hypothesis class consists of two subclasses, namely $\Hypotheses^1$: all hypotheses with one rectangle, and $\Hypotheses^2$: those with exactly two (disjoint) rectangles. The \tworec class is inspired by teaching a union of disjoint objects. Here, objects correspond to rectangles and any $\hypothesis \in \Hypotheses$ represents one or two rectangles. Furthermore,  each hypothesis $\hypothesis$ is associated with a complexity measure given by the number of objects in the hypothesis.  \cite{pmlr-v76-gao17a} recently studied the problem of teaching a union of disjoint geometric objects, and \cite{balbach2008measuring} studied the problem of teaching a union of monomials. Their results show that, in general, teaching a target hypothesis of lower complexity from higher complexity hypotheses is the most challenging task.

For the \tworec class, we assume the following local preferences: (i) in general, the learner prefers to transition to a hypothesis with the same complexity as the current one (i.e., $\Hypotheses^1 \rightarrow \Hypotheses^1$ or $\Hypotheses^2 \rightarrow \Hypotheses^2$), (ii) when transitioning within the same subclass, the learner prefers small edits, e.g., by moving the smallest number of edges possible when changing their hypothesis, and (iii) the learner could switch to a subclass of lower complexity (i.e., $\Hypotheses^2 \rightarrow \Hypotheses^1$) in specific cases. 
We provide a detailed description of the preference function in \extversion.

\begin{exampledef}[\lattice]\label{example:lattice}
  $\Hypotheses$ and $\Instances$ both correspond to nodes in a $2$-dimensional integer lattice of length $n$. For a node $v$ in the grid, we have an associated $\hypothesis_v \in \Hypotheses$ and $\instance_v \in \Instances$. 
Consider an example $\example_v = (\instance_{\example_v},\clabel_{\example_v}) \in \Examples$: $\clabel_{\example_v}=0$ (negative) if the target hypothesis corresponds to the same node $v$, and $1$ (positive) elsewhere.  We consider the problem of teaching with positive-only examples.
  %
\end{exampledef}
\lattice class is inspired by teaching in a physical world from positive-only (or negative-only) reinforcements, for instance, teaching a robot to navigate to a target state by signaling that the current location is not the target. The problem of learning and teaching with positive-only examples is an important question with applications to learning languages and reinforcement learning tasks \cite{gold1967language,lange1996incremental}.
For the \lattice class, we assume that the learner prefers to move to a close-by hypothesis measured via $L1$ (Manhattan) distance, and when hypotheses have equal distances we assume that the learner prefers hypotheses with larger coordinates.


\begin{theorem}\label{thm:adaptivity}
  For teaching the \tworec class, the ratio between the cost of the optimal non-adaptive teacher and the optimal adaptive teacher is $\bigOmega{|\hinit|/\log|\hinit|}$, where $|\hinit|$ denotes the number of positive examples induced by the learner's initial hypothesis $\hinit$; for teaching the \lattice class, the difference between the cost of the optimal non-adaptive teacher and the optimal adaptive teacher is $\bigOmega{n}$.
\end{theorem}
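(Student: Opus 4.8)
The plan is to prove the two bounds separately, and in each case to (i) design an explicit adaptive teaching algorithm and upper bound its worst-case cost, and (ii) exhibit a concrete hard instance together with an adversarial tie-breaking strategy that forces any non-adaptive teacher to spend many more examples. The guiding intuition, consistent with \propref{prop:ada-state-ind} and with the deterministic-learner discussion in the related work, is that adaptivity can only pay off when the learner's preference genuinely leaves ties: both the \tworec and \lattice preferences are engineered so that equidistant (resp. equal-edit) hypotheses occur, and it is the non-adaptive teacher --- who cannot observe which tie is broken via \eqnref{eq.learners-jump} --- that suffers.

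For the \lattice class I would fix $h^*$ and $\hinit$ at $L1$-distance $D = \Theta(n)$ in the $n \times n$ grid, chosen so that there are $\Theta(n)$ distinct shortest monotone paths between them. Since each positive example deletes exactly one node from the version space and the learner's current node is always its own nearest point (distance $0$), the learner moves only when its current node is deleted, and then jumps to a nearest surviving neighbor. For the adaptive upper bound I build a ``shepherding'' routine: while the learner sits at $\hypothesis_t$, I first spend at most three examples deleting the grid-neighbors of $\hypothesis_t$ that point away from $h^*$ (the learner stays put, being at distance $0$), and then delete $\hypothesis_t$ itself, so that the unique surviving distance-$1$ neighbor lies on a shortest path and the learner advances one step toward $h^*$. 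Iterating along a shortest path gives adaptive cost $O(D) = O(n)$. For the non-adaptive lower bound the adversary exploits that a committed elimination sequence cannot depend on the realized tie-breaks: at each point it routes the learner onto a surviving shortest-path node that the sequence deletes only later, so that to guarantee arrival at $h^*$ on \emph{every} branch the teacher must delete $\Omega(n)$ nodes beyond those on a single shortest path. Subtracting the two bounds yields the claimed additive gap $\Omega(n)$.

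For the \tworec class the target ratio $\Omega(|\hinit|/\log|\hinit|)$ signals a binary-search-versus-linear-scan separation, which I would realize as follows. Take $\hinit$ to be a single rectangle in $\Hypotheses^1$ of area $\Theta(|\hinit|)$ (e.g.\ a $1 \times |\hinit|$ strip) and let $h^*$ be a single target cell, so that teaching amounts to retracting each edge to a prescribed location. The local ``small-edit'' preference means a negative example placed inside the current rectangle forces the learner to move one edge past that example. For the adaptive teacher I place these tests in binary-search fashion: each example, together with the observed edge that the learner actually retracts, halves the remaining uncertainty about the correct edge position, giving adaptive cost $O(\log|\hinit|)$. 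The non-adaptive bound is again an adversary argument over ties: I design the instance so that for each negative example the learner has a choice of which edge to move (or how far, among equal-edit options), and the adversary resolves these choices so that no prefix of a committed sequence pins down $h^*$ until the teacher has effectively ``swept'' the strip cell by cell, i.e.\ issued $\Omega(|\hinit|)$ examples. Dividing the two bounds gives the ratio $\Omega(|\hinit|/\log|\hinit|)$.

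The main obstacle in both parts is the non-adaptive lower bound, namely constructing an adversarial tie-breaking rule that is simultaneously (a) a legal realization of the preference function $\ordering$ at every step and (b) consistent with every committed sequence the teacher might have chosen, so that the teacher truly cannot distinguish the branches. Making this adversary well-defined --- ensuring the induced learner trajectory never accidentally hits $h^*$ early, and that the counting argument for the number of still-required examples is tight --- is the technical heart of the proof. By contrast, the adaptive upper bounds reduce to analyzing the explicit shepherding and binary-search routines described above, which I expect to be routine.
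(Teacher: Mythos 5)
Your high-level recipe --- an explicit adaptive routine for the upper bound plus an adversarial tie-breaking argument against any committed sequence for the lower bound --- is exactly the paper's, and the binary-search-versus-linear-scan intuition for \tworec is the right one. However, both of your concrete constructions have genuine gaps. For \tworec, your hard instance is not hard for non-adaptive teachers: with $\hinit \in \Hypotheses^1$ a strip and $\hstar$ a \emph{known} single cell $c$ inside it, the non-adaptive sequence ``positive at $c$, then negatives at the cells adjacent to $c$'' teaches in $O(1)$ steps regardless of tie-breaking. Once the positive at $c$ is given, every consistent hypothesis must contain $c$, so a negative immediately to the right of $c$ leaves exactly one consistent one-edge move (right edge sitting exactly at $c$); the ties about ``how far to retract'' that your adversary relies on disappear, and each edge is pinned by a single example. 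The paper's instance (\lemref{lm:tworec}) is crucially different: $\hinit \in \Hypotheses^2$ consists of the target rectangle $r_1 = \hstar$ plus a disjoint spurious rectangle $r_2$, and the task is to make the learner delete $r_2$. There the adaptive teacher binary-searches inside $r_2$ (costing $\Theta(\log |r_2|)$), while non-adaptively every cell of $r_2$ must be labeled: if even one cell is left unlabeled, the adversarial learner keeps shrinking $r_2$ onto a consistent sub-rectangle containing that cell and, preferring to remain in $\Hypotheses^2$, never transitions to $\hstar$. What your instance lacks is precisely a spurious structure whose surviving positions are \emph{not} anchored to the known target, so that proximity-to-target examples cannot pin it down.

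For \lattice, the claim is an \emph{additive} $\Omega(n)$ gap, so the constants in your two bounds must actually separate, and yours do not. Your shepherding routine pays up to three blocking examples plus one deletion per unit of progress, i.e., adaptive cost roughly $3D$ to $4D$ where $D$ is the initial distance; but a non-adaptive teacher always succeeds with about $2D$ examples ($D$ deletions along one monotone staircase plus $D$ blocking examples that make the staircase deterministic --- this is both the upper and the lower bound for the non-adaptive side in \lemref{lm:lattice}). With those numbers the difference (non-adaptive minus adaptive) is negative, so nothing is proved. The missing idea is to let the tie-breaking rule do the work for the adaptive teacher: since the learner prefers hypotheses with larger coordinates, as long as both coordinates of the learner's node are strictly below those of $\hstar$, deleting only its current node already moves it toward the target with no blocking at all; blocking is needed only once the learner reaches the boundary lines through $\hstar$. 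This yields adaptive cost $2(b-a)+(b-a)=3(b-a)$ against the non-adaptive $4(b-a)$, whence the $\Omega(n)$ difference.
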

In the above theorem, we show that for both problems, under natural behavior of an incremental learner, adaptivity plays a key role.
The proof of \thmref{thm:adaptivity} is provided in \extversion. Specifically, we show the teaching sequences for an adaptive teacher which matches the above bounds for the \tworec and \lattice classes. We also provide lower bounds for any non-adaptive algorithms for these two classes.
Here, we highlight two necessary conditions under which adaptivity can possibly help: (i) preferences are local and (ii) there are ties among the learner's preference over hypotheses. The learner's current hypothesis, combined with the local preference structure, gives the teacher a handle to steer the learner in a controlled way. 

\section{Adaptive Teaching Algorithms} \label{sec:algorithm}
In this section, we first characterize the optimal teaching algorithm, and then propose a non-myopic adaptive teaching framework.
\subsection{The Optimality Condition} \label{sec:alg:optimal}
Assume that the learner's current hypothesis is $\hypothesis$, and the current version space is $\hypotheses \subseteq \Hypotheses$. Let $\futurecostopt(\hypothesis, \hypotheses, \hstar)$ denote the minimal number of examples required in the worst-case to teach $\hstar$.
We identify the following optimality condition for an adaptive teacher:
\begin{proposition}
  A teacher achieves the minimal teaching cost, if and only if for all states $(\hypothesis, \hypotheses)$ of the learner, it picks an example such that
  \begin{align*}
    \example^* \in \argmin_{\example}\left( 1 + \max_{\hypothesis'\in \mathbf{C}(\hypothesis,\hypotheses,\sigma, \example)}\futurecostopt\left(\hypothesis', \hypotheses \cap \Hypotheses(\{\example\}), \hstar \right) \right)
  \end{align*}
  where $\mathbf{C}(\hypothesis,\hypotheses,\sigma, \example)$ denotes the set of candidate hypotheses in the next round as defined in \eqref{eq.learners-jump}, and for all $(\hypothesis, \hypotheses)$, it holds that
  \begin{align*}
    \futurecostopt(\hypothesis, \hypotheses, \hstar) = \min_{\example}\left( 1 + \max_{\hypothesis'\in \mathbf{C}(\hypothesis,\hypotheses,\sigma, \example)}\futurecostopt\left(\hypothesis', \hypotheses \cap \Hypotheses(\{\example\}), \hstar\right) \right)
  \end{align*}
\end{proposition}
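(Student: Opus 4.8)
The plan is to treat the teaching interaction as a finite, two-player, zero-sum, turn-based game on the state space of pairs $(\hypothesis, \hypotheses)$, in which the teacher moves by choosing an example $\example$ (the minimizer) and the learner responds adversarially by choosing the next hypothesis from the tie-set $\mathbf{C}(\hypothesis,\hypotheses,\sigma,\example)$ (the maximizer, since we measure worst-case cost). A state is \emph{terminal} when $\hypothesis = \hstar$, with $\futurecostopt(\hstar, \hypotheses, \hstar) = 0$. First I would record two structural facts that make the value recursion well-founded: (i) since every example is labeled by $\hstar$, the target always lies in the version space, and supplying a teaching set for $\hstar$ shrinks the version space to $\{\hstar\}$, at which point the learner's tie-set is forced to $\{\hstar\}$; hence $\futurecostopt(\hypothesis, \hypotheses, \hstar) \le \TD(\hstar, \Hypotheses) < \infty$ from every state, so the value function is a well-defined non-negative integer. (ii) Consequently I may run induction on the integer value $\futurecostopt$ rather than on elapsed time.

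Next I would prove the value recursion (the Bellman equation) by establishing two inequalities, i.e., the principle of optimality. Write $R(\hypothesis,\hypotheses) := \min_{\example}\bigl(1 + \max_{\hypothesis'\in \mathbf{C}(\hypothesis,\hypotheses,\sigma,\example)} \futurecostopt(\hypothesis', \hypotheses\cap\Hypotheses(\{\example\}), \hstar)\bigr)$ for the right-hand side. For $\futurecostopt \le R$, I construct a teacher that first plays the minimizing example $\example^*$ and then, from whichever $\hypothesis'$ the learner lands in, follows an optimal continuation for the subproblem $(\hypothesis', \hypotheses\cap\Hypotheses(\{\example^*\}))$; its worst-case cost is at most $1 + \max_{\hypothesis'} \futurecostopt(\hypothesis', \cdot)= R$. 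For $\futurecostopt \ge R$, I take an arbitrary teacher; its first example is some fixed $\example$, the adversarial learner may pick the $\hypothesis'$ maximizing the remaining cost, and that teacher's continuation cost from $(\hypothesis', \cdot)$ is at least $\futurecostopt(\hypothesis', \cdot)$ by definition of the optimum, so its worst-case cost is at least $1 + \max_{\hypothesis'} \futurecostopt(\hypothesis', \cdot) \ge R$; minimizing over teachers yields $\futurecostopt \ge R$. Combining the two gives $\futurecostopt = R$, the claimed recursion.

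Finally I would derive the greedy characterization from the Bellman equation. For the ``if'' direction, a teacher that selects some $\example^* \in \argmin$ at every state achieves, by induction on $\futurecostopt$, worst-case cost exactly $\futurecostopt(\hypothesis, \hypotheses, \hstar)$ from every state, hence is optimal. For the ``only if'' direction, if at some state $(\hypothesis, \hypotheses)$ the teacher picks $\example \notin \argmin$, then $1 + \max_{\hypothesis'} \futurecostopt(\hypothesis', \cdot) \ge \futurecostopt(\hypothesis, \hypotheses, \hstar) + 1$ (values are integers), so the adversarial learner forces a worst-case cost-to-go strictly exceeding $\futurecostopt$ from that state, making the teacher suboptimal there. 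I expect the main obstacle to be the interaction between the adversarial ($\max$ over the tie-set $\mathbf{C}$) semantics and the induction: I must ensure the recursion is genuinely well-founded, ruling out that ``useless'' examples which fail to shrink the version space create cycles, which is handled by the finiteness bound via $\TD(\hstar,\Hypotheses)$ together with inducting on the value rather than on time. A secondary subtlety is the precise quantifier in ``for all states'': the clean biconditional holds for teachers required to be optimal from \emph{every} state; when optimality is demanded only from a fixed initial $(\hinit,\Hypotheses)$, the ``only if'' direction applies to states reachable along a worst-case trajectory, since exactly those are the ones an adversary can steer the learner into to expose a suboptimal choice.
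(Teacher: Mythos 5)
Your proof is correct. For comparison: the paper states this proposition \emph{without} proof --- its appendix proves only \thmref{thm:adaptivity} and \thmref{thm:greedy_suff} --- implicitly treating it as the standard Bellman optimality condition for a finite, worst-case (minimax) sequential decision problem, and your write-up supplies exactly that standard argument, so there is no gap between your route and the paper's intent. The two subtleties you flag are the genuine ones, and you resolve both correctly: first, finiteness of the value function follows from $\futurecostopt(\hypothesis,\hypotheses,\hstar)\le\TD(\hstar,\Hypotheses)$, since presenting a teaching set for $\hstar$ collapses the version space to $\{\hstar\}$ and forces the learner's tie-set in \eqref{eq.learners-jump} to $\{\hstar\}$; this is what makes induction on the value (rather than on elapsed time) well-founded even though ``useless'' examples can leave the state $(\hypothesis,\hypotheses)$ unchanged and thus create self-loops in the recursion. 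Second, the quantifier on ``for all states'' is indeed needed for the ``only if'' direction: a teacher that is optimal from a fixed initial $(\hinit,\Hypotheses)$ could violate the argmin condition at states that no adversarial trajectory ever reaches, so your restriction to states reachable along worst-case play (equivalently, reading the proposition as characterizing teachers that are optimal from \emph{every} state) is the right repair, and is presumably the reading the authors intend.
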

In general, computing the optimal cost $\futurecostopt$ for non-trivial preference functions, including the uniform/global preference, requires solving a linear equation system of size $|\Hypotheses|\cdot 2^{|\Hypotheses|}$.
\paragraph{State-independent preference} 
When the learner's preference is uniform, $\futurecostopt_u(\hypothesis, \hypotheses, \hstar)=\TD(\hstar, \hypotheses)$ (Eq.~\ref{eq:td}) denotes the set cover number of the version space, which is NP-Hard to compute. A myopic heuristic which gives best approximation guarantees for a polynomial time algorithm (with cost that is within a logarithmic factor of the optimal cost \cite{goldman1995complexity}) is given by $\futurecostapprox_u(\hypothesis, \hypotheses, \hstar)=|\hypotheses|$.
For the global preference, the optimal cost 
$\futurecostopt_g(\hypothesis, \hypotheses, \hstar)$ 
is given by Eq.~\eqref{eq:pbtd}.
i.e., the set cover number of all hypotheses in the version space that are more or equally preferred over $\hstar$. Similarly, one can also follow the greedy heuristic, i.e., $\futurecostapprox_g(\hypothesis, \hypotheses, \hstar)= |\{\hypothesis' \in \hypotheses: \orderingof{\hypothesis'}{\cdot} \leq \orderingof{\hstar}{\cdot}\}|$ to achieve a logarithmic factor approximation.
\paragraph{General preference} Inspired by the two myopic heuristics above, we propose the following heuristic for  general preference models:
\begin{align}
  \label{eq:objrank}
  \futurecostapprox(\hypothesis, \hypotheses, \hstar)= |\{\hypothesis' \in \hypotheses: \orderingof{\hypothesis'}{\hypothesis} \leq \orderingof{\hstar}{\hypothesis}\}|
\end{align}

In words, $\futurecostapprox$ denotes the index of the target hypothesis $\hstar$ in the preference vector associated with $\hypothesis$ in the version space $\hypotheses$.
Notice that for the uniform (resp. global) preference model, the function $\futurecostapprox$ reduces to $\futurecostapprox_u$ (resp. $\futurecostapprox_g$).
In the following theorem, we provide a sufficient condition for the myopic adaptive algorithm that greedily minimizes Eq.~\eqref{eq:objrank}
to attain provable guarantees:
\begin{theorem}\label{thm:greedy_suff}
  Let $\hinit\in \Hypotheses$ be the learner's initial hypothesis, and $\hstar \in \Hypotheses$ be the target hypothesis. For any $\hypotheses \subseteq \Hypotheses$, let $\bar{\hypotheses}(\{\example\}) = \{\hypothesis' \in \hypotheses: \hypothesis'(\instance_\example) \neq \lbl_\example\}$ be the set of hypotheses in $\hypotheses$ which are inconsistent with the teaching example $\example \in \Examples$. 
  If for all learner's states $(h, \hypotheses)$, the preference and the structure of the teaching examples satisfy:
  \begin{enumerate}\denselist
  \item $\forall \hypothesis_i, \hypothesis_j \in \hypotheses$, $\orderingof{\hypothesis_i}{\hypothesis} \leq \orderingof{\hypothesis_j}{\hypothesis} \leq \orderingof{\hstar}{\hypothesis} \implies \orderingof{\hypothesis_j}{\hypothesis_i} \leq \orderingof{\hstar}{\hypothesis_i}$\label{thm:suffcond:1}
  \item $\forall \hypotheses' \subseteq \bar{\hypotheses}(\{\example\})$, there exists $\example'\in \Examples$, s.t., $\bar{\hypotheses}(\{\example'\}) = \hypotheses'$, \label{thm:suffcond:2}
  \end{enumerate}
  then, the cost of the myopic algorithm that greedily minimizes\footnote{In the case of ties, we assume that the teacher prefers examples that make learner stay at the same hypothesis.} \eqref{eq:objrank} is within a factor of $2(\log{\futurecostapprox(\hypothesis_0, \Hypotheses, \hstar)}+1)$ approximation of the cost of the optimal adaptive algorithm.
\end{theorem}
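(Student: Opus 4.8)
The plan is to recognize the greedy rule as an \emph{adaptive greedy set-cover} procedure run on the set of ``preferred'' hypotheses, and to transfer the classical $(\ln N + 1)$ set-cover guarantee through the learner's dynamics. Throughout I track the surrogate potential $\futurecostapprox(\hypothesis,\hypotheses,\hstar)=|P(\hypothesis,\hypotheses)|$, where $P(\hypothesis,\hypotheses)=\{\hypothesis'\in\hypotheses:\orderingof{\hypothesis'}{\hypothesis}\le\orderingof{\hstar}{\hypothesis}\}$ is the set of surviving hypotheses at least as preferred as $\hstar$ from the current hypothesis $\hypothesis$. First I would observe that teaching is guaranteed complete exactly when this potential reaches $1$: if $|P(\hypothesis,\hypotheses)|=1$ then $\hstar$ is the unique most-preferred consistent hypothesis, so the worst-case update rule \eqref{eq.learners-jump} forces $\hypothesis\to\hstar$. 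The whole argument thus reduces to bounding the number of examples the greedy rule needs in order to drive $\futurecostapprox$ from $\futurecostapprox(\hinit,\Hypotheses,\hstar)$ down to $1$.

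Next I would use Condition~\ref{thm:suffcond:1} to certify that $\futurecostapprox$ is a \emph{sound} progress measure, i.e.\ that genuine progress is attributable to \emph{eliminating} preferred hypotheses and never to the learner's jump alone. When the teacher plays $\example$ and the learner moves to a candidate $\hypothesis_+$ in the candidate set $\mathbf{C}$ of \eqref{eq.learners-jump} (the most preferred survivor, so $\orderingof{\hypothesis_+}{\hypothesis}\le\orderingof{\hstar}{\hypothesis}$), Condition~\ref{thm:suffcond:1} applied with $\hypothesis_i=\hypothesis_+$ shows that every surviving old-preferred hypothesis ranked after $\hypothesis_+$ remains at least as preferred as $\hstar$ from the new center $\hypothesis_+$; hence recentering cannot spuriously shrink the potential below the number of surviving preferred hypotheses. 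The payoff is twofold: (i) the \emph{optimal} adaptive teacher must itself eliminate every hypothesis of $P(\hinit,\Hypotheses)\setminus\{\hstar\}$ along its worst-case branch, since otherwise the adversarial learner could stall on a more-preferred survivor, giving $\futurecostopt(\hinit,\Hypotheses,\hstar)\ge\setcovernum$, the minimum number of examples covering $P(\hinit,\Hypotheses)\setminus\{\hstar\}$; and (ii) the greedy rule's one-step improvement in $\futurecostapprox$ equals, up to recentering, the number of preferred hypotheses its example eliminates.

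I would then invoke Condition~\ref{thm:suffcond:2} to give the greedy teacher enough control to realize a set-cover averaging step. Since the eliminable family $\{\bar{\hypotheses}(\{\example\})\}$ is downward closed, for any eliminable collection of preferred hypotheses there is an actual example eliminating exactly it; combined with $\futurecostopt\ge\setcovernum$ and an averaging argument over the optimal solution's covering sets, this guarantees that from every state there is an example whose worst-case elimination removes at least a $1/(2\,\futurecostopt)$ fraction of the current preferred set, and the greedy rule minimizing $\futurecostapprox$ over $\example$ (against the worst tie-break in $\mathbf{C}$) does at least as well. Telescoping this geometric decrease $\futurecostapprox\mapsto(1-\tfrac{1}{2\,\futurecostopt})\,\futurecostapprox$, the potential reaches $1$ within $2\,\futurecostopt\,(\log\futurecostapprox(\hinit,\Hypotheses,\hstar)+1)$ steps, which is the claimed bound; the leading factor of $2$ is the price of the worst-case (adversarial tie-breaking) learner transition.

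The hard part will be making the recentering control precise. Because $P(\hypothesis,\hypotheses)$ is defined relative to the moving hypothesis $\hypothesis$, the learner's jump both re-sorts the survivors and can in principle pull hypotheses ranked below $\hstar$ into the preferred set of the new center; Condition~\ref{thm:suffcond:1} must be shown to bound this inflation by at most a factor of two over the ``no-jump'' potential $|P(\hypothesis,\hypotheses\cap\Hypotheses(\{\example\}))|$, which is exactly what converts the clean static $1/\futurecostopt$ reduction into the adaptive $1/(2\,\futurecostopt)$ one. Establishing this uniformly over all reachable states $(\hypothesis,\hypotheses)$ and all adversarial choices $\hypothesis_+\in\mathbf{C}$, rather than only at the initial state, is the crux; I would handle it by an induction on the potential value, re-establishing the covering lower bound $\futurecostopt\ge\setcovernum$ at each recentered state so that the per-step averaging argument can be reapplied.
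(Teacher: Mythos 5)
Your skeleton matches the paper's proof: the same potential $\futurecostapprox$, the same lower bound $\futurecostopt \ge$ (set cover number of the initial preferred set $\prefset(\ordering,\hstar,\hinit,\Hypotheses)\setminus\{\hstar\}$) obtained by telescoping Condition~\ref{thm:suffcond:1} along the optimal teacher's trajectory (the paper's \lemref{lm:opt_lowerbound}), and the same conclusion via the classical greedy set-cover analysis once a per-step factor-2 guarantee is available. The gap is in the step you yourself call the crux, and the plan you give for it would fail. You propose to show that Condition~\ref{thm:suffcond:1} bounds the \emph{inflation} of the preferred set under recentering by a factor of two. It does not: Condition~\ref{thm:suffcond:1} only guarantees that a surviving hypothesis $\hypothesis'$ that was preferred from $\hypothesis_t$ (and ranked after the new center) remains preferred from the new center $\hypothesis_{t+1}$; it says nothing about hypotheses with $\orderingof{\hypothesis''}{\hypothesis_t} > \orderingof{\hstar}{\hypothesis_t}$ being pulled into the preferred set of $\hypothesis_{t+1}$, and nothing in the hypotheses of the theorem bounds that inflow. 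So the ``inflation $\le 2$'' claim you need for the adaptive $1/(2\futurecostopt)$ reduction is not available, and the geometric-decrease telescoping in your third paragraph has no valid per-step guarantee behind it.

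The actual mechanism (the paper's \lemref{lm:greedy_gain}) is different and makes essential use of the tie-breaking rule in the theorem's footnote together with Condition~\ref{thm:suffcond:2}. Whenever the most-eliminating example $\example$ is inconsistent with the learner's current hypothesis $\hypothesis_t$, Condition~\ref{thm:suffcond:2} supplies a sibling $\example'$ with $\bar{\hypotheses}(\{\example'\}) = \bar{\hypotheses}(\{\example\})\setminus\{\hypothesis_t\}$; the greedy teacher, by its stated preference for examples that keep the learner in place, plays $\example'$ (or something at least as good), so the learner does not move and no recentering happens at all. Recentering occurs only in the remaining case, where Condition~\ref{thm:suffcond:1} (an inclusion) plus a size equality force the preferred set from the new center to \emph{equal} the preferred set from the old one. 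Hence along the greedy trajectory the preferred set never inflates and always equals $\prefset(\ordering,\hstar,\hinit,\Hypotheses)$ intersected with the current version space: the problem becomes a \emph{static} set cover over the initial preferred set, and your induction ``re-establishing the covering lower bound at each recentered state'' is unnecessary. The factor $2$ then has nothing to do with inflation or adversarial tie-breaking by the learner: playing the sibling $\example'$ instead of $\example$ sacrifices exactly one unit of gain (the hypothesis $\hypothesis_t$ survives), and since every greedy gain before termination is at least $1$, the greedy gain is at least half the maximum gain; this is precisely what makes greedy a 2-approximate greedy set-cover step, after which the $2(\log \futurecostapprox(\hypothesis_0,\Hypotheses,\hstar)+1)$ bound follows.
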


\looseness -1 
We defer the proof of the theorem to \extversion. Note that both the uniform preference model and the global preference model satisfy Condition~\ref{thm:suffcond:1}. Intuitively, the first condition states that there does not exist any hypothesis between $\hypothesis$ and $\hstar$ that provides a ``short-cut'' to the target. Condition~\ref{thm:suffcond:2} implies that we can always find teaching examples that ensure smooth updates of the version space.  
For instance, a feasible setting that fits Condition ~\ref{thm:suffcond:2} is where we assume that the teacher can synthesize an example to remove any subset of hypotheses of size at most $k$, where $k$ is some constant.



\begin{figure}[t]
  \begin{minipage}[t]{0.45\linewidth}
    \begin{algorithm}[H]
      \caption{Non-myopic adaptive teaching}\label{alg:non-myopic}
      \begin{algorithmic}
        \STATE {\bf input:}
        $\Hypotheses$, $\ordering$, initial $\hypothesis_0$, target $\hstar$.
        \STATE Initialize $t \leftarrow 0$,
        $\Hypotheses_0 \leftarrow \Hypotheses$
        \WHILE{$\hypothesis_t \neq \hypothesis^*$}
        \STATE $\hstar_t \leftarrow \oracle(\hypothesis_t, \Hypotheses_t, \hstar)$
        \STATE $\example_{t+1} \leftarrow \teachalg(\ordering, \hypothesis_t, \Hypotheses_t, \hstar_t)$
        \STATE Learner makes an update
        \STATE $t \leftarrow t+1$
        \ENDWHILE
      \end{algorithmic}
    \end{algorithm}
  \end{minipage}
  \hfill
  \begin{minipage}[t]{0.53\linewidth}
    \begin{figure}[H]
      \centering
      \begin{subfigure}[b]{0.25\textwidth}
        \includegraphics[width=1.0\linewidth]{./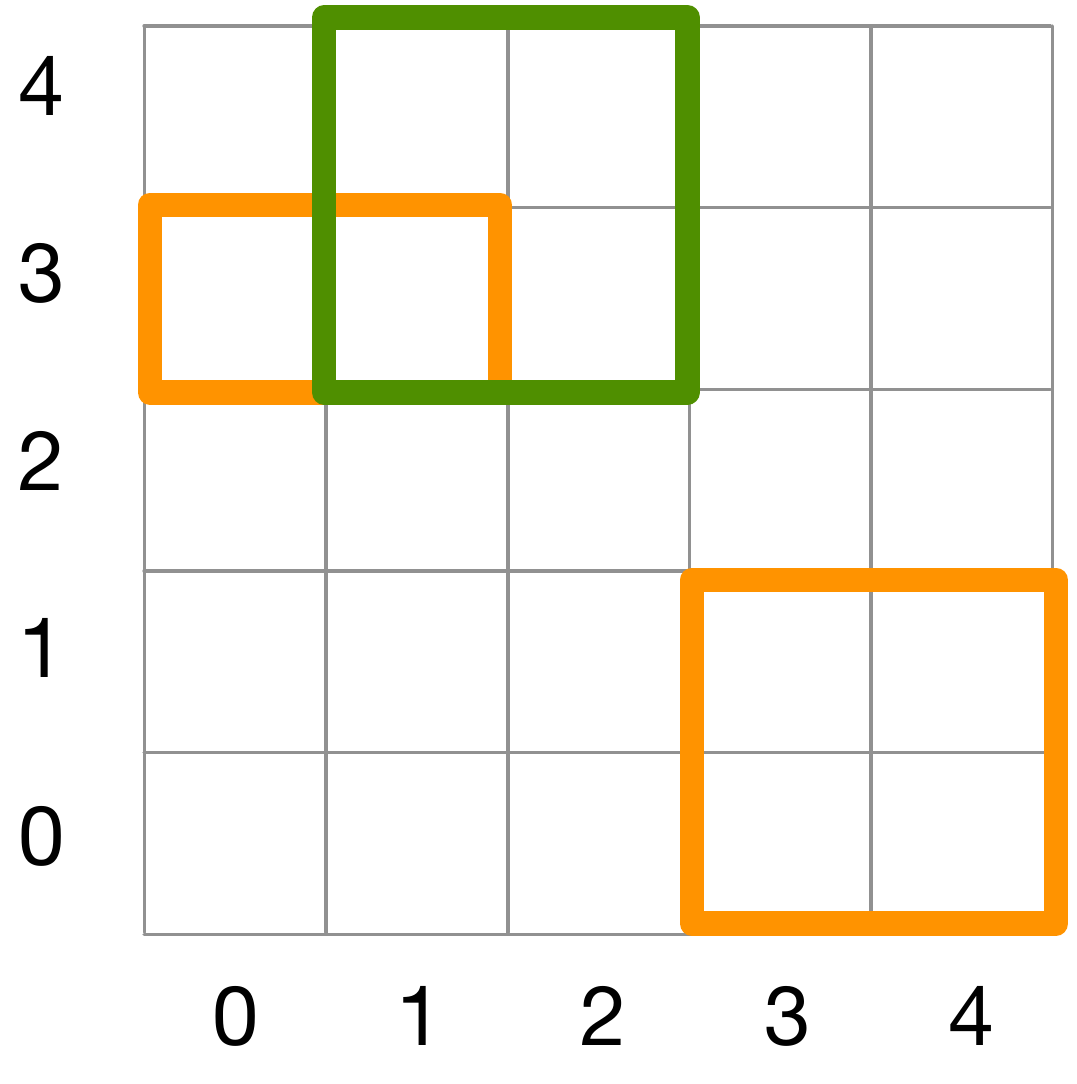}
      \end{subfigure}
      \quad
      \begin{subfigure}[b]{0.25\textwidth}
        \includegraphics[trim={8pt 12pt 5pt 5pt},width=1.0\textwidth]{./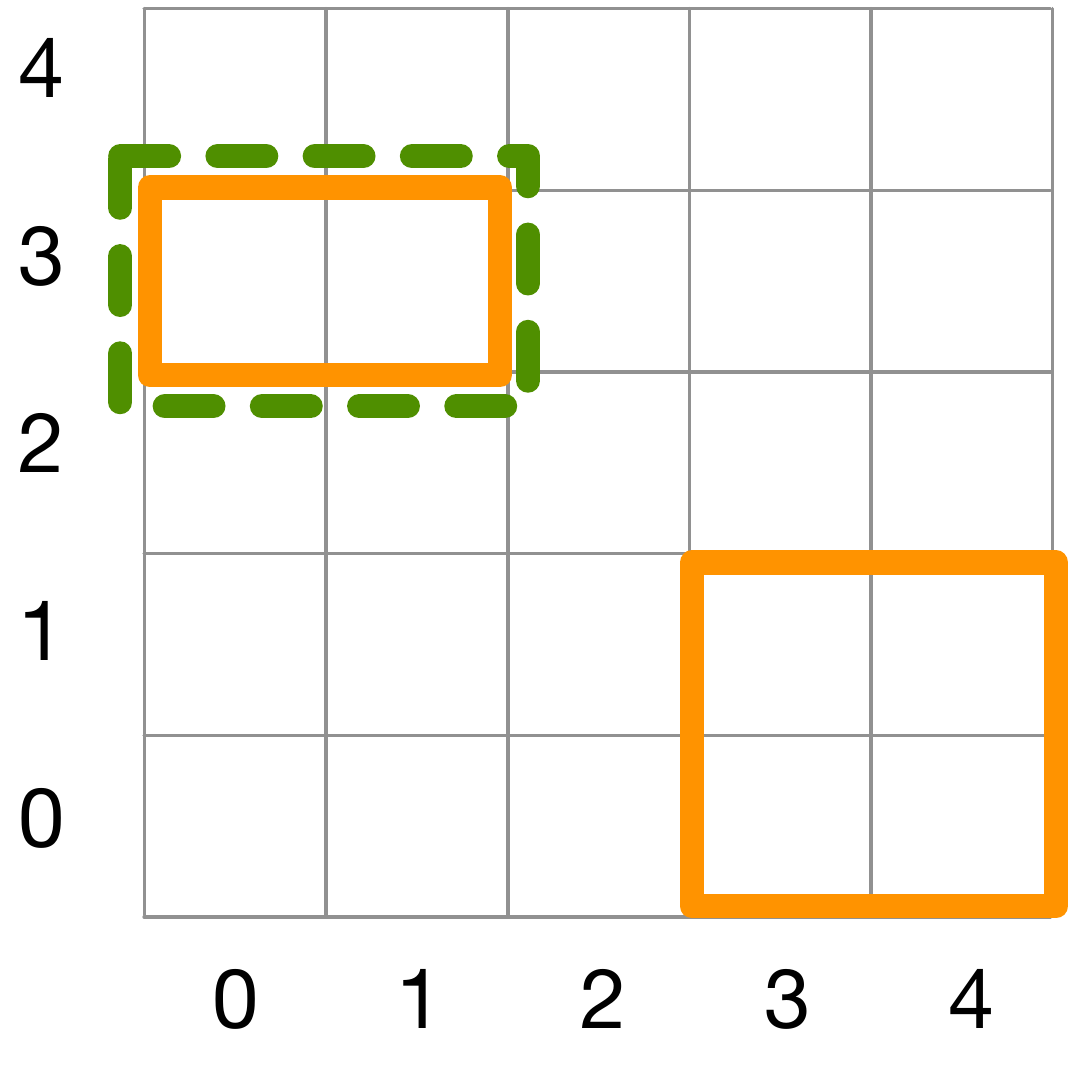}
      \end{subfigure}
      \quad
      \begin{subfigure}[b]{0.25\textwidth}
        \includegraphics[trim={8pt 12pt 5pt 5pt},width=1.0\textwidth]{./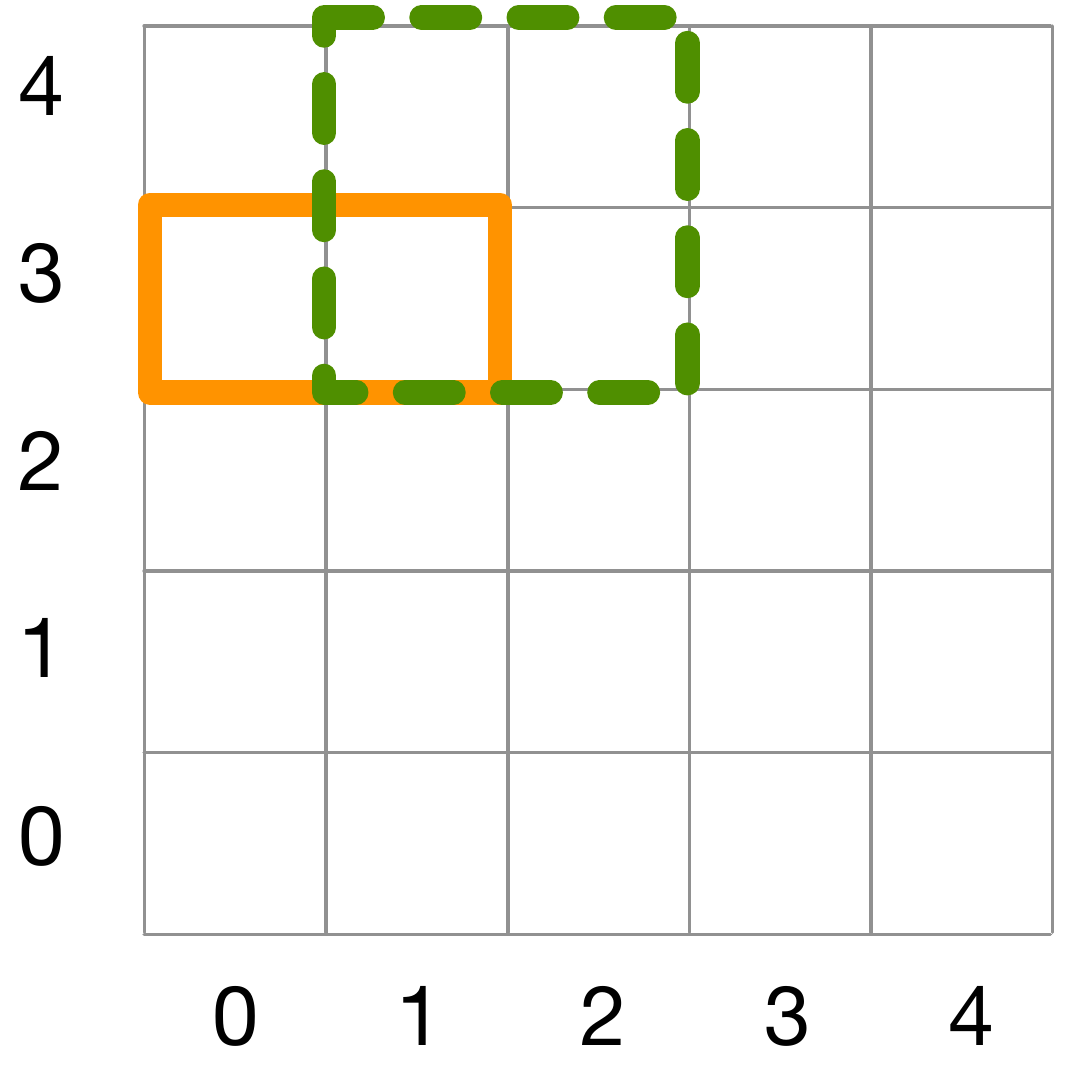}
      \end{subfigure}
      \caption{\small An illustrative example for \tworec. $\hypothesis_t$, $\hstar$, and $\hstar_t$ are represented by the orange rectangles, solid green rectangle and dashed green rectangles, respectively. (Left) The teaching task. (Middle) Sub-task 1. (Right) Sub-task 2.}
      \label{fig:tworec:subtasks}
    \end{figure}
  \end{minipage}
\end{figure}

\subsection{Non-Myopic Teaching Algorithms}\label{sec:application-specific}

When the conditions provided in \thmref{thm:greedy_suff} do not hold, the myopic heuristic \eqref{eq:objrank} could perform poorly. 
An important observation from \thmref{thm:greedy_suff} is that, when $\futurecostapprox(\hypothesis, \hypotheses, \hstar)$ is small, i.e., $\hstar$ is close to the learner's current hypothesis in terms of preference ordering, we need less stringent constraints on the preference function.
This motivates adaptively devising intermediate target hypotheses to ground the teaching task into multiple, separate sub-tasks.
Such divide-and-conquer approaches have proven useful for many practical problems, e.g., constructing a hierarchical decomposition for reinforcement learning tasks \cite{
  Hengst2010}.
In the context of machine teaching, we assume that there is an oracle, $\oracle(\hypothesis, \hypotheses, \hstar)$ that maps the learner's state $(\hypothesis, \hypotheses)$ and the target hypothesis $\hstar$ to an intermediate target hypothesis, which defines the current sub-task.

We outline the non-myopic adaptive teaching framework in \algref{alg:non-myopic}.
Here, the subroutine \teachalg aims to provide teaching examples that bring the learner closer to the intermediate target hypothesis. 
As an example, let us consider the \tworec hypothesis class. In particular, we consider the challenging case where the target hypothesis $\hstar \in \Hypotheses^1$ represents a single rectangle $r^\star$, and the learner's initial hypothesis $\hypothesis_0 \in \Hypotheses^2$ has two rectangles $(r_{1}, r_{2})$. Imagine that the first rectangle $r_{1}$ is overlapping with $r^\star$, and the second rectangle $r_{2}$ is disjoint from $r^\star$. To teach the hypothesis $\hstar$, the first sub-task (as provided by the oracle) is to eliminate the rectangle $r_{2}$ by providing negative examples so that the learner's hypothesis represents a single rectangle $r_{1}$. Then, the next sub-task (as provided by the oracle) is to teach $\hstar$ from $r_{1}$. We illustrate the sub-tasks in \figref{fig:tworec:subtasks}, and provide the full details of the adaptive teaching algorithm (i.e., \adar as used in our experiments) in \extversion.
\section{Experiments}\label{sec:exp}

In this section, we empirically evaluate our teaching algorithms on the \tworec hypothesis class via simulated learners.
\subsection{Experimental Setup}
For the \tworec hypothesis class (cf. \figref{fig:exp:recexample} and Example~\ref{example:rect}), we consider a grid with size varying from $5\times 5$ to $20 \times 20$. The ground set of unlabeled teaching examples $\Instances$ consists of all grid cells.  In our simulations, we consider all four possible teaching scenarios, $\Hypotheses^{1\rightarrow 1}$, $\Hypotheses^{1\rightarrow 2}$, $\Hypotheses^{2\rightarrow 1}$, $\Hypotheses^{2\rightarrow 2}$, where $i,j$ in $\Hypotheses^{i\rightarrow j}$ specify the subclasses of the learner's initial hypothesis $\hinit$ and the target hypothesis $\hstar$. In each simulated teaching session, we sample a random pair of hypotheses $(\hinit, \hstar)$ from the corresponding subclasses. 

\paragraph{Teaching algorithms}
We consider three different teaching algorithms as described below. The first algorithm, \setcover, is a greedy set cover algorithm, where the teacher greedily minimizes $\futurecostapprox_u=|\hypotheses|$ (see \secref{sec:alg:optimal}). In words, the teacher acts according to the uniform preference model, and greedily picks the teaching example that eliminates the most inconsistent hypotheses in the  version space. The second algorithm, denoted by \nonadar for the class \tworec, represents the non-adaptive teaching algorithm that matches the non-adaptive lower bounds provided in \thmref{thm:adaptivity}, with implementation details provided in \extversion. 
Note that both \setcover and \nonadar are non-adaptive. The third algorithm, \adar, represents the non-myopic adaptive teaching algorithm instantiated from \algref{alg:non-myopic}. The details of the subroutines \oracle and \teachalg  for \adar are provided in \extversion.
We note that all teaching algorithms have the same stopping criterion: the teacher stops when the learner reaches the target hypothesis, that is, $\hypothesis_{t} = \hstar$.

\begin{figure*}[!t]
  \centering
  \begin{subfigure}[b]{0.17\textwidth}
    \includegraphics[width=1.0\linewidth]{./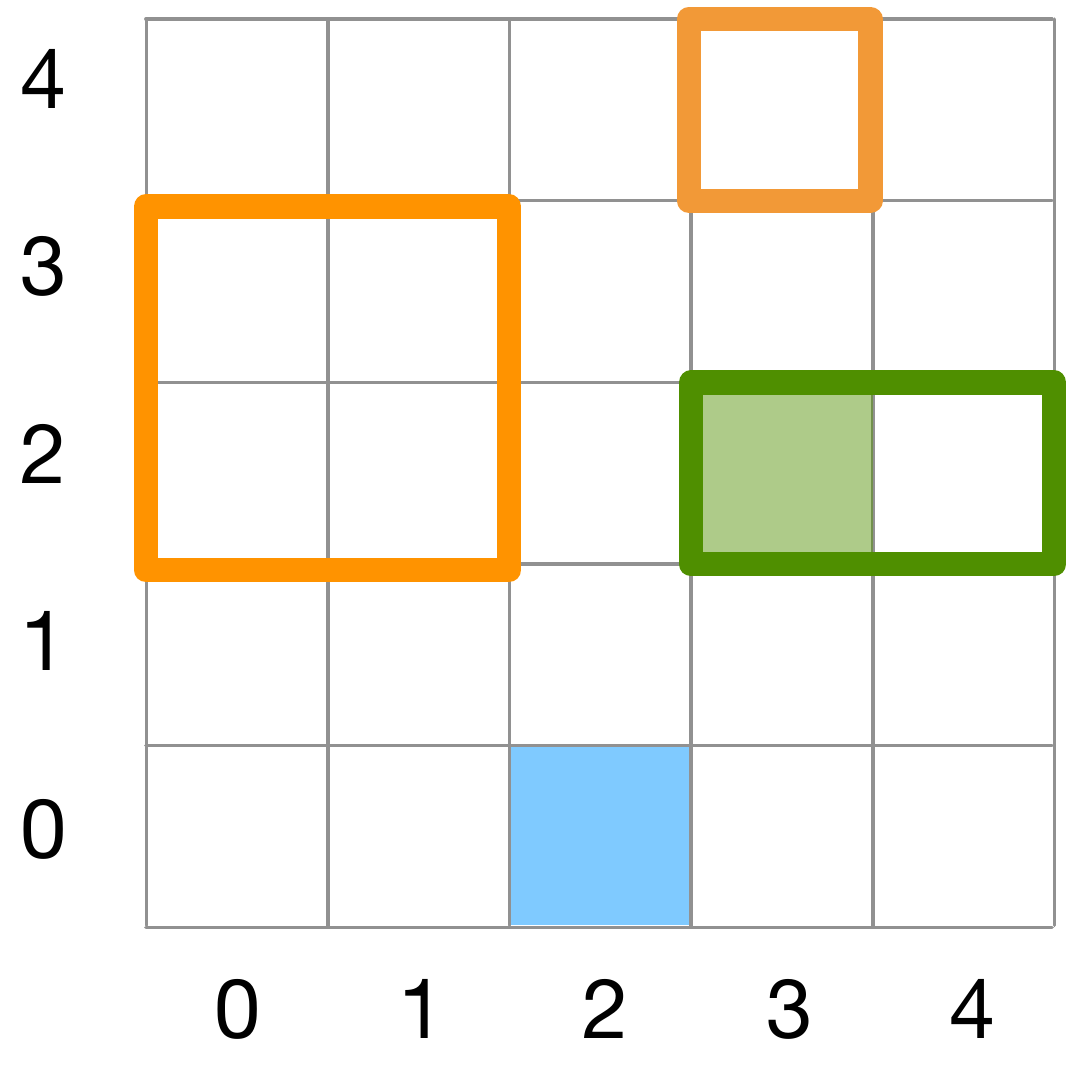}
    \caption{\tworec class}
    \label{fig:exp:recexample}
  \end{subfigure}
  \quad
  \begin{subfigure}[b]{0.25\textwidth}
    \includegraphics[trim={8pt 12pt 5pt 5pt},height=2.6cm,width=1.0\textwidth]{./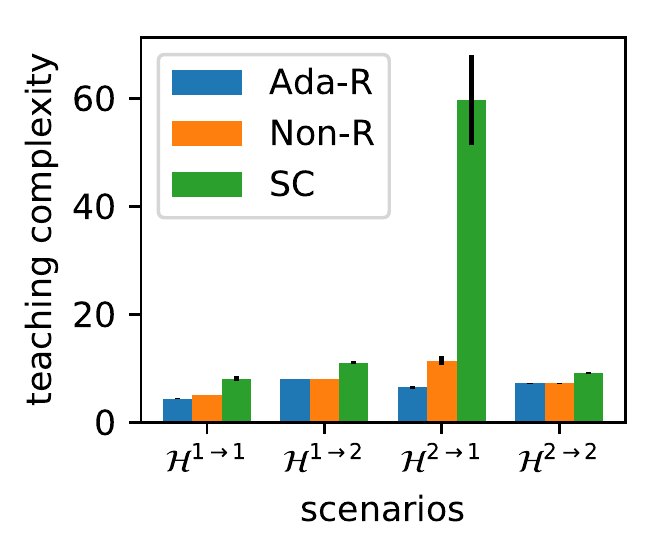}
    \caption{\tworec, size $15\times 15$}
    \label{fig:exp:cplx-vs-scenarios-2rec}
  \end{subfigure}
  ~
  \begin{subfigure}[b]{0.23\textwidth}
    \includegraphics[trim={8pt 12pt 5pt 5pt},height=2.6cm,width=1.0\textwidth]{./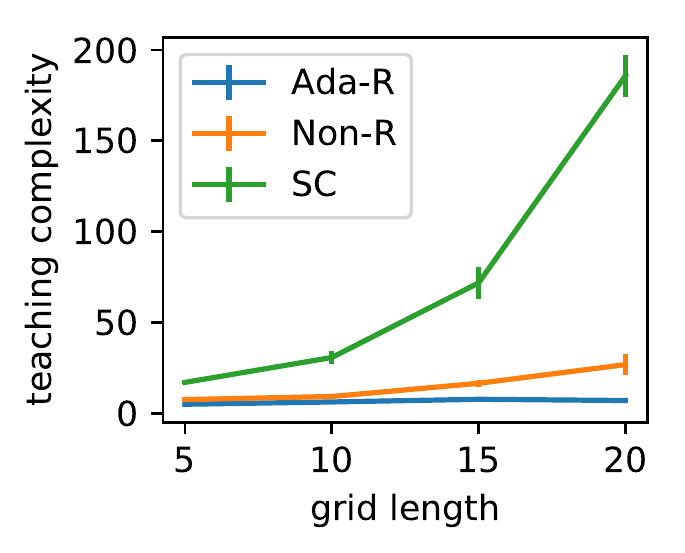}
    \caption{\tworec, $\Hypotheses^{2\rightarrow 1}$}
    \label{fig:exp:cplx-vs-gsz-2rec}
  \end{subfigure}
  ~
  \begin{subfigure}[b]{0.24\textwidth}
    \includegraphics[trim={8pt 5pt 5pt 5pt},height=2.6cm,width=1.0\textwidth]{./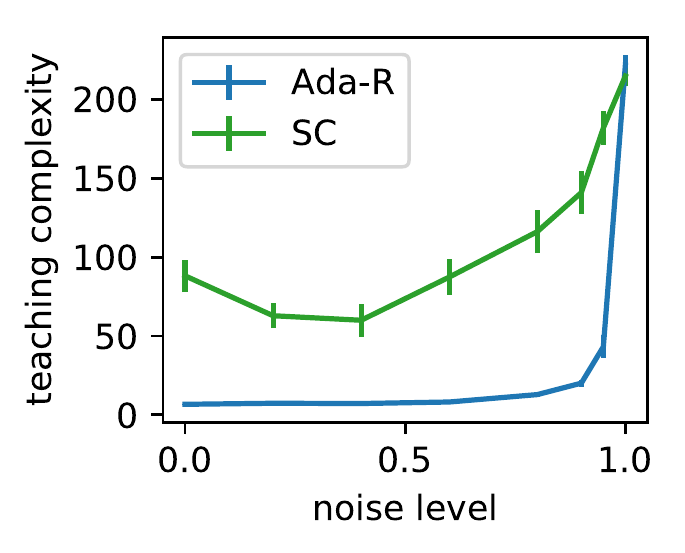}
    \caption{\tworec, robustness}
    \label{fig:exp:cplx-vs-p-2rec}
  \end{subfigure}\\
  \caption{ \looseness -1 Illustration and simulation results for \tworec. (a) illustrates the \tworec hypothesis class. The initial hypothesis $\hinit \in \Hypotheses^2$ is represented by the orange rectangles, and the target hypothesis $\hstar \in \Hypotheses^1$ is represented by the green rectangle. The green and blue cells represent a positive and a negative teaching example, respectively. Simulation results are shown in (b)-(d).}
  \label{fig:exp:performance}
\end{figure*}

\subsection{Results}\label{sec:simulation:results}
We measure the performance of the teaching algorithms by their teaching complexity, and all results are averaged over 50 trials with random samples of $(\hinit, \hstar)$.

\paragraph{Noise-free setting}
Here, we consider the ``noise-free'' setting, i.e., the learner acts according to the state-dependent preference models as described in \secref{sec:statepref}.
In \figref{fig:exp:cplx-vs-scenarios-2rec}, we show the results for \tworec class with a fixed grid size $15\times 15$ for all four teaching scenarios.
As we can see from \figref{fig:exp:cplx-vs-scenarios-2rec}, \adar has a consistent advantage over the non-adaptive baselines across all four scenarios. As expected, teaching $\Hypotheses^{1\rightarrow 1}, \Hypotheses^{1\rightarrow 2}$, and $\Hypotheses^{2\rightarrow 2}$ is easier, and the non-adaptive algorithms (\setcover and \nonadar) perform well. In contrast, when teaching $\Hypotheses^{2\rightarrow 1}$, we see a significant gain from \adar over the non-adaptive baselines. In the worst case, \setcover has to explore \emph{all} the negative examples to teach $\hstar$, whereas \nonadar needs to consider all negative examples within the learner's initial hypothesis $\hinit$ to make the learner jump from the subclass $\Hypotheses^2$ to $\Hypotheses^1$.
In \figref{fig:exp:cplx-vs-gsz-2rec}, we observe that the adaptivity gain increases drastically as we increase the grid size. This matches our analysis of the logarithmic adaptivity gain in \thmref{thm:adaptivity} for \tworec.


\paragraph{Robustness in a noisy setting}

In real-world teaching tasks, the learner's preference  may deviate from the preference $\ordering$ of an ``ideal'' learner that the teacher is modeling. In this experiment, we consider a more realistic scenario, where we simulate the noisy learners by randomly perturbing the preference of the ``ideal'' learner at each time step. With probability $1-\varepsilon$ the learner follows $\ordering$, and with probability $\varepsilon$, the learner switches to a random hypothesis in the version space.
In \figref{fig:exp:cplx-vs-p-2rec}, we show the results for the \tworec hypothesis class with different noise levels $\varepsilon \in [0,1]$.
We observe that even for highly noisy learners e.g., $\varepsilon = 0.9$, our algorithm \adar performs much better than \setcover. 
\footnote{ In general, the teaching sequence constructed by the non-adaptive algorithms \nonadar (resp. \nonadal) would not be sufficient to reach the target under the noisy setting. Hence, we did not include the results of these non-adaptive algorithms in the robustness plots. Note that one can tweak \nonadar (resp. \nonadal) by concatenating the teaching sequence with teaching examples generated by \setcover; however, in general, in a worst-case sense, any non-adaptive algorithm in the noisy setting will not perform better than SC.\label{footnote:reason:no:sc}}$^,$\footnote{The performance of \setcover is non-monotone w.r.t. the noise-level. This is attributed to the stopping criteria of the algorithm as the increase in the noise level increases the chance for the learner to randomly jump to $h^*$.}

\section{User Study}\label{sec:usrstudy}




\looseness -1 Here we describe experiments performed with human participants from Mechanical Turk using the \tworec hypothesis class. We created a web interface in order to (i) elicit the preference over hypotheses of human participants, and to (ii) evaluate our adaptive algorithm when teaching human learners.

\paragraph{Eliciting human preferences}
We consider a two-step process for the elicitation experiments. At the beginning of the session (first step), participants were shown a grid of green, blue, or white cells and asked to draw a hypothesis from the \tworec class represented by one or two rectangles. Participants could only draw ``valid'' hypothesis which is consistent with the observed labels (i.e., the hypothesis should contain all the green cells and exclude all the blue cells), cf. \figref{fig:exp:recexample}. The color of the revealed cells is defined by an underlying target hypothesis $\hstar$. In the second step, the interface updated the configuration of cells (either by adding or deleting green/blue cells) and participants were asked to redraw their rectangle(s) (or move the edges of the previously drawn rectangle(s)) which ensures that the updated hypothesis is consistent.


We consider 
$5$ types of sessions, depending on the class of $\hstar$ and configurations presented to a participant in the first and the second step. These 
configurations are listed in \figref{tab:usrstudy:trans_across_cls}. For instance, the session type in the third row $(\Hypotheses^2,(1/2),2)$ means the following: the labels were generated based on a hypothesis $\hstar \in \Hypotheses^2$; in the first step, both subclasses $\Hypotheses^1$ and $\Hypotheses^2$ had consistent hypotheses; and in the second step, only the subclass $\Hypotheses^2$ had consistent hypotheses.



\looseness -1 We tested $215$ participants, where each individual performed $10$ trials on a grid of size $12\times 12$. For each trial, we randomly selected one of the five types of sessions as discussed above.
In \figref{tab:usrstudy:trans_across_cls}, we see that participants tend to favor staying in the same hypothesis subclass when possible. Within the same subclass, they have a preference towards updates that are close to their initial hypothesis, cf. \figref{fig:elicitation_1}.\footnote{Given that a participant is allowed to move edges when updating the hypothesis, our interface could bias the participants' choice of the next hypothesis towards a preference structure that favors local edits as assumed by our algorithm. As future work, one could consider an alternative interface which enforces participants to draw the rectangle(s) from scratch at every step.}

\begin{figure}
  \centering
  \begin{subfigure}[b]{0.45\textwidth}
    \raisebox{1.5cm}{
      \resizebox{\linewidth}{!}{
        \renewcommand{\arraystretch}{1.25}
        \begin{tabular}{c c c|c c c c}
          \multicolumn{3}{c}{Session Type} & \multicolumn{4}{c}{User Transition}\\
          $\hstar$ & 1st & 2nd 
                         & $\Hypotheses^{1} \rightarrow \Hypotheses^{1}$ 
                         & $\Hypotheses^{1} \rightarrow \Hypotheses^{2}$ 
                         & $\Hypotheses^{2} \rightarrow \Hypotheses^{1}$ 
                         & $\Hypotheses^{2} \rightarrow \Hypotheses^{2}$\\\hline
          $\Hypotheses^1$ & ${(1/2)}$ & ${(1/2)}$ & 0.54 &   0.13 &  0.06 &  0.27\\
          $\Hypotheses^2$ & ${(1/2)}$ & ${(1/2)}$ & 0.30 &   0.31 &  0.01 &  0.38\\
          $\Hypotheses^2$ & ${(1/2)}$ & ${2}$    & 0.00 &   0.61 &  0.00 &  0.39  \\
          $\Hypotheses^2$ &  ${2}$ & ${2}$ & 0.00 &   0.00 &  0.00 &  1.00 \\
          $\Hypotheses^{2}$ &  ${2}$ & ${(1/2)}$ & 0.00&0.00   &  0.11 &  0.89 \\
        \end{tabular}
      }
    }
    \caption{Transitions across subclasses $\Hypotheses^{i} \rightarrow \Hypotheses^{j}$}
    \label{tab:usrstudy:trans_across_cls}
  \end{subfigure}\quad
  \begin{subfigure}[b]{0.255\textwidth}
    \includegraphics[width=\linewidth]{./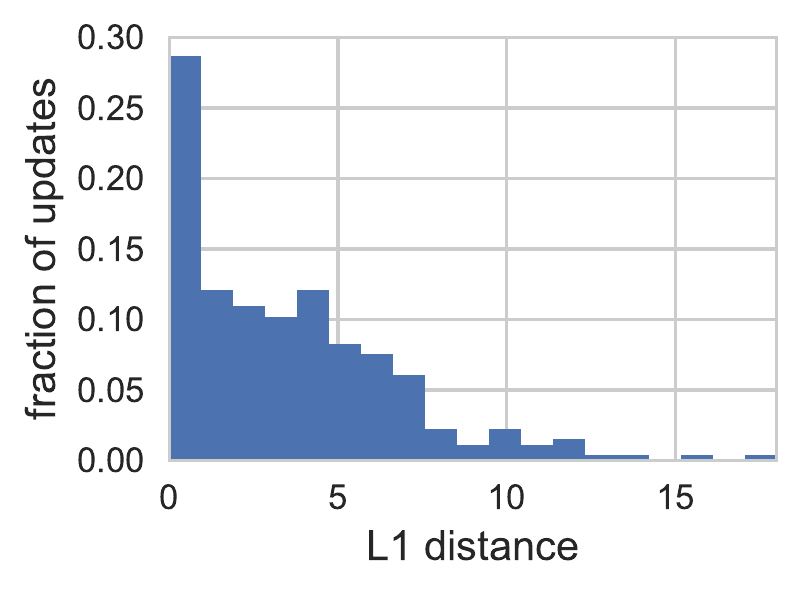}
    \caption{Transitions within $\Hypotheses^{1}$}
    \label{fig:elicitation_1}
  \end{subfigure}
  \hfill
  \begin{subfigure}[b]{0.255\textwidth}
    \includegraphics[width=\linewidth]{./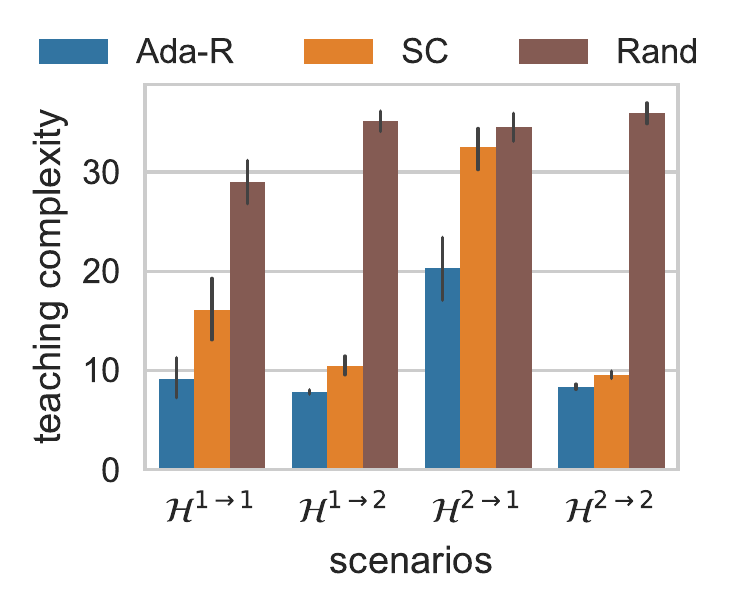}
    \caption{Teaching results}
    \label{fig:usrstudy:teaching}
  \end{subfigure}
  \caption{(a)-(b) represent results for eliciting human preferences for different session types as explained in the text below and (c) shows results for teaching human learners.
    (a) Participants prefer staying within the same hypothesis subclass when possible, displayed as the fraction of time they switched subclasses for different session types. 
    (b) Considering the transitions within subclass $\Hypotheses^1$, participants favor staying at their current hypothesis if it remains valid, along with preferring smaller updates, computed as the $L1$ distance between the initial and updated rectangle.
    (c) Adaptive teaching algorithm \adar is significantly better than \setcover and \textsf{Rand}.
  }
  \label{fig:usrstudy:trans_within_cls}
\end{figure}

\paragraph{Teaching human learners}
Next we evaluate our teaching algorithms on human learners. As in the simulations, we consider four teaching scenarios  $\Hypotheses^{1\rightarrow 1}$, $\Hypotheses^{1\rightarrow 2}$, $\Hypotheses^{2\rightarrow 1}$, and $\Hypotheses^{2\rightarrow 2}$. At the beginning of the teaching session, a participant was shown a blank $8\times 8$ grid with either one or two initial rectangles, corresponding to $\hypothesis^0$. At every iteration, the participants were provided with a new teaching example (i.e., a new green or blue cell is revealed), and were asked to update the current hypothesis. 


We evaluate three algorithms, namely \adar, \setcover, and \random, where \random denotes a teaching strategy that picks examples at random. The non-adaptive algorithm \nonadar was not included in the user study for the same reasons as explained in Footnote \ref{footnote:reason:no:sc}. 
We enlisted $200$ participants to evaluate teaching algorithms and this was repeated five times for each participant. For each trial, we randomly selected one of the three teaching algorithms and one of the four teaching scenarios. Then, we recorded the number of examples required to learn the target hypothesis. Teaching was terminated when $60\%$ of the cells were revealed. If the learner did not reach the target hypothesis by this time we set the number of teaching examples to this upper limit. We illustrate a teaching session in \extversion. 

\figref{fig:usrstudy:teaching} illustrates the superiority of the adaptive teacher \adar, while \random performs the worst. In both cases where the target hypothesis is in $\Hypotheses^2$, the \setcover teacher performs nearly as well as the adaptive teacher, as at most $12$ teaching examples are required to fully characterize the location of both rectangles. However, we observe a large gain from the adaptive teacher for the scenario $\Hypotheses^{2\rightarrow 1}$.





\section{Conclusions}
We explored the role of adaptivity in algorithmic machine teaching and showed that the adaptivity gain is zero when considering well-studied learner models (e.g., ``worst-case'' and ``preference-based'') for the case of version space learners. This is in stark contrast to real-life scenarios where adaptivity is an important ingredient for effective teaching. 
We highlighted the importance of local preferences (i.e., dependent on the current hypothesis) when the learner transitions to the next hypothesis. We presented hypotheses classes where such local preferences arise naturally, given that machines and humans have a tendency to learn incrementally. Furthermore, we characterized the structure of optimal adaptive teaching algorithms, designed near-optimal general purpose and application-specific adaptive algorithms, and validated these algorithms via simulation and user studies.



\paragraph{Acknowledgments} This work was supported in part by Northrop Grumman, Bloomberg, AWS Research Credits, Google as part of the Visipedia project, and a Swiss NSF Early Mobility Postdoctoral Fellowship.



\bibliography{references}

\iftoggle{longversion}{
\appendix
\section{Structure of the Appendices}\label{app:structure}

We now present the supplemental results for this paper. In \secref{app:userstudy}, we show supplemental experimental results demonstrating the smooth transitions between hypotheses in the \tworec class for human learners. In \secref{app:prefoverview}, we illustrate the preference function of the \tworec hypothesis class. We then provide the full proof of
\thmref{thm:adaptivity} in \secref{app:proof:adaptivity}, and the proof of \thmref{thm:greedy_suff} in \secref{app:proof:greedy:suff}. Finally, in \secref{app:tworec}, we present the formal definitions of the preference function and oracle function for teaching the \tworec hypothesis class, along with the details of the teaching algorithms, namely \adar and \nonadar, which we ran in our experiments.



\clearpage
\section{Supplemental Results from User Study} \label{app:userstudy}
\noindent\textbf{Example Teaching Traces}
\figref{fig:usrstudy:teaching_trace} shows an example teaching session with the adaptive teacher, visualizing the teaching of two target rectangles from one initial rectangle i.e., $\Hypotheses^{1\rightarrow 2}$
At each time step, a new square is revealed by the teacher and the learner updates her hypothesis accordingly (depicted here as an orange rectangle).

\begin{figure*}[h]
  \centering
  \includegraphics[width=1.0\textwidth]{./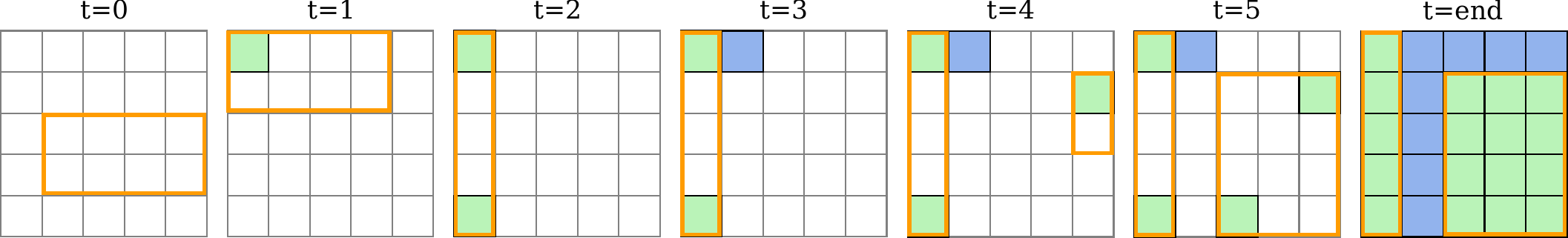}
  \label{fig:user_teach}
  \caption{Adaptive teaching session. Here we see an example teaching session with the adaptive teacher \adar for a grid of size $5 \times 5$. The learner's hypothesis (in orange) is updated over time upon observing squares that are revealed by the teacher. In the final image we see the target hypothesis. For space reasons we omit some of the intermediate time steps.}
  \label{fig:usrstudy:teaching_trace}
\end{figure*}

\noindent\textbf{Local Preference of Participants}
For eliciting the human update preferences depicted in \figref{fig:intro.smoothness}, participants were shown a $10 \times 10$ grid and given an initial hypothesis and a subset of revealed squares. They were then instructed to update the position of the orange rectangle so that it contained green squares, with no blue squares inside it. They were free to draw up to two rectangles in total (including the initial hypothesis) and could move rectangles by clicking the center or grabbing the corners and dragging them to move the edges. They could also click to delete a rectangle and redraw it anywhere on the grid. The rectangles were fixed to only live on the grid lines.
The task was completed when they submitted a valid configuration of the rectangles.
\begin{figure*}[h]
  \centering
  \begin{subfigure}[b]{0.32\textwidth}
    \includegraphics[trim={0pt 70pt 0pt 30pt},width=1.0\textwidth]{./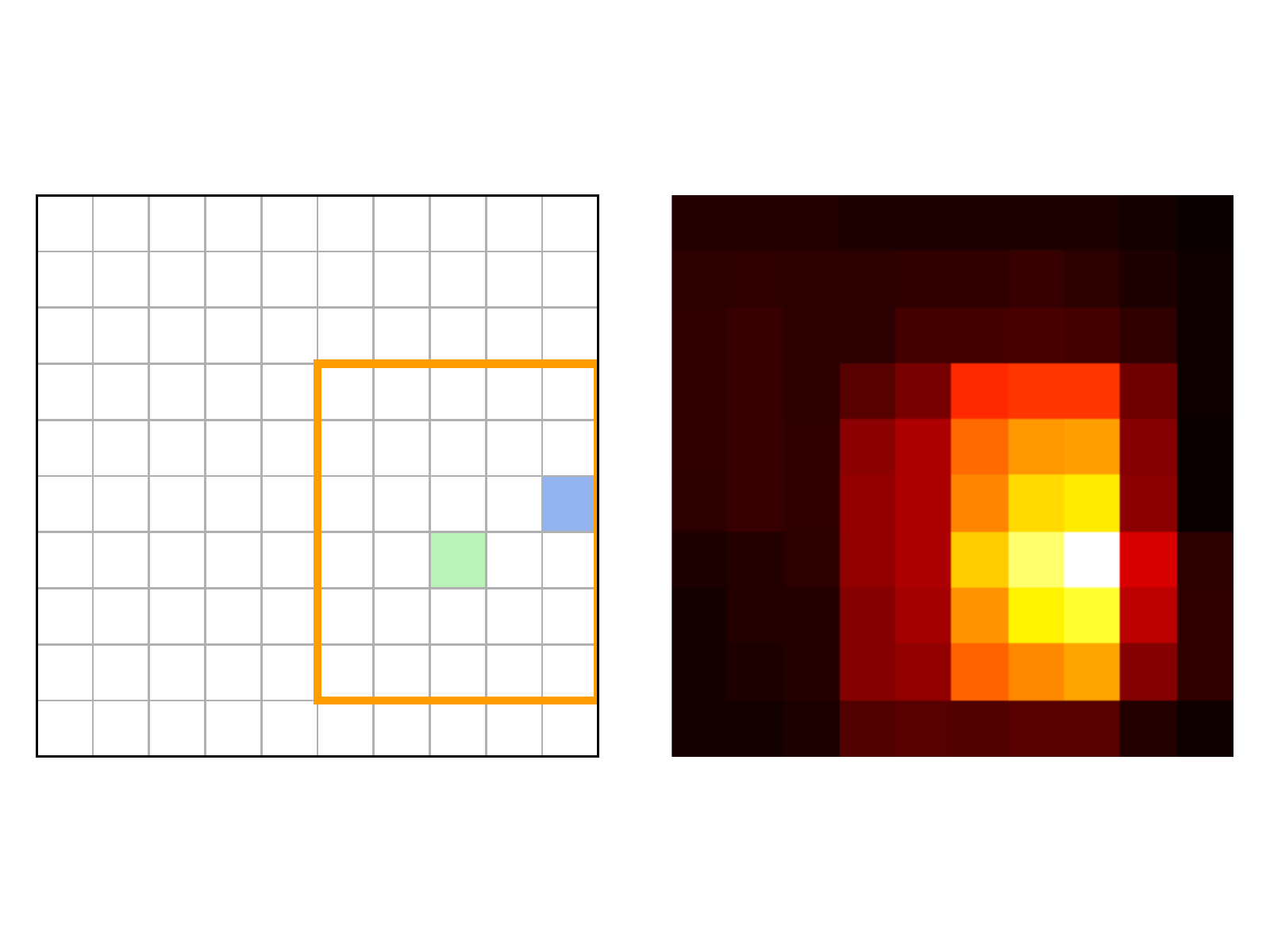}
    \caption{Example configuration 1}
    \label{fig:supp.smoothness1}
  \end{subfigure}
  \begin{subfigure}[b]{0.32\textwidth}
    \includegraphics[trim={0pt 70pt 0pt 30pt},width=1.0\textwidth]{./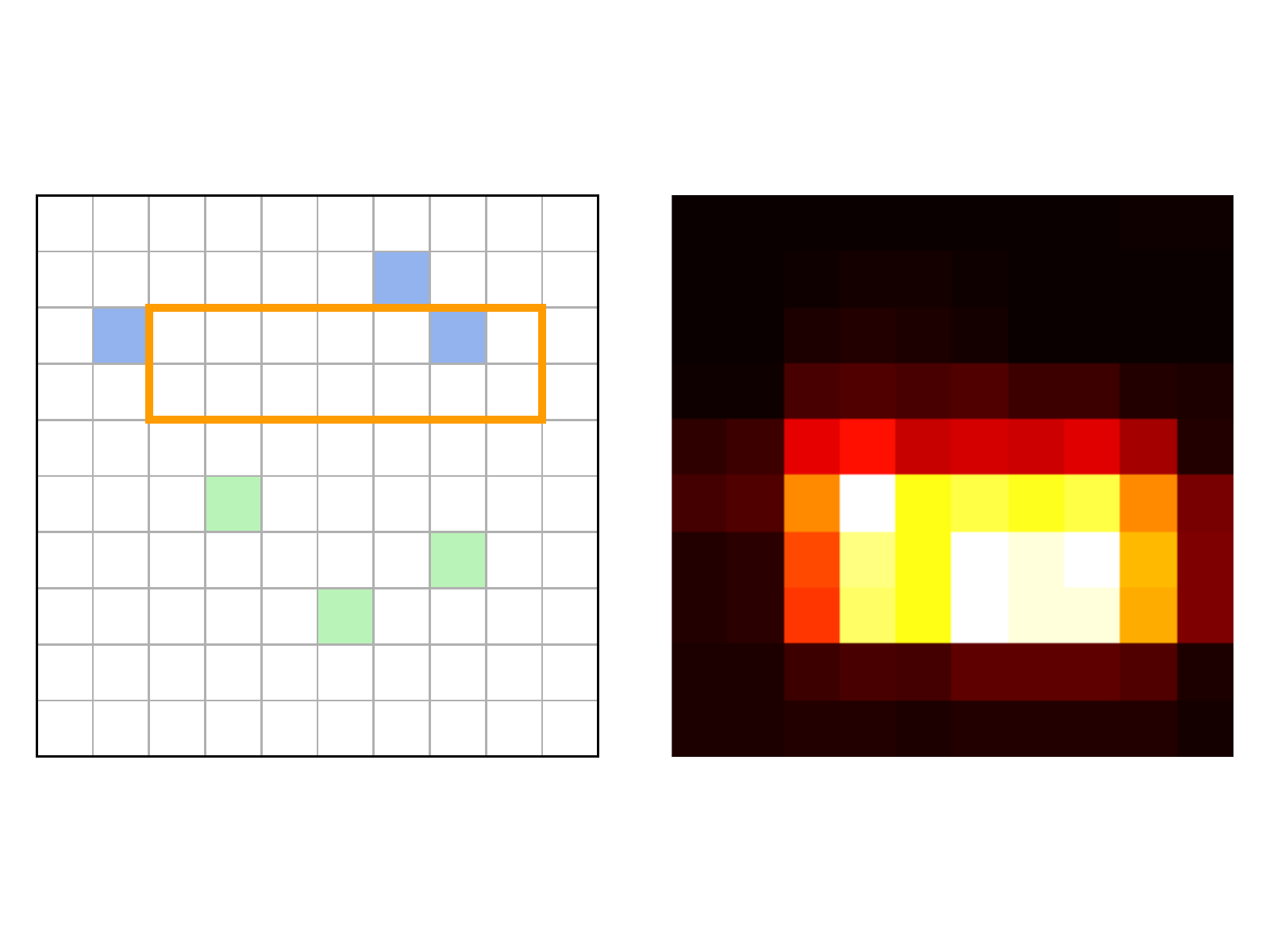}
    \caption{Example configuration 2}
    \label{fig:supp.smoothness2}
  \end{subfigure}
  \begin{subfigure}[b]{0.32\textwidth}
    \includegraphics[trim={0pt 70pt 0pt 30pt},width=1.0\textwidth]{./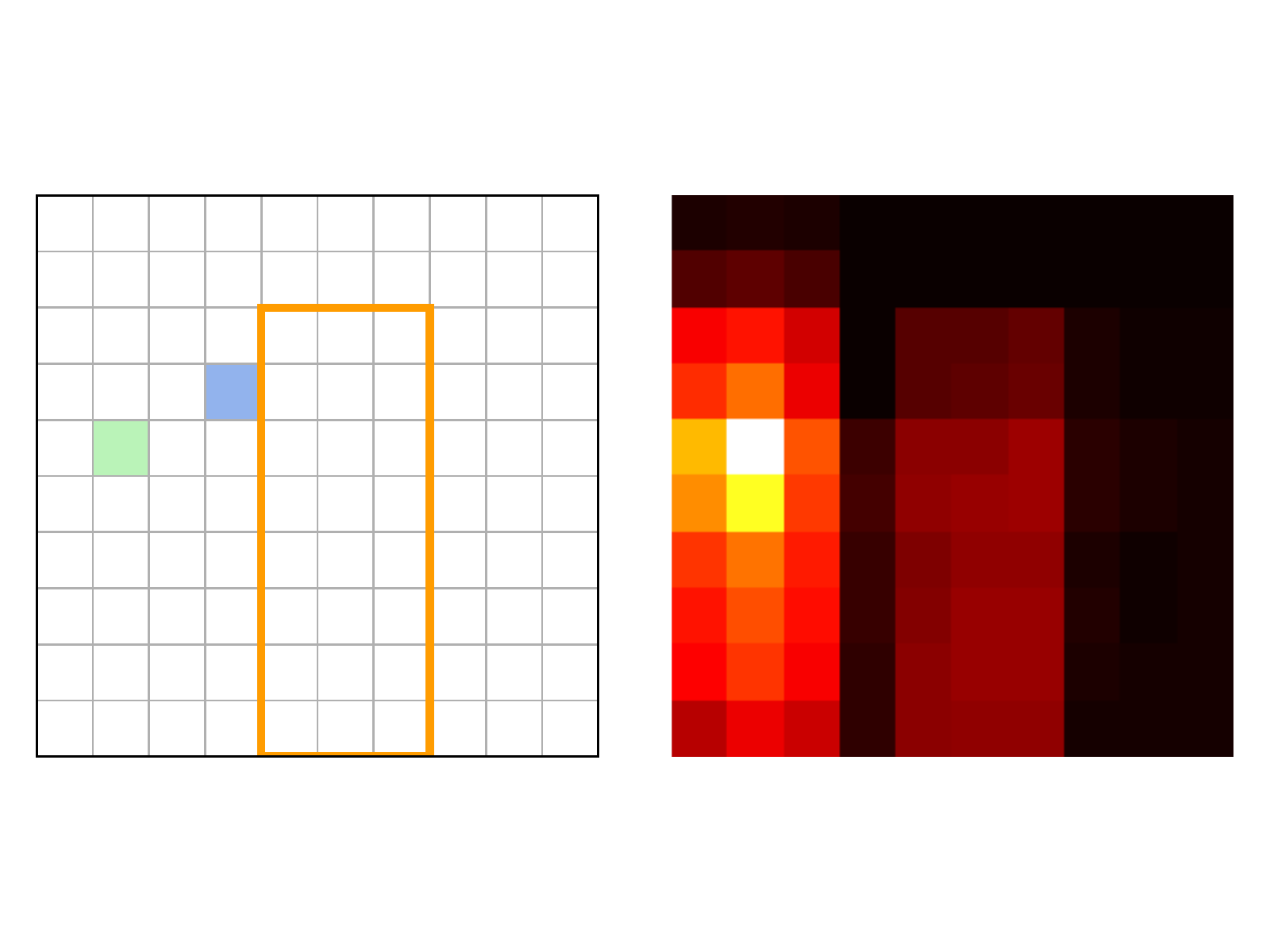}
    \caption{Example configuration 3}
    \label{fig:supp.smoothness3}
  \end{subfigure}
  \caption{Participants prefer smooth updates. For each pair, the left depicts the initial configuration of the grid shown to participants and the right is the heat map of their updated valid rectangles. Again, green squares had to lie within the rectange(s) and blue squares had to be outside.}
  \label{fig:supp.smoothness}
\end{figure*}

$100$ participants on Mechanical Turk were each shown the same set of initial configurations.
We show the heatmap of their updated rectangles for three of the ten configurations in \figref{fig:supp.smoothness}.
Each position on the heatmap records the number of updated rectangles that overlap with that location, where brighter colors indicate more rectangles.
We clearly see a preference for local updates as compared to the set of all possible, valid, updates.


\clearpage
\section{Preferences for \tworec: Illustrative Examples} \label{app:prefoverview}
In the following we provide illustrative examples for the most preferred hypothesis for the \tworec hypothesis class under different configurations of the current hypothesis $h_t$ and the next version space $\Hypotheses_{t+1}$.

In the figures below (\figref{fig:supp.preference.tworec.h1.h1} -- \figref{fig:illustration:h2}), the hypotheses $\hypothesis_t$ (left illustration) and $\hypothesis_{t+1}$ (right illustration) are represented by the orange rectangles. The green and blue cells represent positive and negative teaching examples, respectively.

\begin{itemize}
\item $h_t \in \Hypotheses^1$, $\Hypotheses_{t+1} \cap \Hypotheses^1 \neq \emptyset$. See \figref{fig:supp.preference.tworec.h1.h1}.
  \begin{figure*}[!h]
    \centering
    \begin{subfigure}[b]{0.18\textwidth}
      \includegraphics[width=1.0\textwidth]{./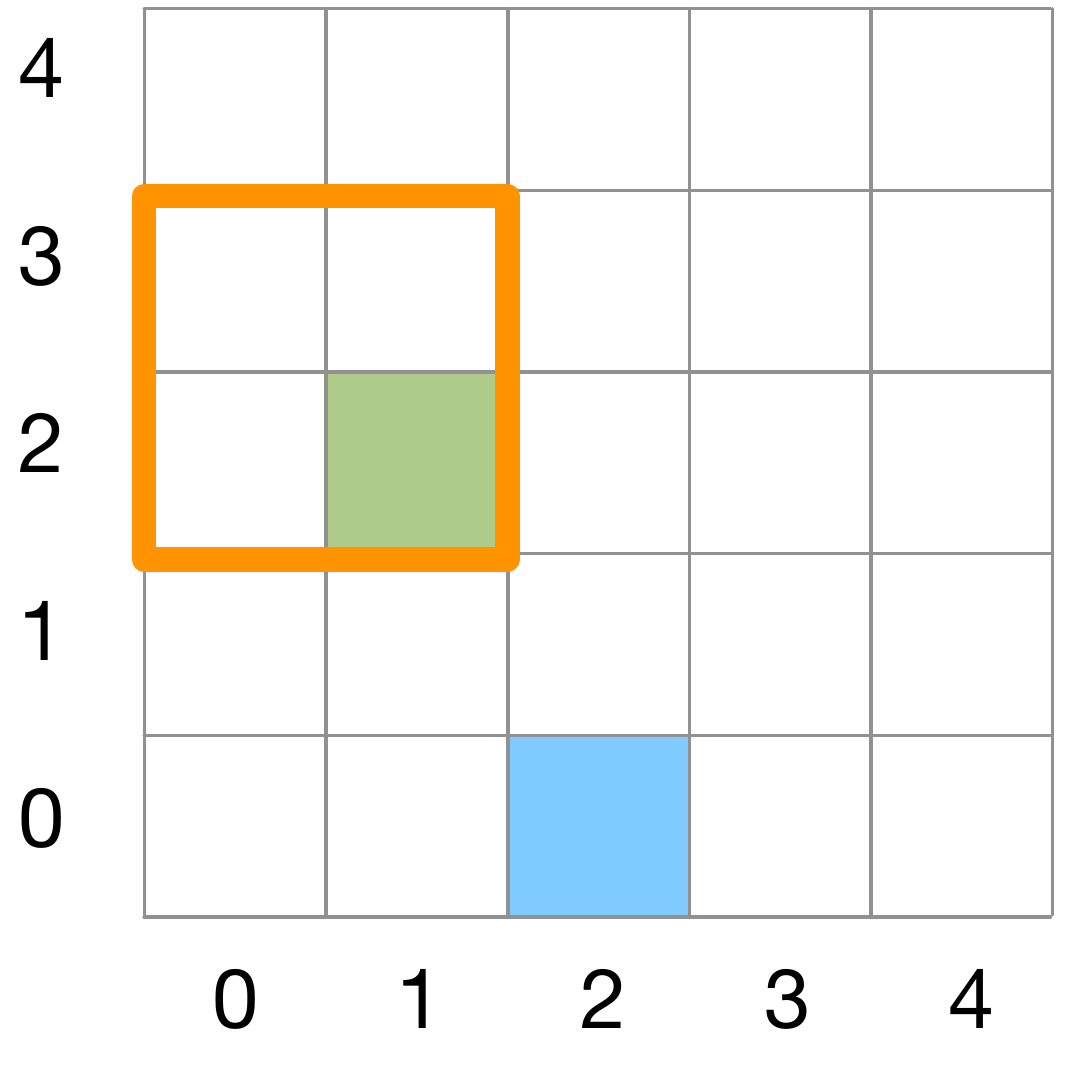}
      \label{fig:supp.pref.tworec.h1.h1.a}
    \end{subfigure}
    \quad
    \begin{subfigure}[b]{0.18\textwidth}
      \includegraphics[width=1.0\textwidth]{./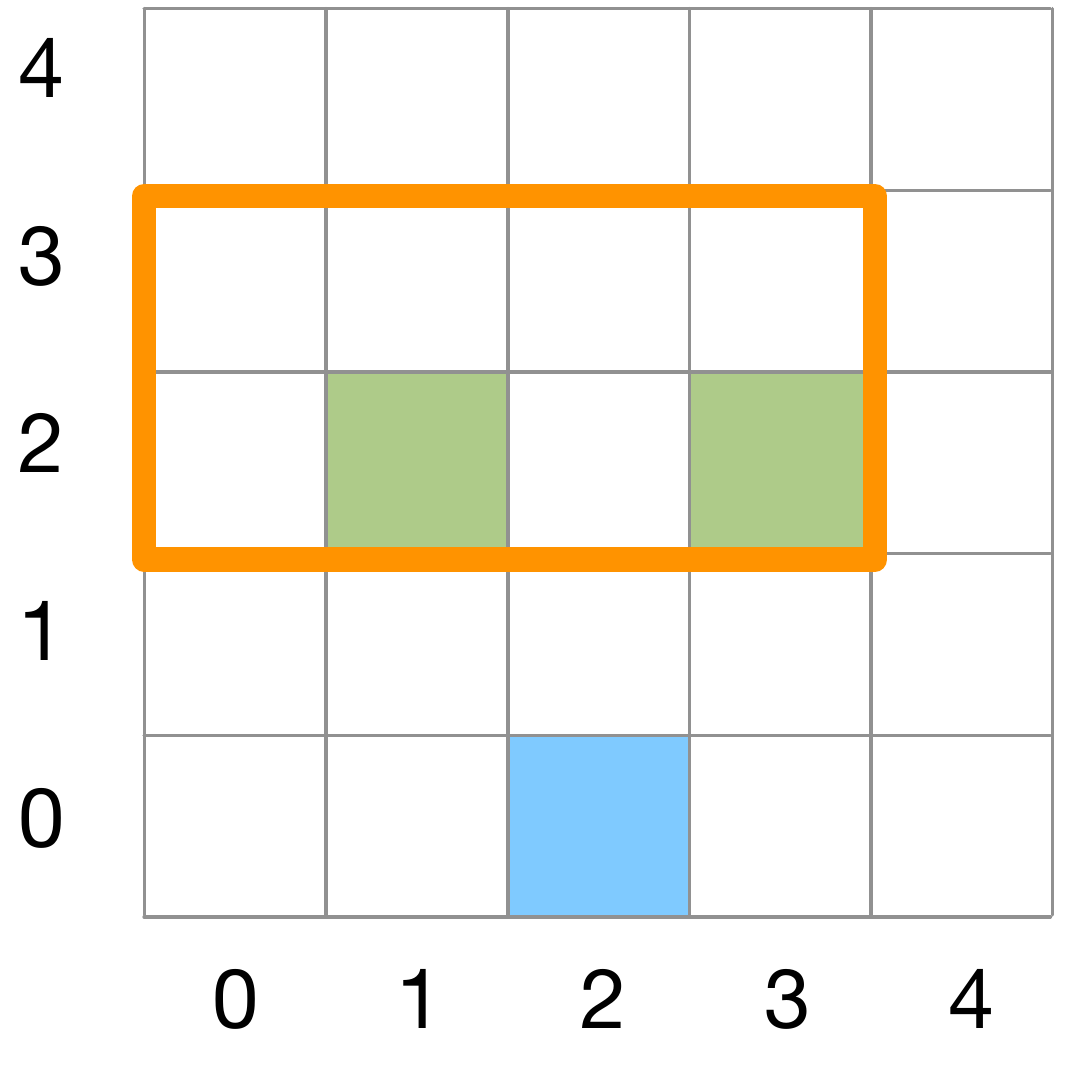}
      \label{fig:supp.pref.tworec.h1.h1.b}
    \end{subfigure}
    \caption{$h_t \in \Hypotheses^1$, $\Hypotheses_{t+1} \cap \Hypotheses^1 \neq \emptyset$: Update to one of the closest hypothesis in $\Hypotheses_{t+1} \cap \Hypotheses^1$ (here, distance is defined by the number of edge movements).}
    \label{fig:supp.preference.tworec.h1.h1}
  \end{figure*}
  
\item $h_t \in \Hypotheses^1$, $\Hypotheses_{t+1} \cap \Hypotheses^1 = \emptyset$. See \figref{fig:supp.preference.tworec.h1.h2.singleton} and \figref{fig:supp.preference.tworec.h1.h2.redraw}.
  \begin{figure*}[!h]
    \centering
    \begin{subfigure}{.45\textwidth}
      \centering
      \includegraphics[width=.4\textwidth]{./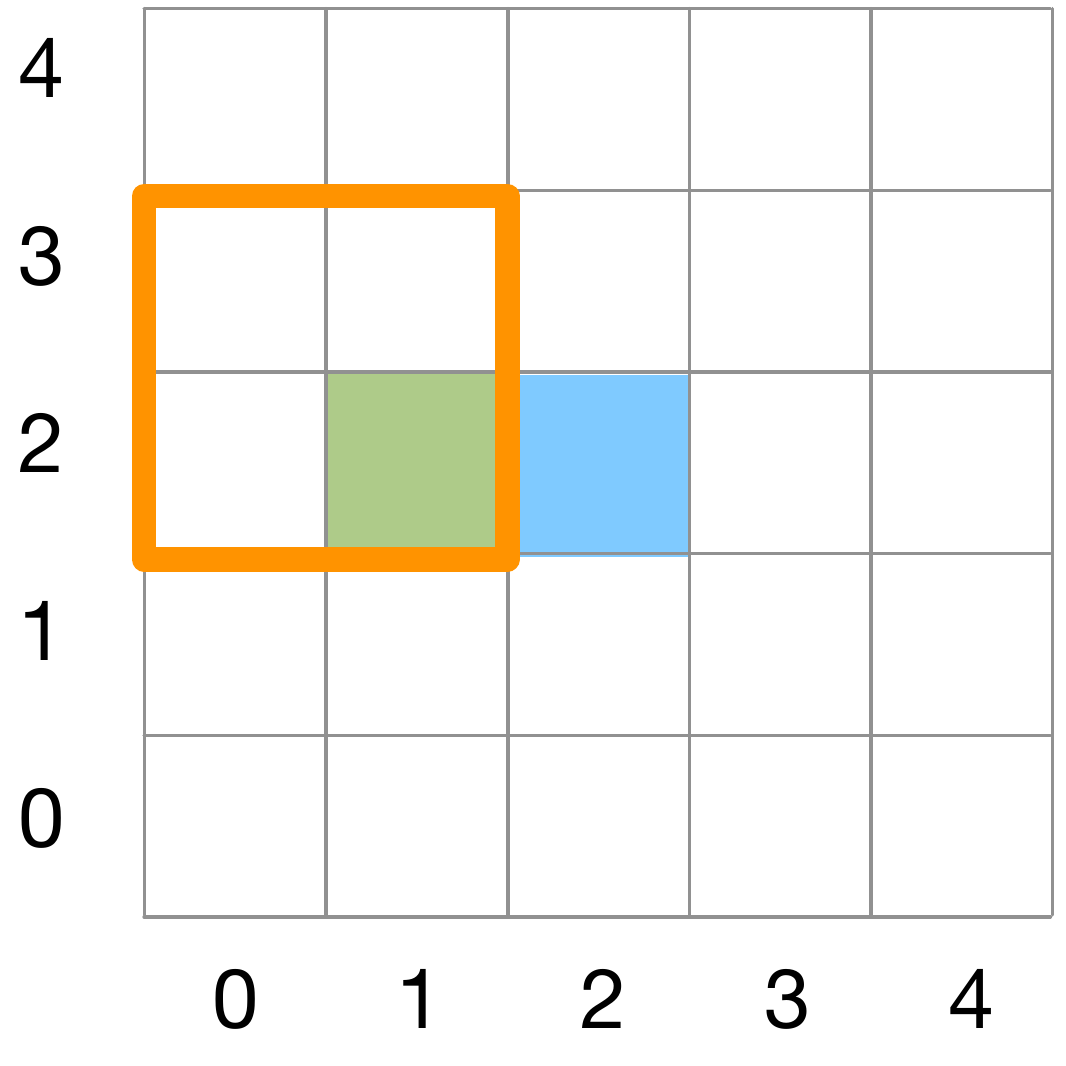}
      \quad
      \includegraphics[width=.4\textwidth]{./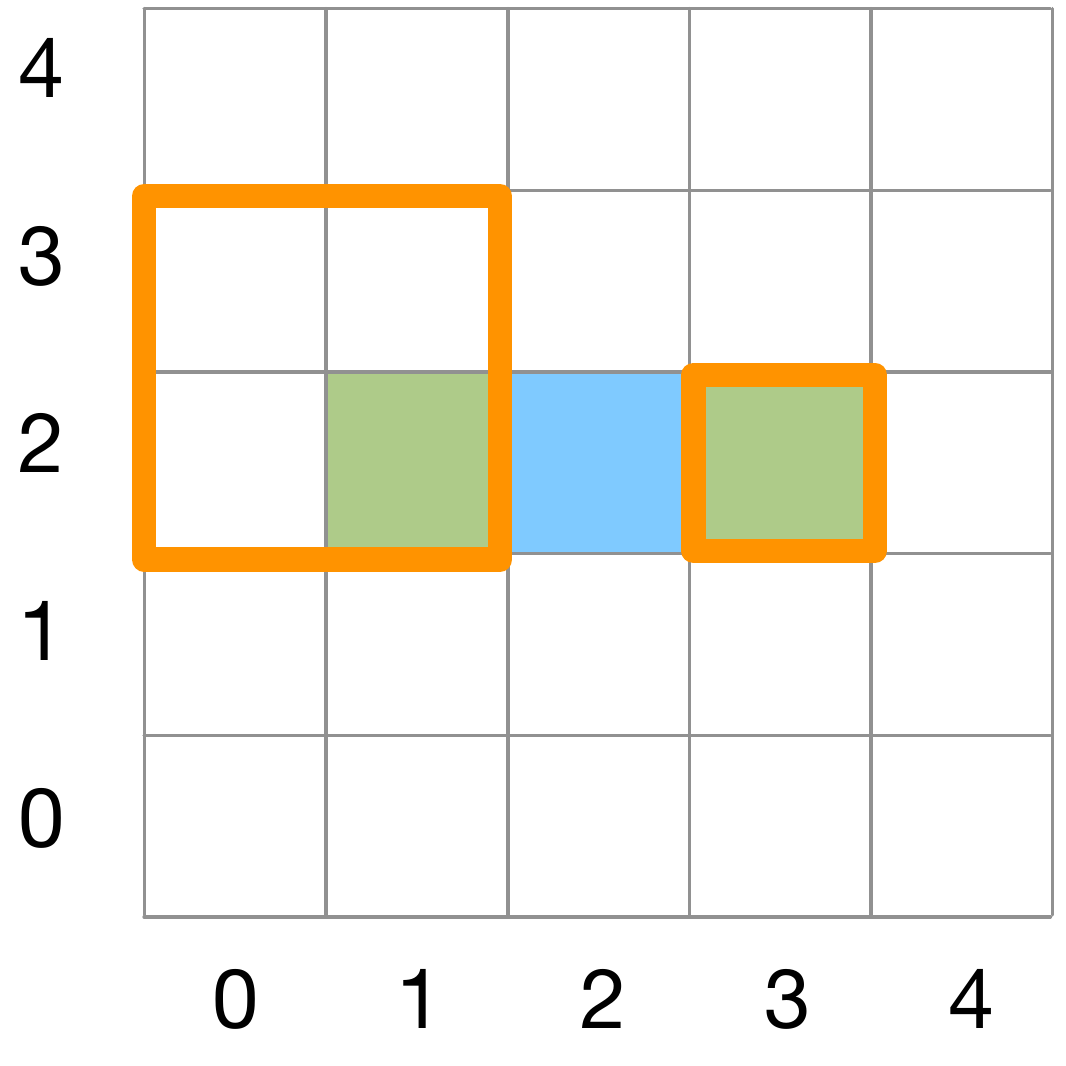}
      \caption{}
      \label{fig:supp.preference.tworec.h1.h2.singleton}
    \end{subfigure}
    \qquad
    \begin{subfigure}{.45\textwidth}
      \centering
      \includegraphics[width=.4\textwidth]{./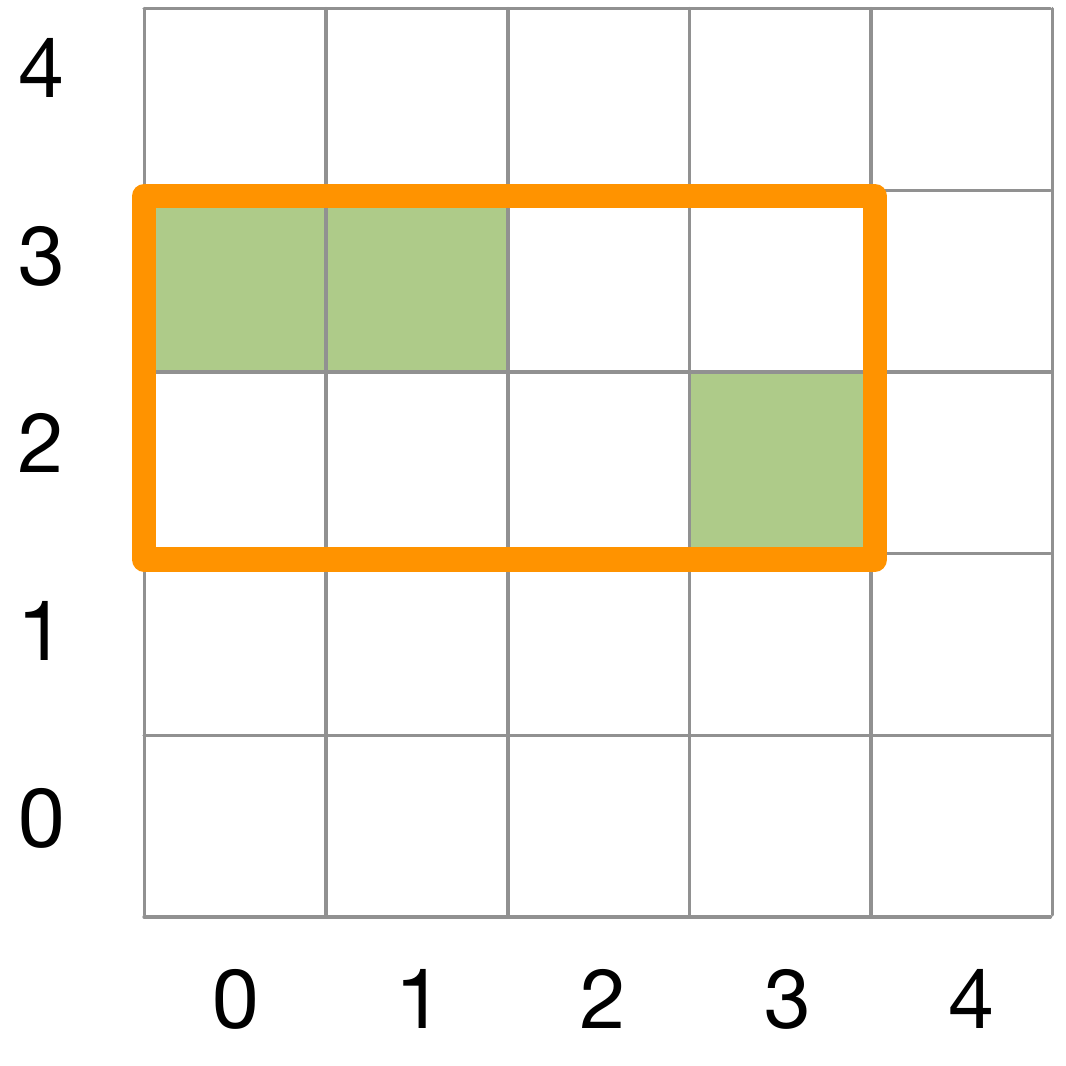}
      \quad
      \includegraphics[width=.4\textwidth]{./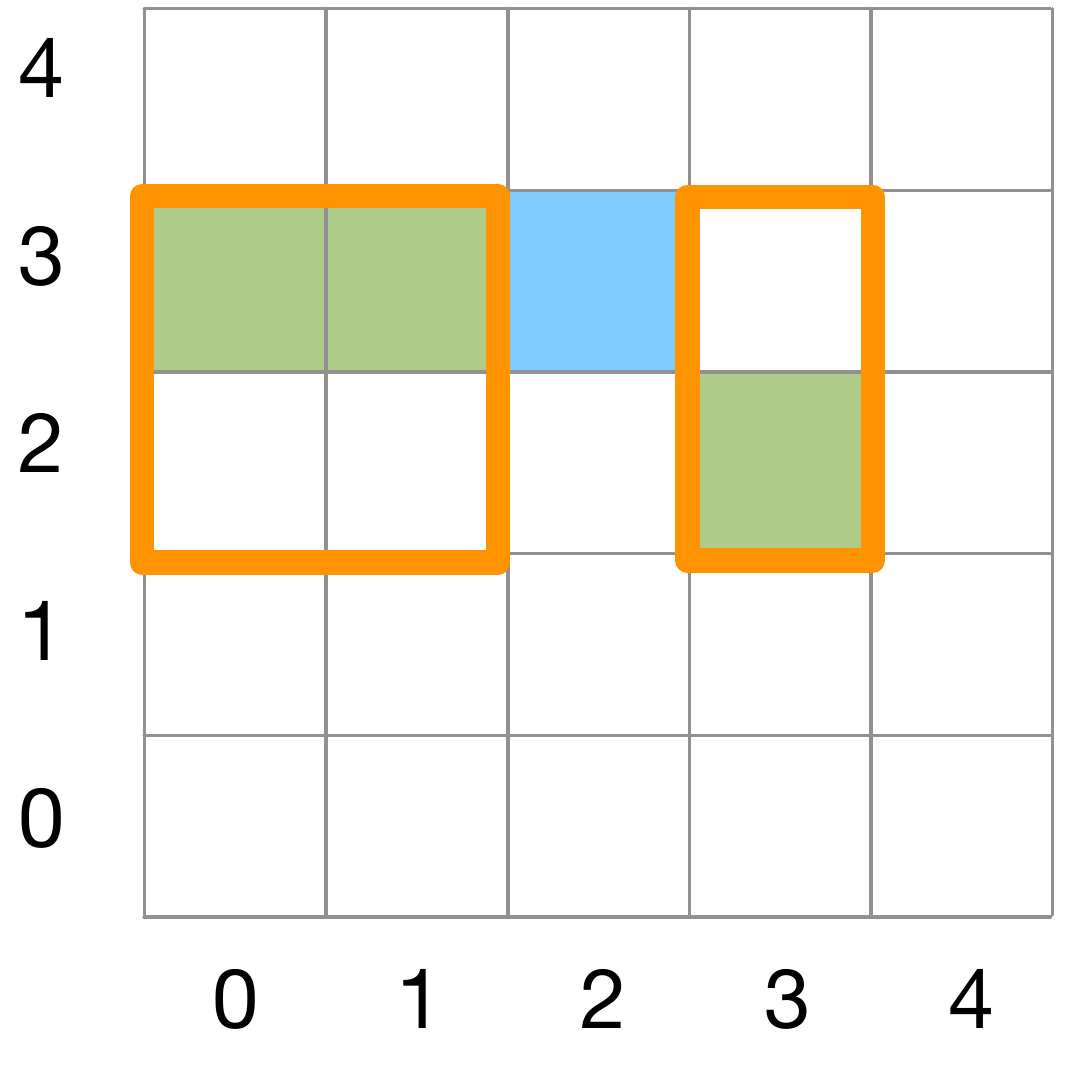}
      \caption{}
      \label{fig:supp.preference.tworec.h1.h2.redraw}
    \end{subfigure}
    \caption{$h_t \in \Hypotheses^1$, $\Hypotheses_{t+1} \cap \Hypotheses^1 = \emptyset$: (a) Add one singleton rectangle. (b) Redraw two rectangles.
    }
  \end{figure*}
\item $h_t \in \Hypotheses^2$, $\Hypotheses_{t+1} \cap \Hypotheses^2 = \emptyset$. See \figref{fig:supp.preference.tworec.h2.h1.exhaust.singleton} and \figref{fig:supp.preference.tworec.h2.h1.exhaust.merge}.
  \begin{figure*}[!h]
    \centering
    \begin{subfigure}{.45\textwidth}
      \centering
      \includegraphics[width=.4\textwidth]{./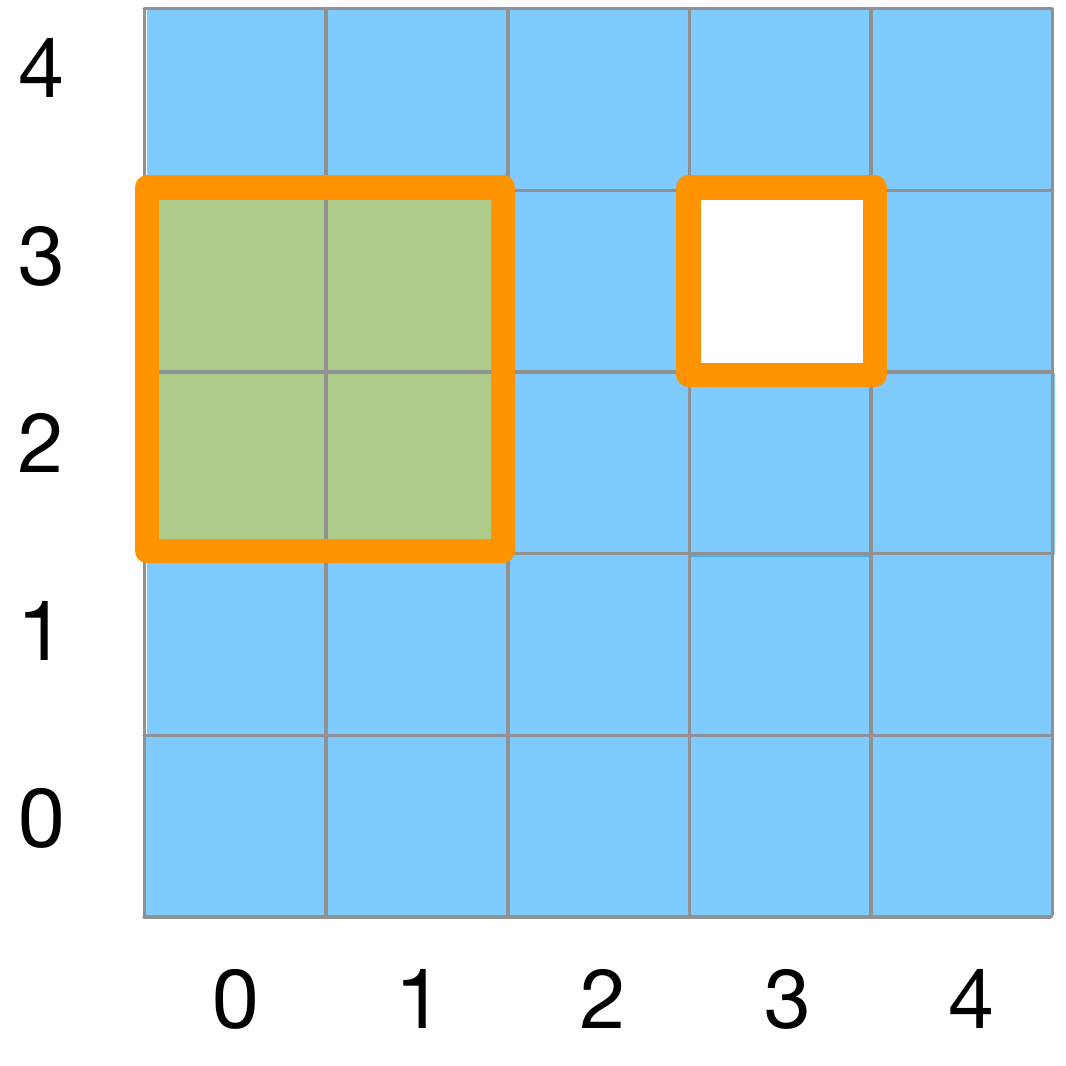}
      \quad
      \includegraphics[width=.4\textwidth]{./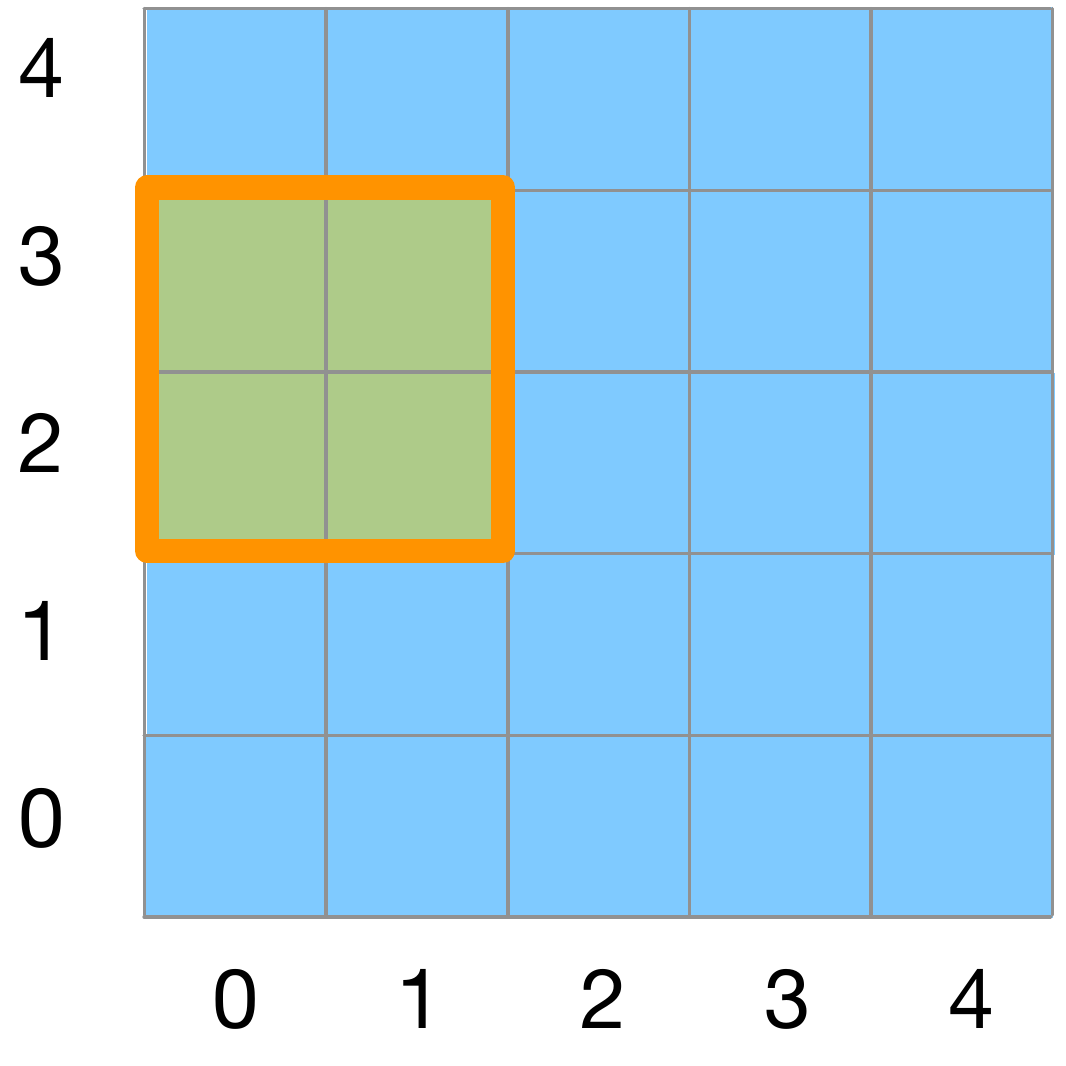}
      \caption{}
      \label{fig:supp.preference.tworec.h2.h1.exhaust.singleton}
    \end{subfigure}
    \qquad
    \begin{subfigure}{.45\textwidth}
      \centering
      \includegraphics[width=.4\textwidth]{./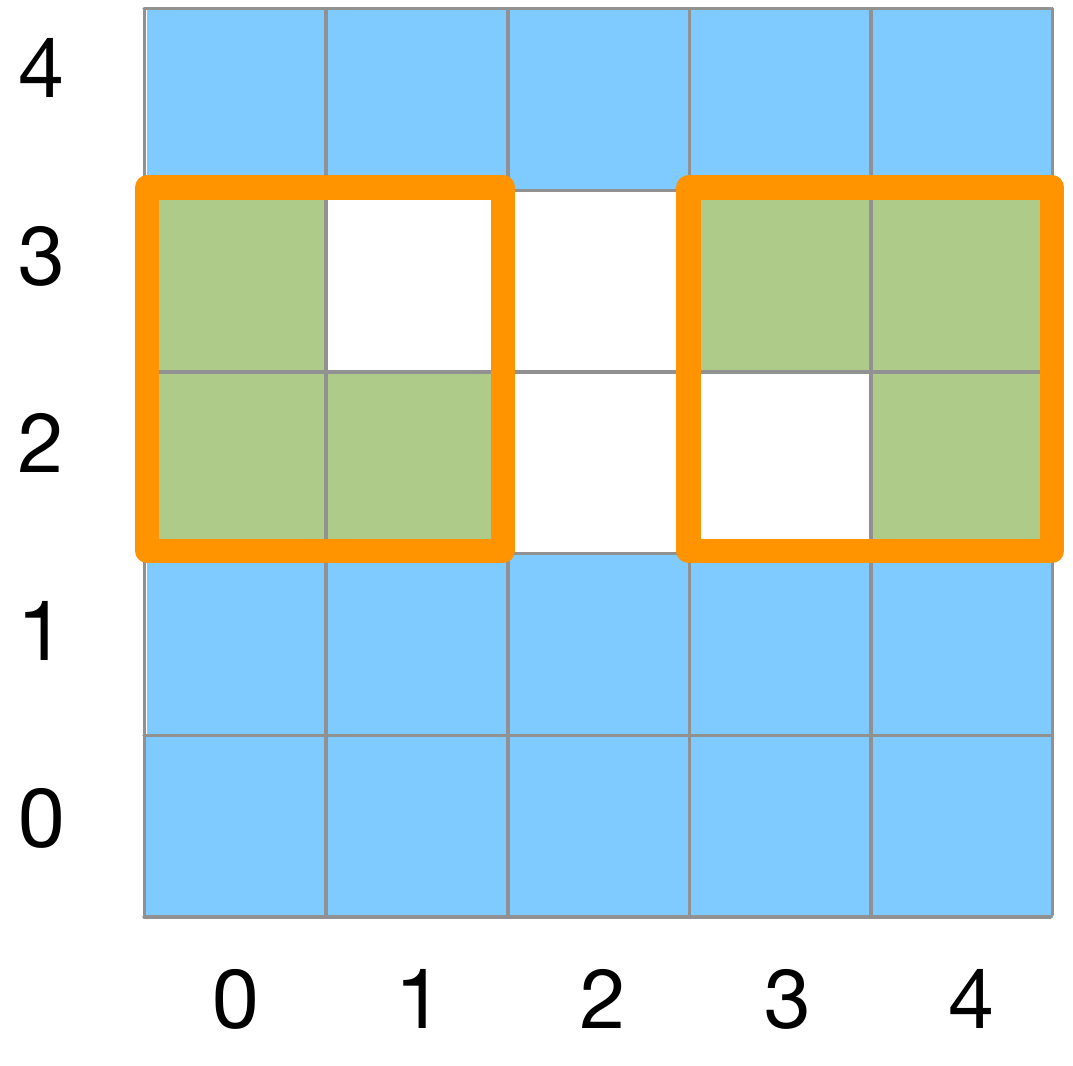}
      \quad
      \includegraphics[width=.4\textwidth]{./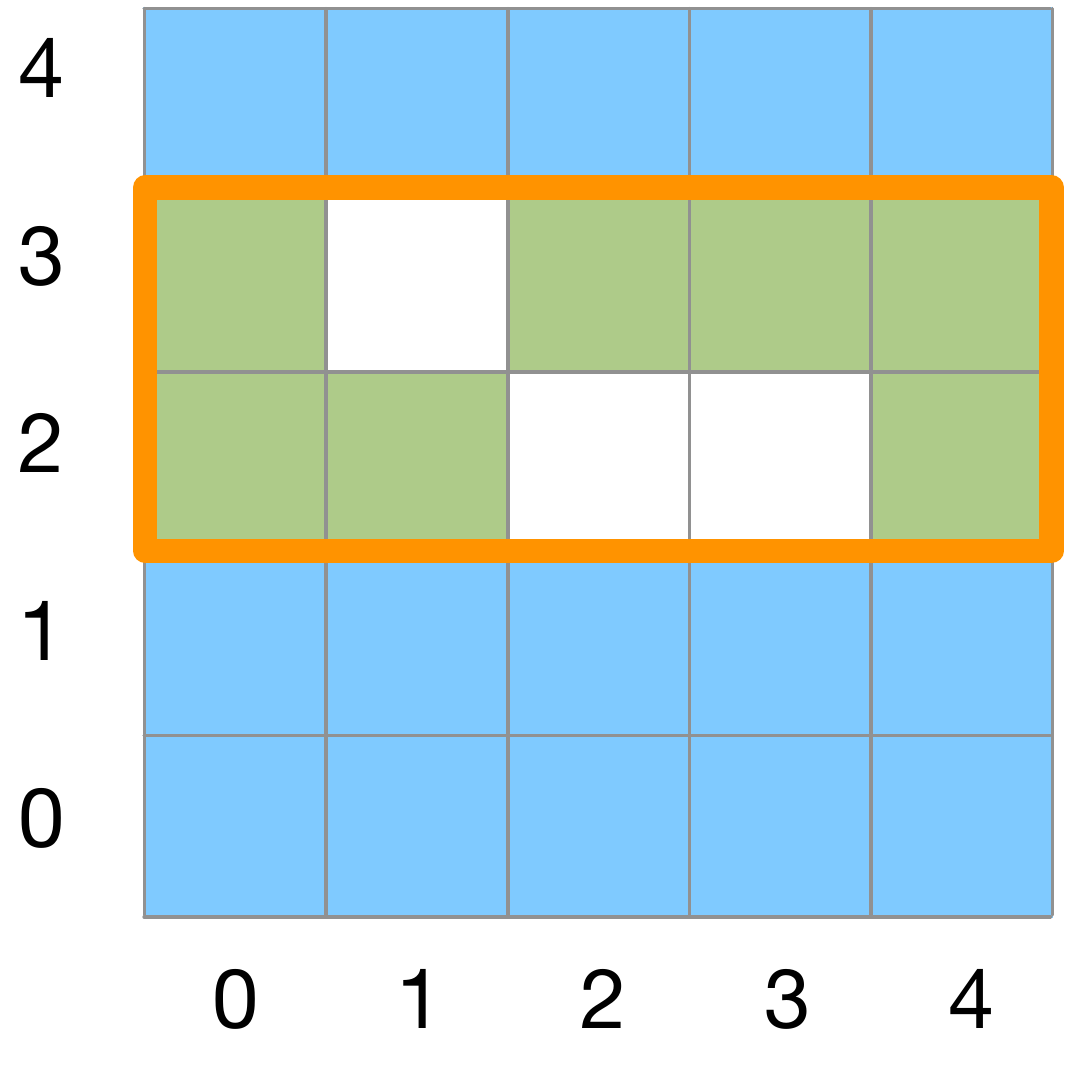}
      \caption{}
      \label{fig:supp.preference.tworec.h2.h1.exhaust.merge}
    \end{subfigure}
    \caption{$h_t \in \Hypotheses^2$, $\Hypotheses_{t+1} \cap \Hypotheses^2 = \emptyset$: (a) Eliminate the singleton rectangle. (b) Merge two rectangles to a single rectangle.  
    }
  \end{figure*}
  \clearpage
\item $h_t \in \Hypotheses^2$, $\Hypotheses_{t+1} \cap \Hypotheses^2 \neq \emptyset$. See \figref{fig:supp.preference.tworec.h2.h2}, \figref{fig:supp.preference.tworec.h2.h1.shortcut.singleton} and \figref{fig:supp.preference.tworec.h2.h1.shortcut.merge}.
  \begin{figure*}[!h]
    \centering
    \begin{subfigure}{.45\textwidth}
      \centering
      \includegraphics[width=.4\textwidth]{./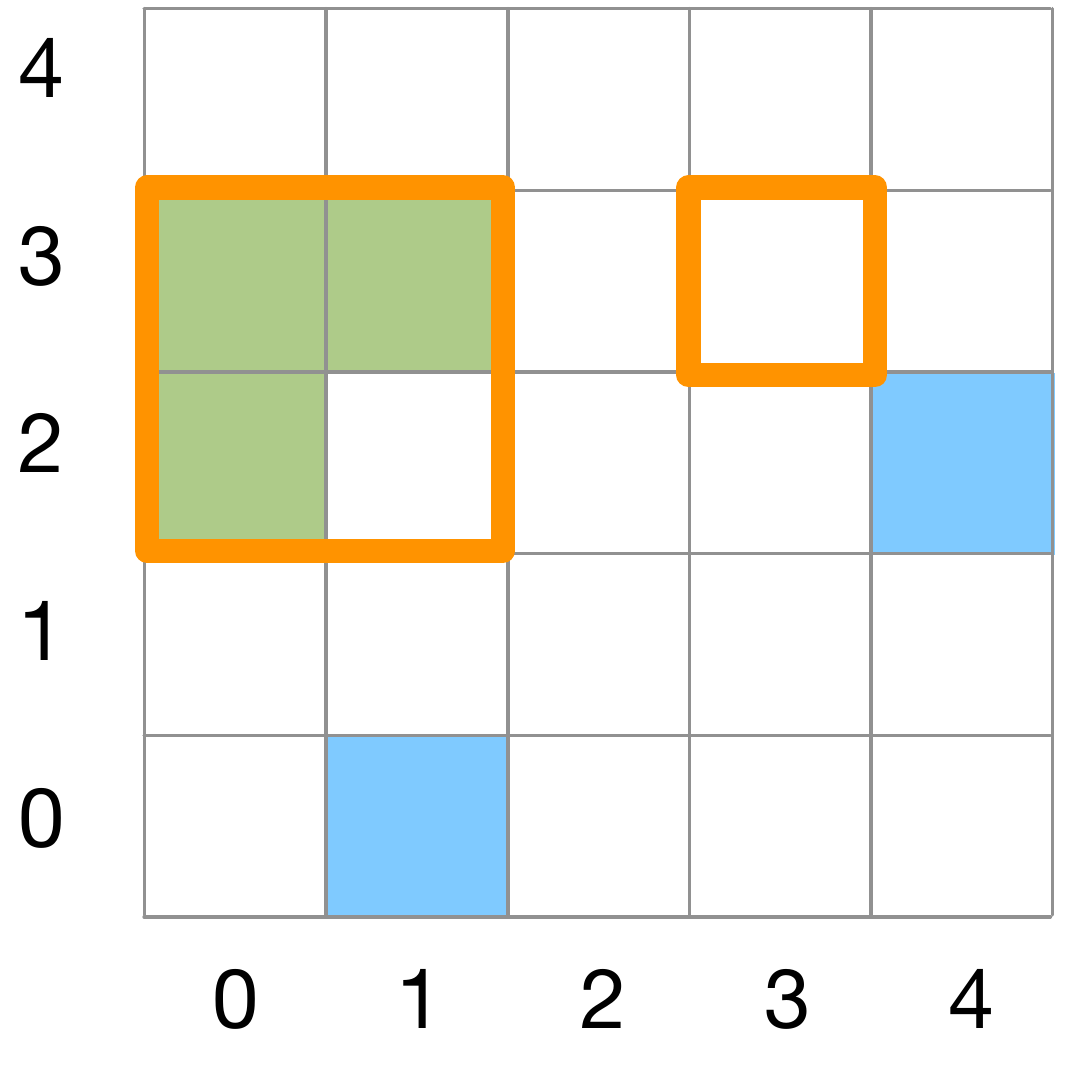}
      \quad
      \includegraphics[width=.4\textwidth]{./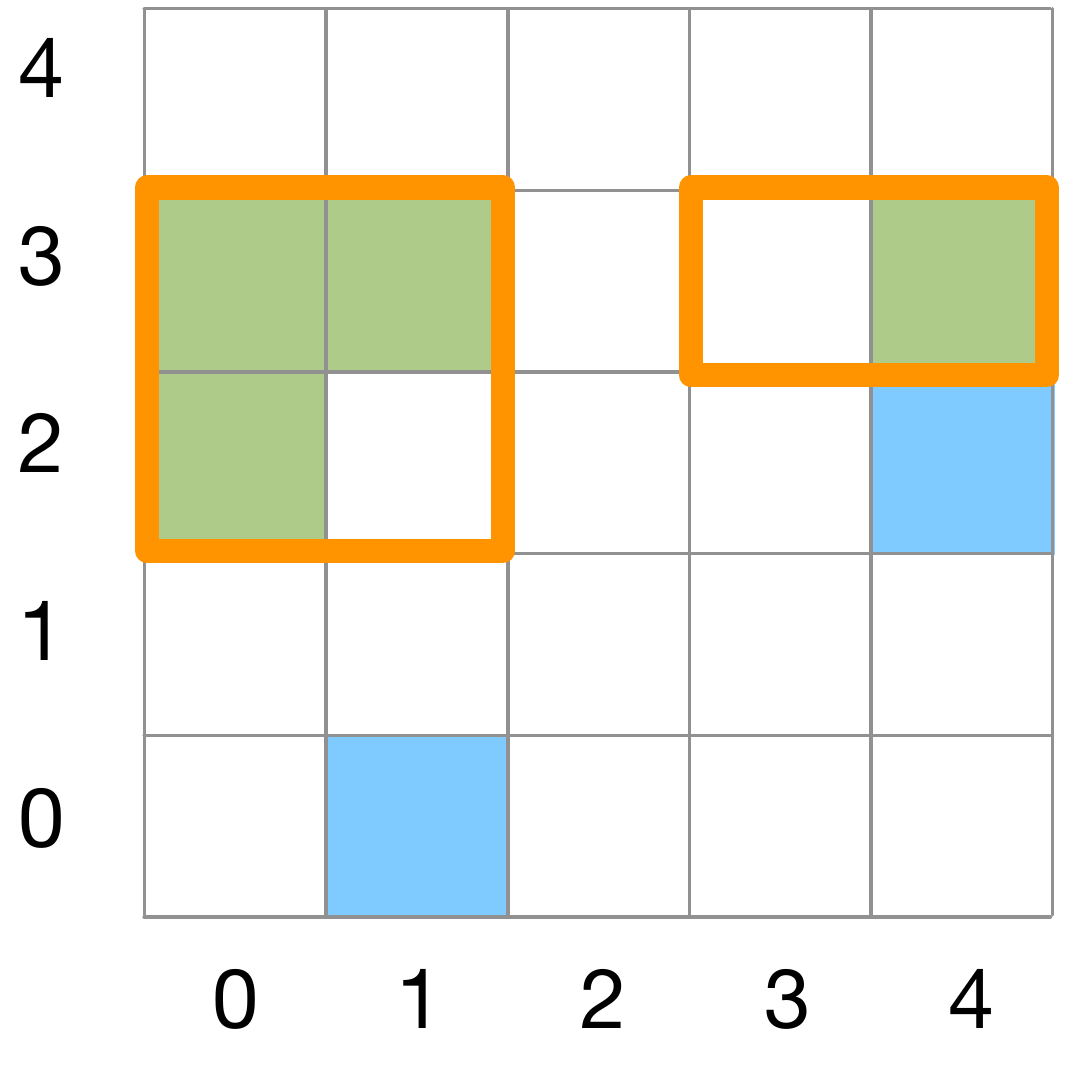}
      \caption{}
      \label{fig:supp.preference.tworec.h2.h2}
    \end{subfigure}
    \\
    \vspace{10pt}
    \begin{subfigure}{.45\textwidth}
      \centering
      \includegraphics[width=.4\textwidth]{./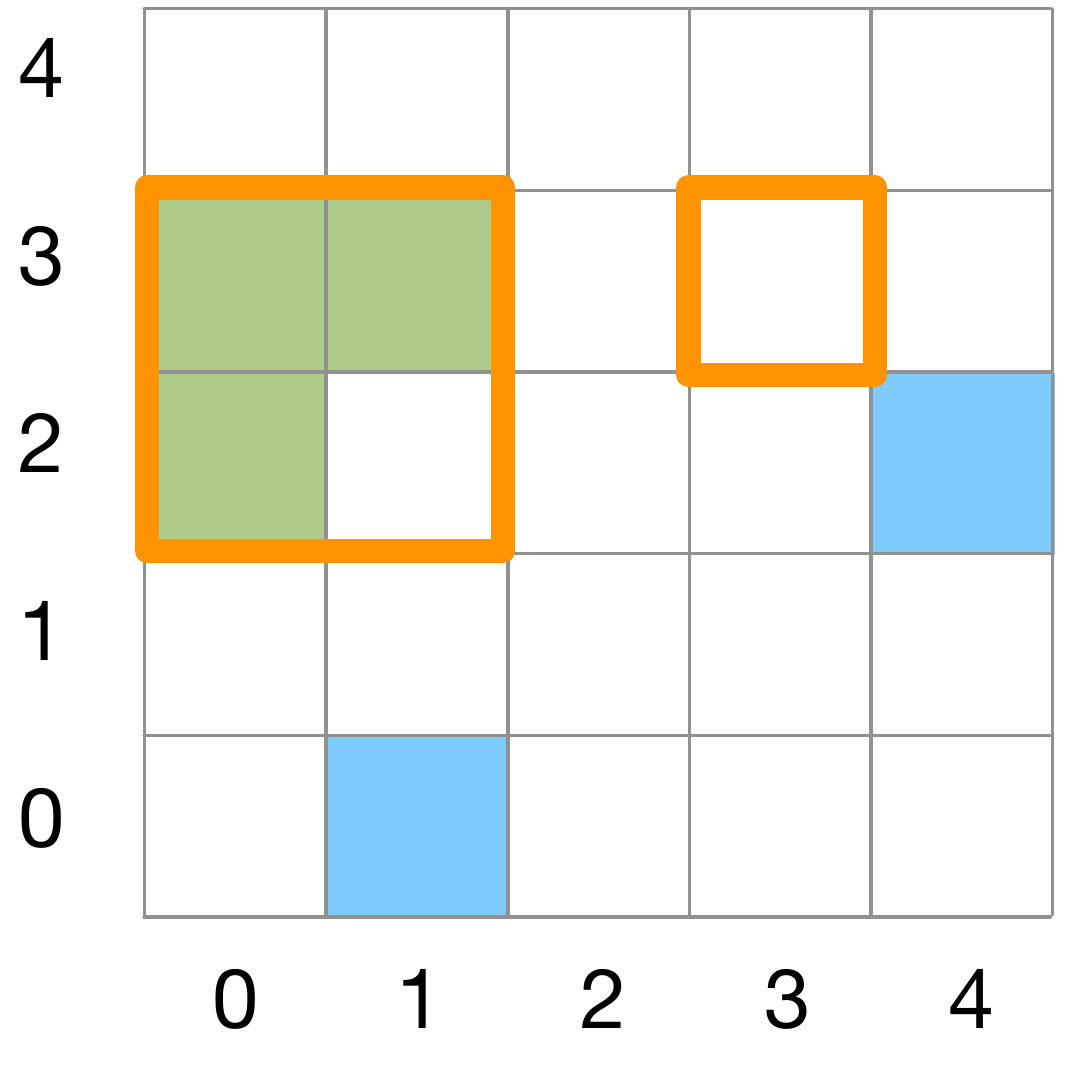}
      \quad
      \includegraphics[width=.4\textwidth]{./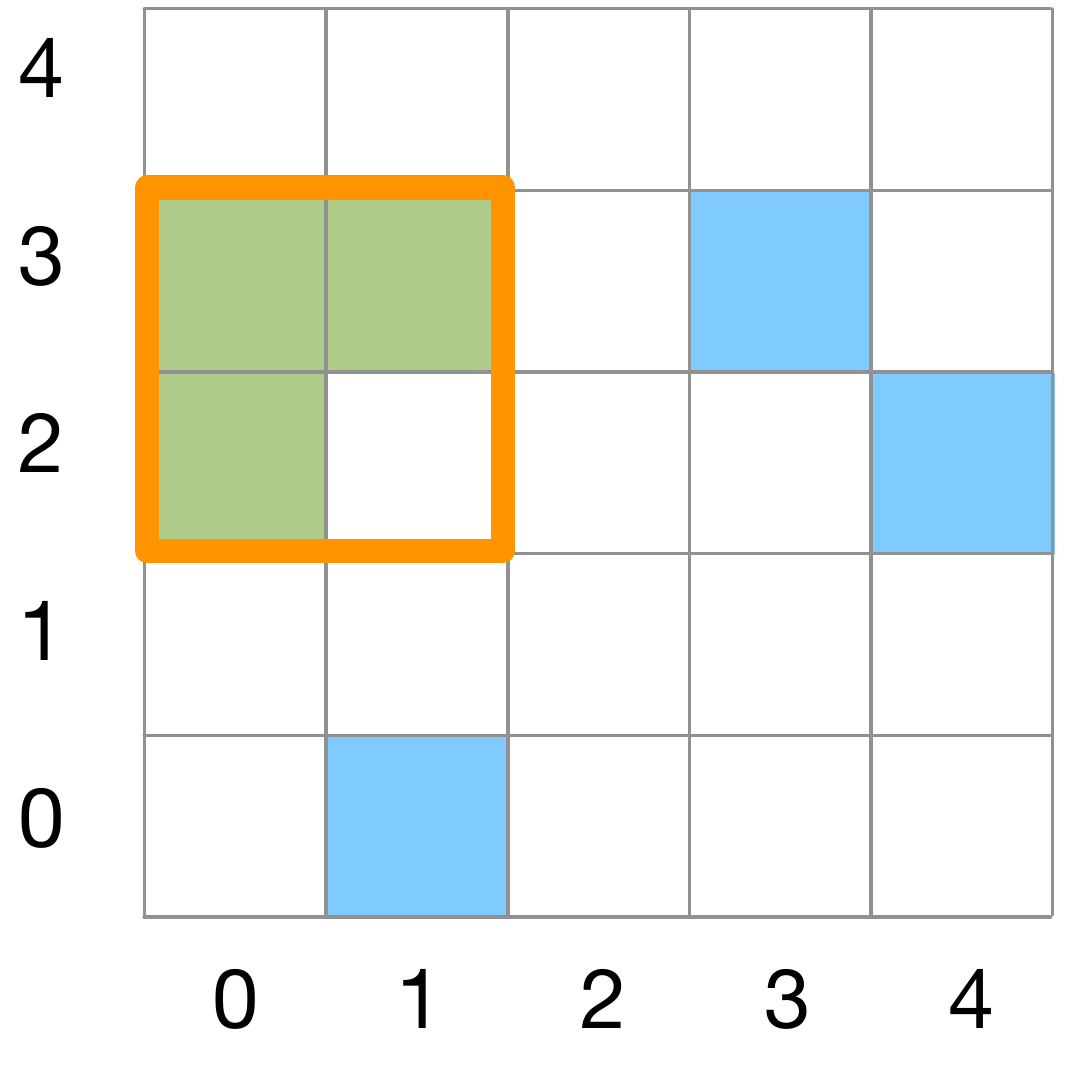}
      \qquad
      \caption{}
      \label{fig:supp.preference.tworec.h2.h1.shortcut.singleton}
    \end{subfigure}
    \\
    \vspace{10pt}
    \begin{subfigure}{.45\textwidth}
      \centering
      \includegraphics[width=.4\textwidth]{./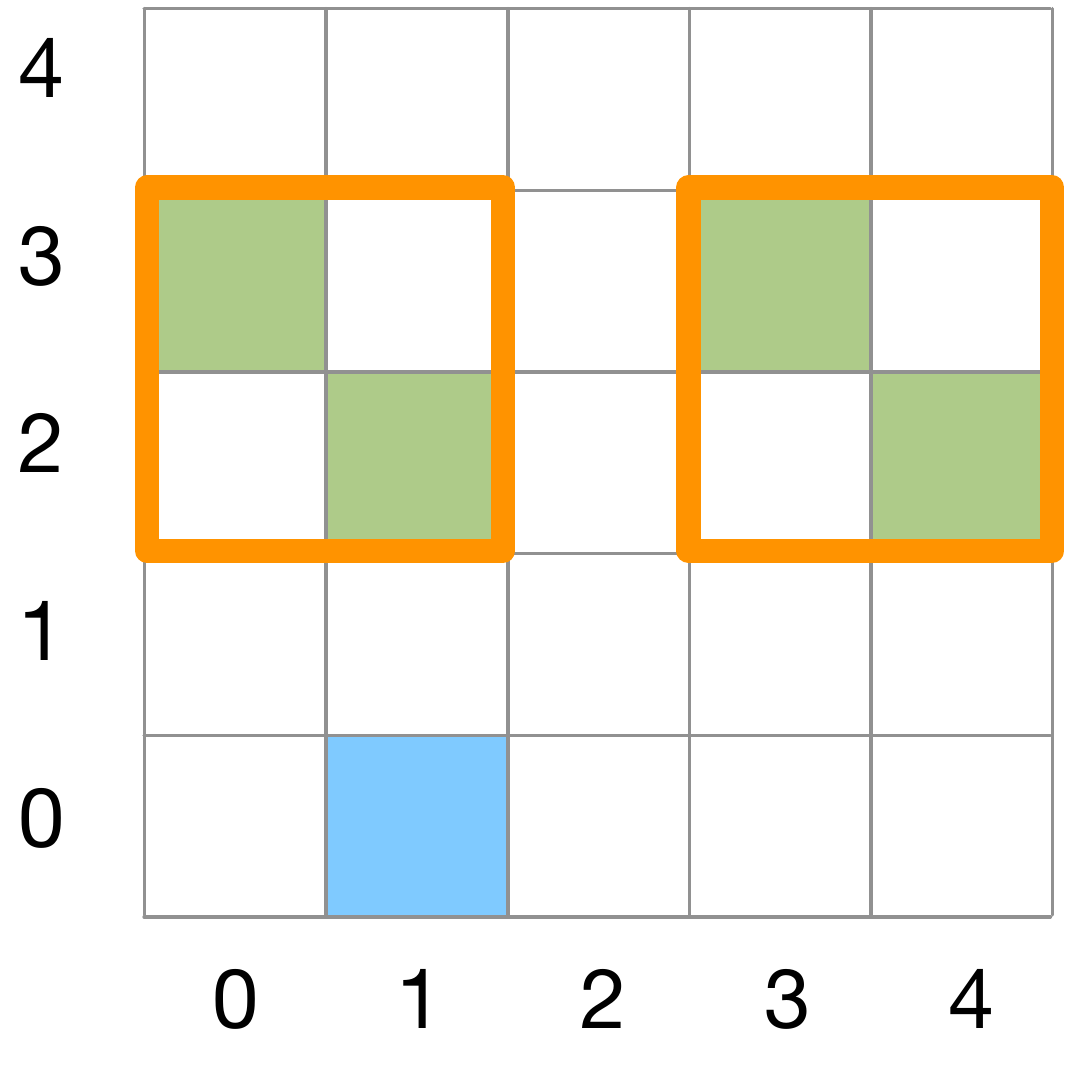}
      \quad
      \includegraphics[width=.4\textwidth]{./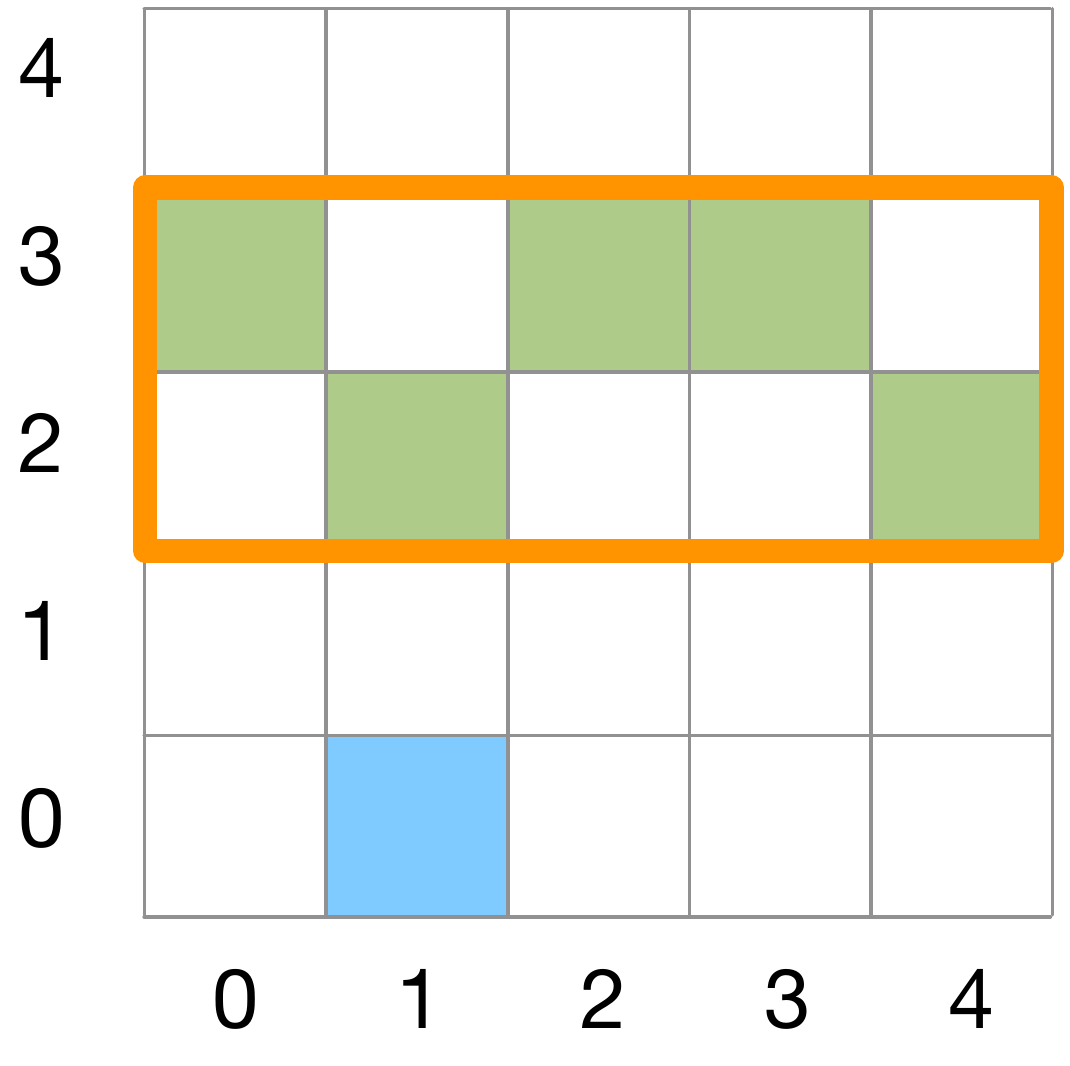}
      \caption{}
      \label{fig:supp.preference.tworec.h2.h1.shortcut.merge}
    \end{subfigure}
    \caption{$h_t \in \Hypotheses^2$, $\Hypotheses_{t+1} \cap \Hypotheses^2 \neq \emptyset$: (a) Move to the closest hypothesis in $\Hypotheses_{t+1} \cap \Hypotheses^2$ (here, distance is defined by the number of edge movements). (b) Eliminate the singleton rectangle (also, see \secref{app:tworec:structure} C-2). (c) Merge two rectangles to a single rectangle (also, see \secref{app:tworec:structure} C-2).
    }
    \label{fig:illustration:h2}
  \end{figure*}
\end{itemize}



\clearpage
\section{Proof of \thmref{thm:adaptivity}}\label{app:proof:adaptivity}

In this section, we provide the proof of \thmref{thm:adaptivity}. In order to prove the theorem, we will construct adaptive teaching sequences for the \tworec and \lattice classes, and provide lower bounds on the optimal non-adaptive teaching algorithms for both classes.

We first prove the following lemma for the \tworec class.

\begin{lemma}\label{lm:tworec}
  Consider the case where $\hstar \in \Hypotheses^1$ represents a single rectangle $r_{1}$, and the learner's initial hypothesis $\hypothesis_0 \in \Hypotheses^2$ consists of two rectangles $(r_{1}, r_{2})$ (cf. \figref{fig:illustration:adaptivetworec:proof:init}). We assume that the learner prefers to move the smallest number of edges when transitioning within a subclass of \tworec. There exists an adaptive teacher that requires at most $\Theta(\log|r_{2}|)$ examples to teach $h^*$, while any non-adaptive teacher requires $\Theta(|r_{2}|)$ examples in the worst case.
\end{lemma}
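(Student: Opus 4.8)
The plan is to prove the two directions separately: an \bigO{\log|r_2|} construction for an adaptive teacher, and an \bigOmega{|r_2|} lower bound against \emph{every} non-adaptive teacher, then supply the matching bounds on the other side so that both quantities are $\Theta$. Throughout I would use the key feature of the \tworec preference (spelled out in \extversion): a negative example placed strictly inside the current second rectangle forces the learner to retract exactly one of its four edges, and the four single-edge retractions are \emph{tied} in preference, so in the worst case the learner may keep any one of the four resulting sub-rectangles. I also use that the learner prefers to remain in $\Hypotheses^2$, hence it abandons the second rectangle (landing on $\hstar=r_1\in\Hypotheses^1$) only once that rectangle has been whittled down to nothing. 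I would state these two properties as an explicit sub-lemma before invoking them.

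\textbf{Adaptive upper bound.} First I would show an adaptive teacher can shrink $r_2$ geometrically. At each step it observes the learner's current second rectangle and plays a single negative example at its center. Whichever of the (at most four) tied edge-retractions the learner performs, it discards a row- or column-strip through the center and therefore at least half of the remaining cells; so the area of the second rectangle at least halves per example. After \bigO{\log|r_2|} examples it is a single cell, and one further example removes it; since none of these examples ever touched $r_1$, the learner's surviving rectangle is $r_1=\hstar$. This yields the \bigO{\log|r_2|} bound. (The matching \bigOmega{\log|r_2|} bound for \emph{any} teacher follows since each example removes at most half the current rectangle — the larger of the two opposite strips always retains at least half — so the area cannot reach $1$ in fewer than $\log_2|r_2|$ steps.)

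\textbf{Non-adaptive lower bound (the main obstacle).} The crux is that a non-adaptive teacher cannot binary-search, since it must commit to its examples before learning which strip the learner keeps. I would formalize this with a potential over the \emph{alive set} $\mathcal{A}_t$, defined as the collection of all current second rectangles consistent with some adversarial tie-breaking up to time $t$, starting from $\mathcal{A}_0=\{r_2\}$. Let $\Lambda_t:=\bigl|\bigcup_{R\in\mathcal{A}_t}R\bigr|$ be the number of grid cells in the union of alive rectangles, so $\Lambda_0=|r_2|$. The key claim is $\Lambda_t\ge\Lambda_{t-1}-1$: when the teacher plays a cell $c$, each alive rectangle containing $c$ splits into its (at most four) children, and one verifies that the union of those children equals the parent minus the single cell $c$, while rectangles not containing $c$ are unchanged; hence the union loses at most the one cell $c$ per example, and nothing at all for examples that shrink no alive rectangle (positive examples, or cells outside every alive rectangle). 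Since teaching succeeds only if \emph{every} alive rectangle is emptied — otherwise some tie-breaking leaves the learner in $\Hypotheses^2$ — we need $\Lambda_m=0$, forcing $m\ge|r_2|$. The matching \bigO{|r_2|} upper bound is the trivial cell-by-cell scan of $r_2$, which empties it under any tie-breaking.

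\textbf{Expected difficulty and caveats.} The delicate step is the union identity ``the children cover the parent minus the hit cell'' in all boundary and corner positions of $c$ relative to an alive rectangle, together with the observation that although alive rectangles arising from \emph{different} plays may overlap, the union still loses exactly one cell per example — this is precisely what rescues the clean one-dimensional counting in the genuinely two-dimensional setting. I would also argue carefully that no non-shrinking example (positive, or outside the current rectangle) can make net progress, so that counting single-cell removals is without loss of generality. Both directions ultimately rest on the edge-count ties and the stay-in-$\Hypotheses^2$ bias of the \tworec preference from \extversion, which is why I would isolate those two properties first and invoke them as a black box here.
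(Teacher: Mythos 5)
Your proposal is correct and follows essentially the same route as the paper's own proof: an adaptive teacher that binary-searches by placing negative examples at the center of the learner's current second rectangle (exploiting the tied single-edge retractions), and a non-adaptive lower bound arguing that a teacher who never observes the learner's intermediate hypotheses must in the worst case cover every cell of $r_2$. If anything, your write-up is more rigorous and more general than the paper's, which restricts attention to a simplified configuration where $r_2$ is a single line of cells and justifies the non-adaptive $\Theta(|r_2|)$ bound only by an illustration of the possible learner states, whereas your alive-set potential $\Lambda_t$, together with the children-union identity showing that each example removes at most one cell from the union of reachable second rectangles, turns that assertion into a proof valid for arbitrary two-dimensional $r_2$.
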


\begin{figure*}[!h]
  \centering
  \begin{subfigure}{\textwidth}
    \centering
    \includegraphics[width=.25\textwidth]{./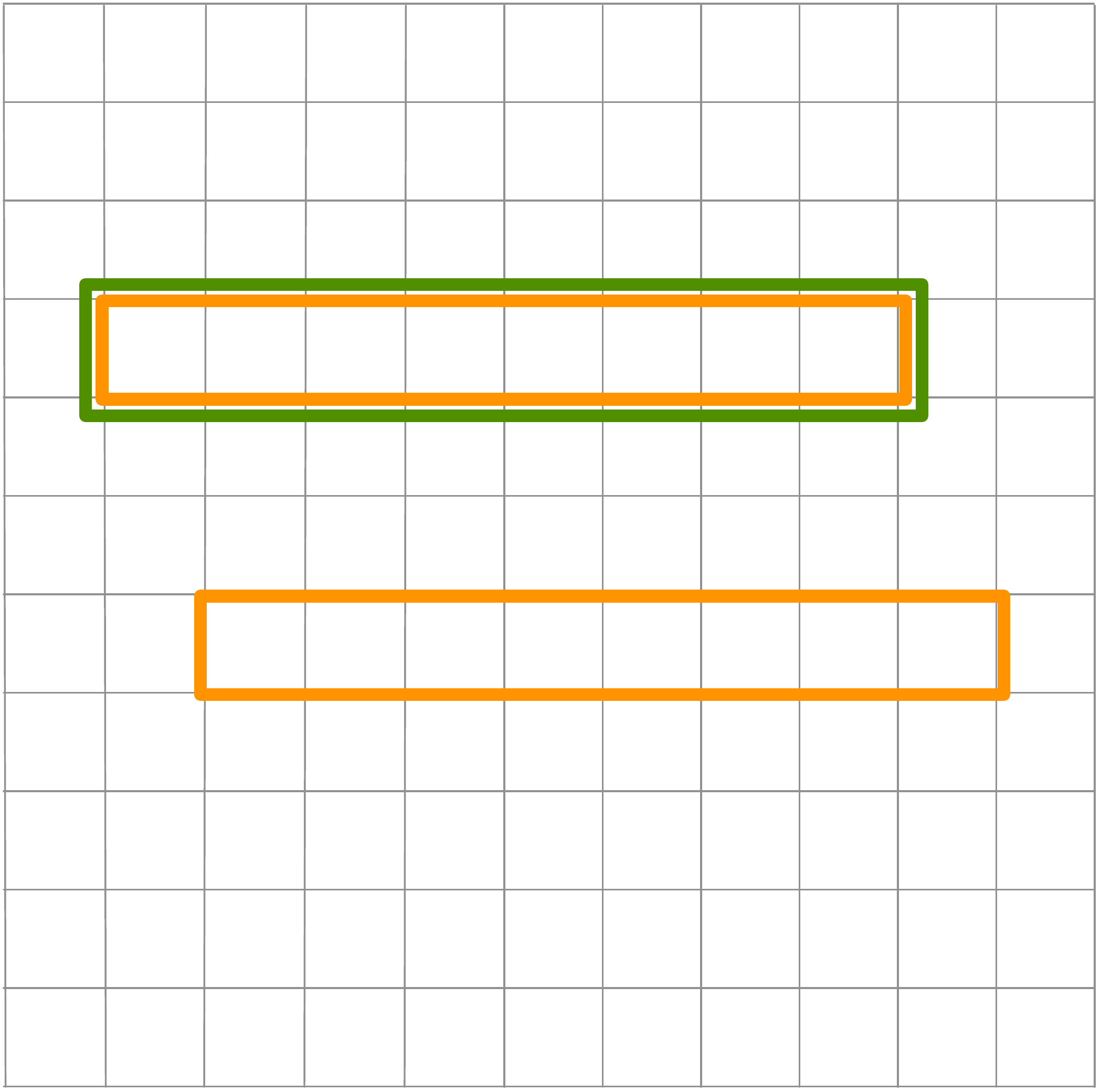}
    \caption{Initial configuration. $\hinit=(r_1, r_2)$ is represented by the orange rectangles, and $\hstar=r_1$ is represented by the green rectangle.}
    \label{fig:illustration:adaptivetworec:proof:init}
  \end{subfigure}
  \\
  \vspace{10pt}
  \begin{subfigure}{\textwidth}
    \centering
    \includegraphics[width=.25\textwidth]{./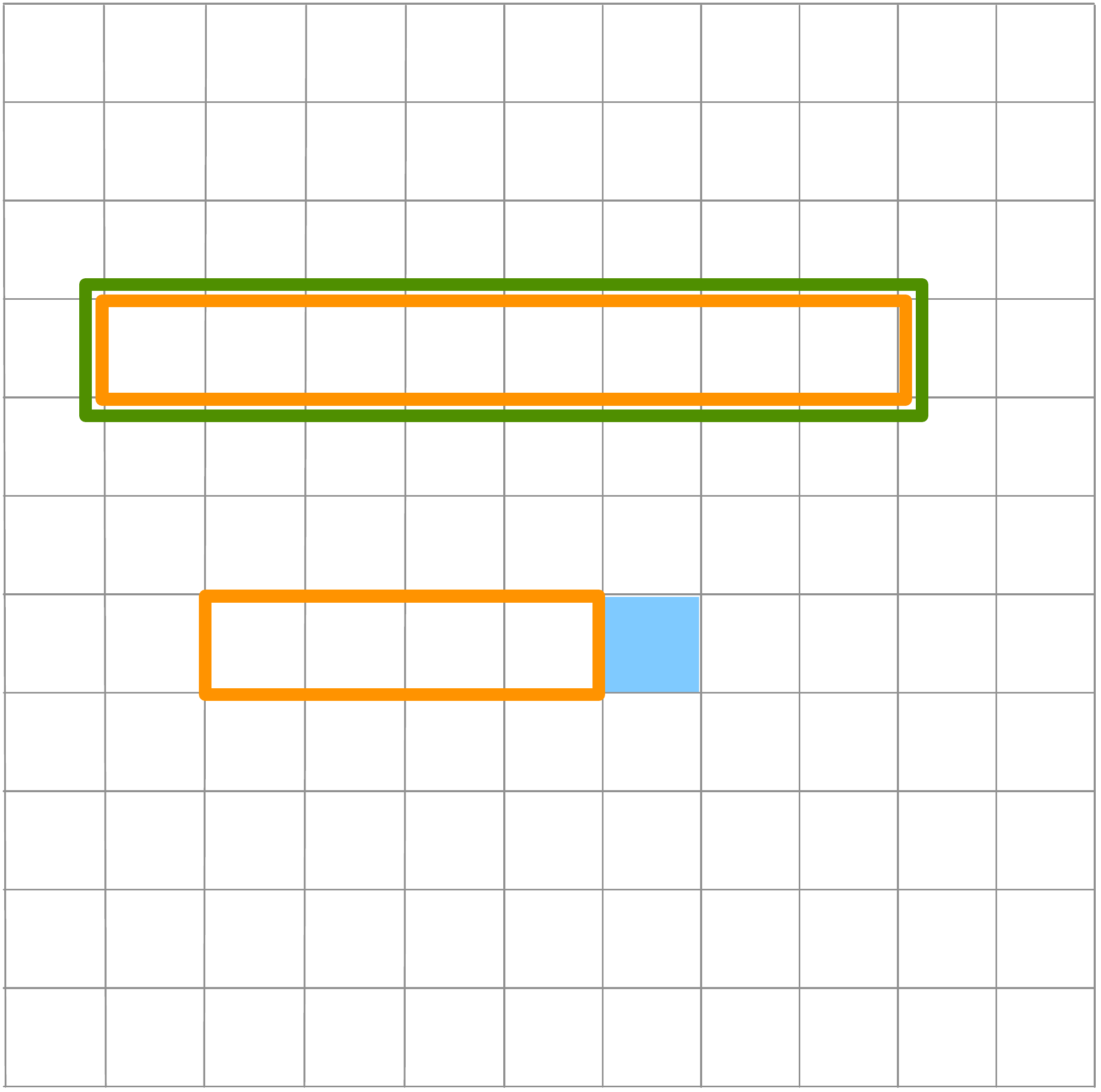}
    \qquad
    \includegraphics[width=.25\textwidth]{./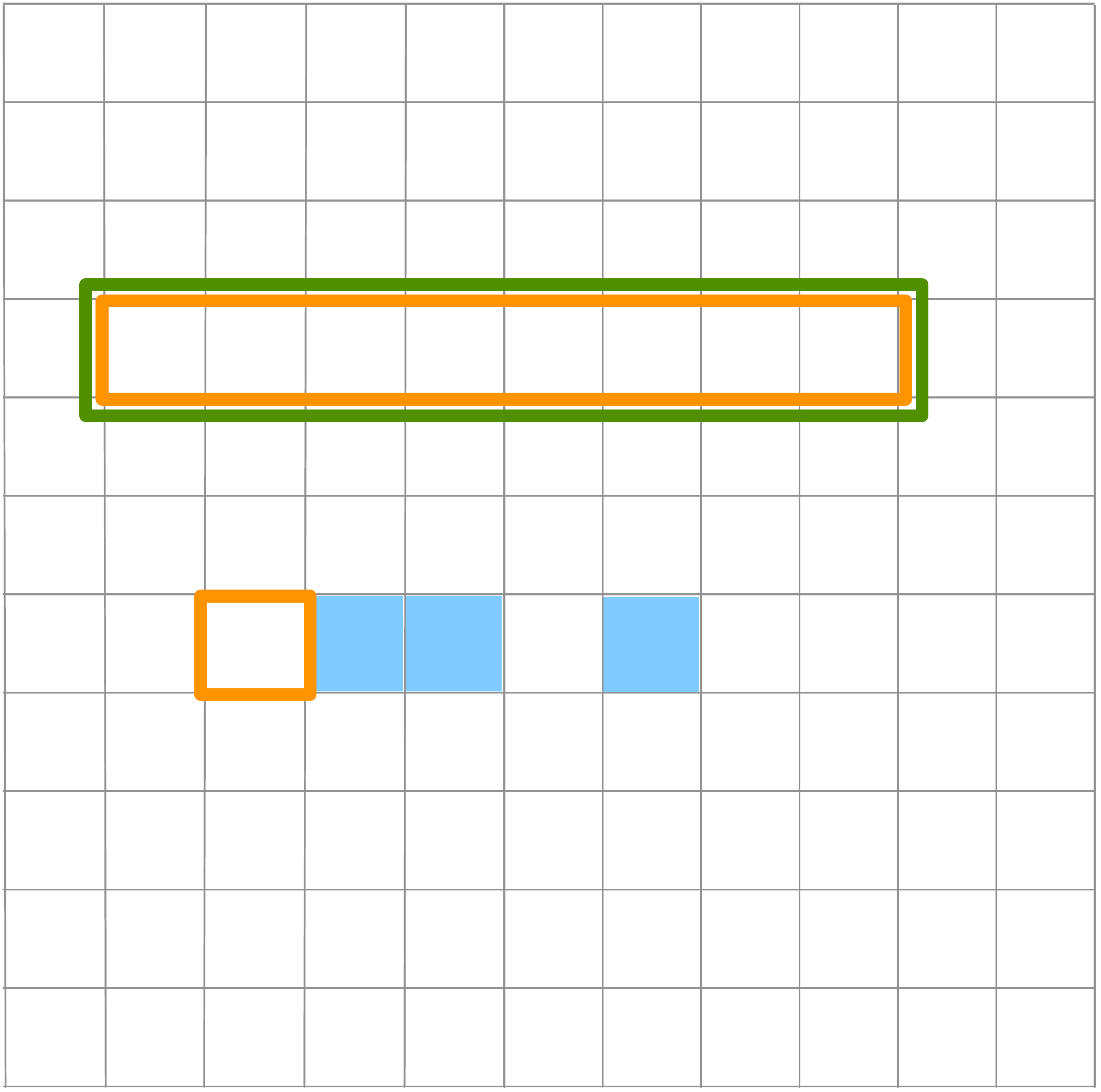}
    \qquad
    \includegraphics[width=.25\textwidth]{./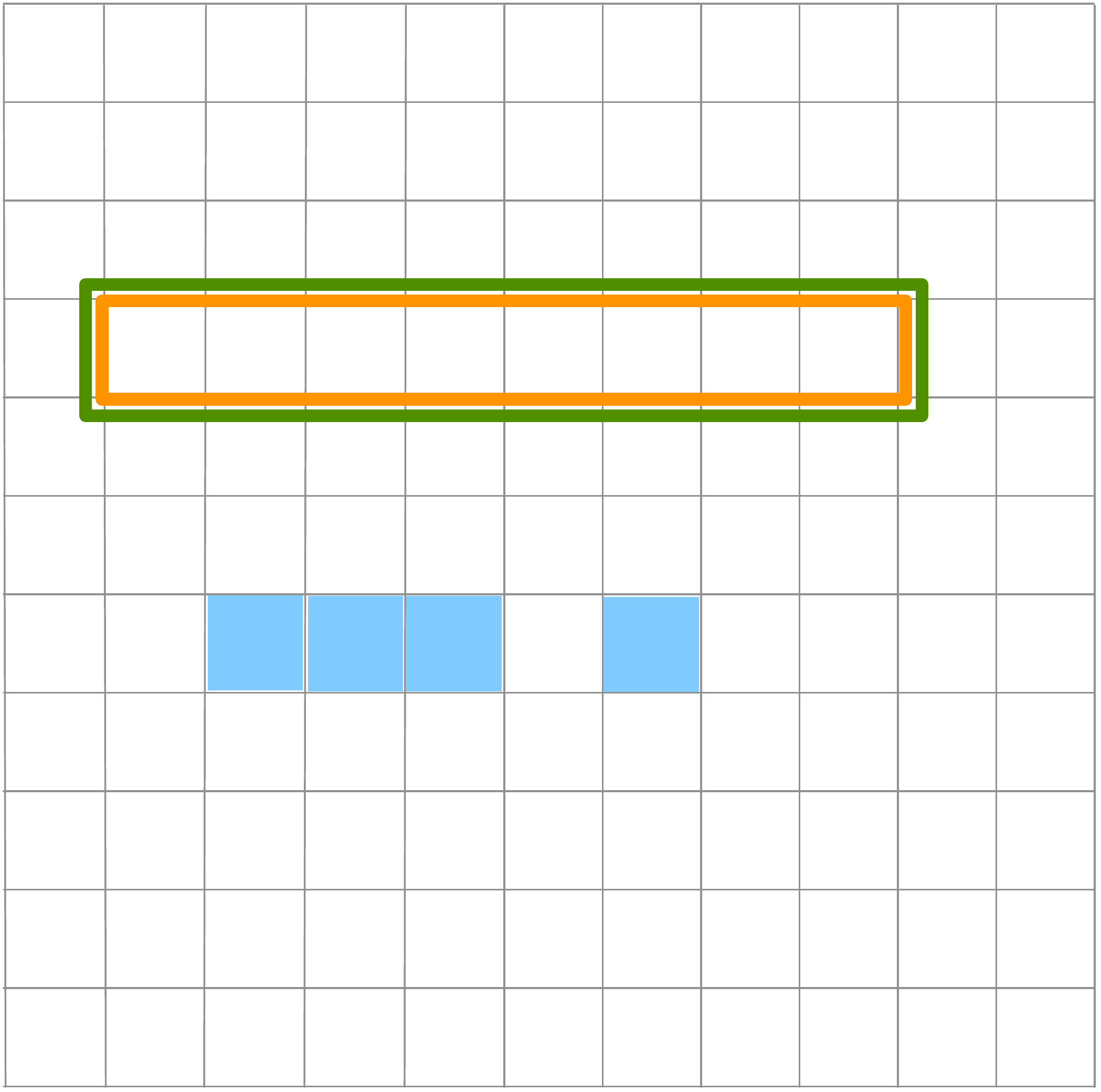}
    \caption{Illustration of the adaptive teaching sequence. It follows a binary search procedure to remove $r_2$.}
    \label{fig:illustration:adar:proof:thm2}
  \end{subfigure}
  \\
  \vspace{10pt}
  \begin{subfigure}{\textwidth}
    \centering
    \includegraphics[width=.25\textwidth]{./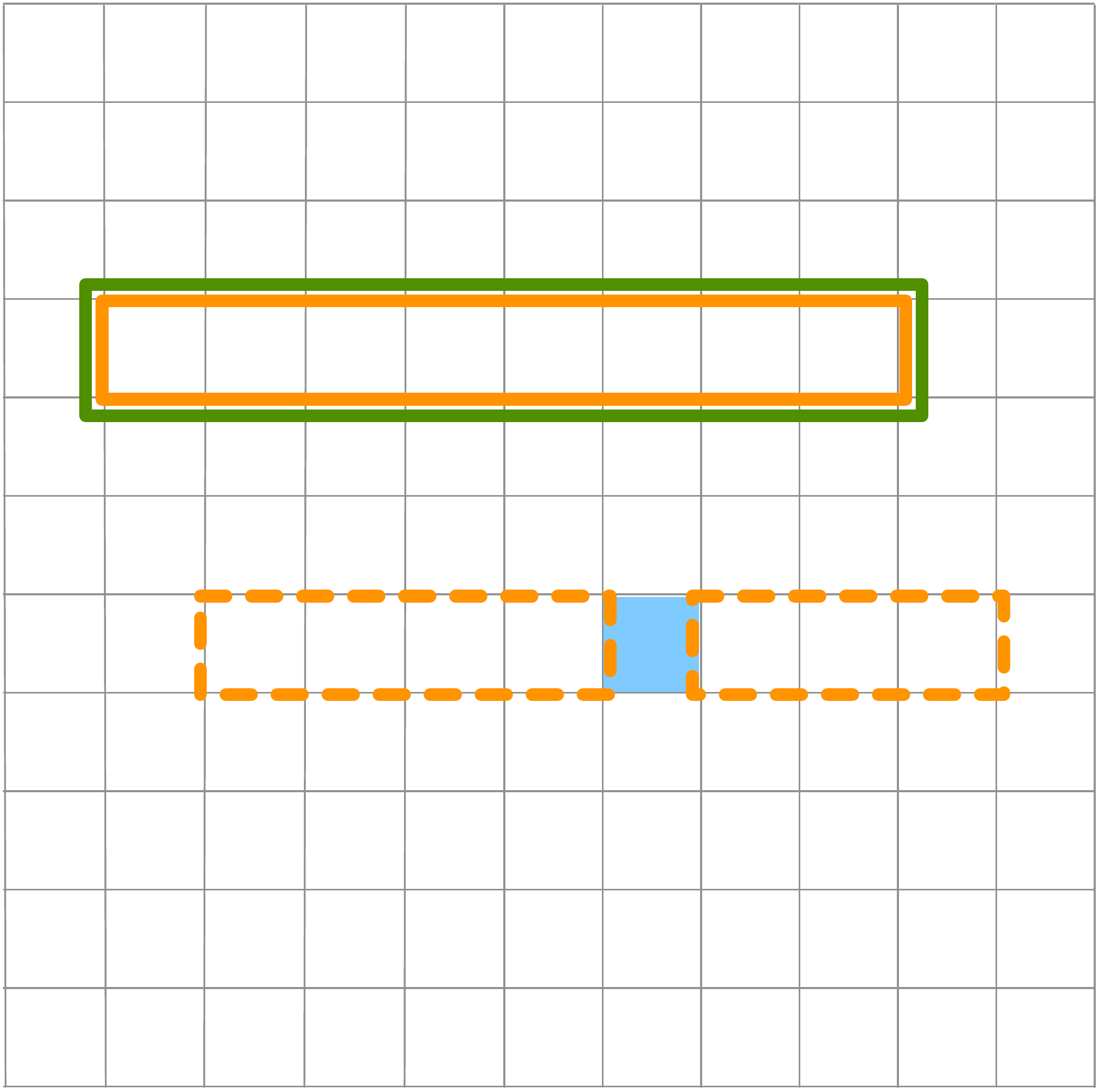}
    \qquad
    \includegraphics[width=.25\textwidth]{./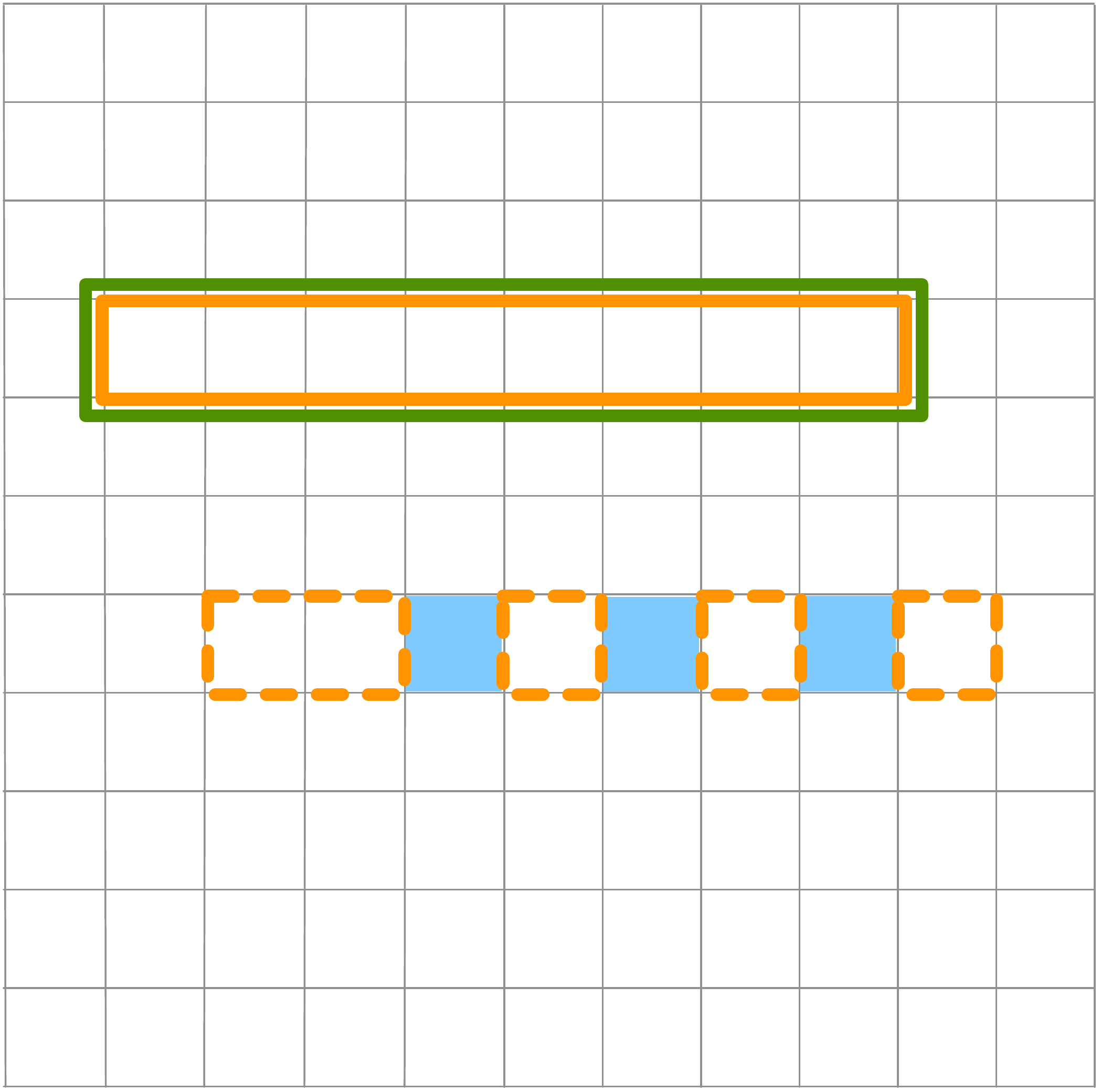}
    \qquad
    \includegraphics[width=.25\textwidth]{./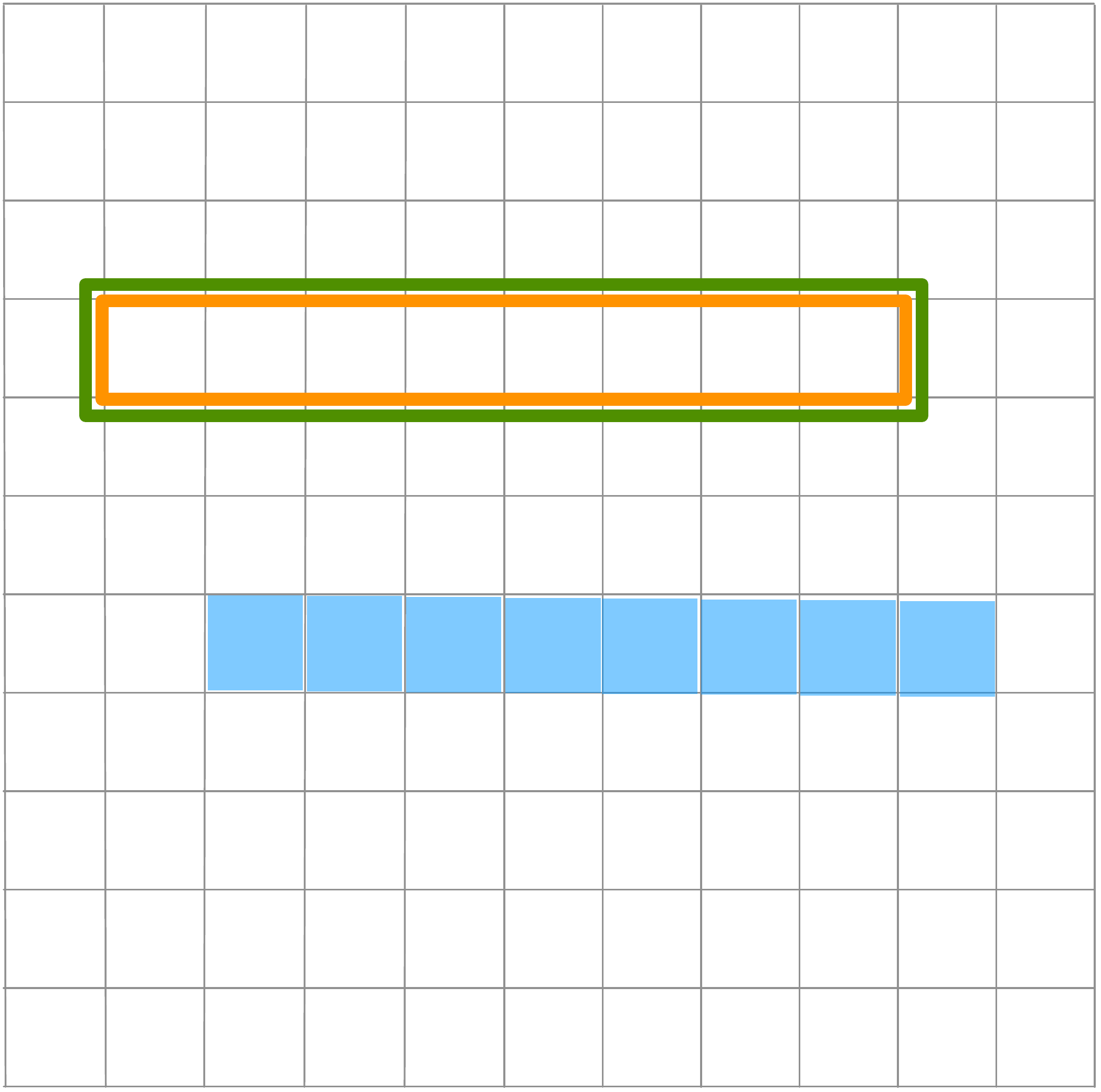}
    \caption{Non-adaptive teaching sequence. The non-adaptive algorithm does not observe the learner's intermediate hypotheses. Here, the dashed orange rectangles represent some of the possible locations of the second rectangles, when the learner is provided with the teaching examples represented by the solid blue grid cells. In the worst case, the teacher has to provide all the negative examples inside $r_2$ to remove it.}
    \label{fig:illustration:nonr:proof:thm2}
  \end{subfigure}
  \caption{Illustration for the \tworec class. 
  }
  \label{fig:illustration:adaptivetworec:proof}
\end{figure*}

\begin{proof}
  Note that $\hinit=(r_1, r_2)$ consists of a rectangle $\hstar=r_1$. In this case, we will focus on designing an adaptive teaching strategy that eliminates the second rectangle $r_2$ from $\hinit$.

  For simplicity let us consider the configuration as illustrated in \figref{fig:illustration:adaptivetworec:proof:init}, where $|r_1|=|r_2|$, and $r_2$ contains grid cells that lie in a line. We consider a binary search procedure under the adaptive setting: at every iteration, the teacher picks the grid cell that is the closest to the center of the second rectangle as the teaching example (cf. \figref{fig:illustration:adar:proof:thm2}). Such procedure requires $\Theta(\log|r_{2}|)$ teaching examples.

  Under the non-adaptive setting, since the teacher does not observe the learner's intermediate hypotheses, it has to perform a linear search (\figref{fig:illustration:nonr:proof:thm2}), which requires $\Theta(|r_{2}|)$ examples.
\end{proof}

We further prove the following lemma for teaching the \lattice class.
\begin{lemma}\label{lm:lattice}
  Let $a, b$ be integers in $(0,n-1)$, and assume $a<b$. Consider a 2-dimensional integer lattice of length $n$, with $\hinit=(a, a)$ and $\hstar=(b, b)$. We assume that the learner prefers close-by hypotheses measured by the Manhattan distance, and in the case of ties the learner prefers hypotheses with larger coordinates (cf. \figref{fig:lattice:illustration:adal:proof}). There exists an adaptive teacher that requires at most $3(b-a)$ examples to teach $\hstar$ from $\hinit$, while any non-adaptive teacher requires at least $4(b-a)$ examples to teach $\hstar$.
\end{lemma}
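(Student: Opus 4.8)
The plan is to prove the two halves of the lemma separately: an explicit adaptive strategy meeting the $3(b-a)$ upper bound, and an adversarial argument showing every non-adaptive teacher needs at least $4(b-a)$ examples. First I would record the two mechanical facts that drive everything. Since we teach with positive-only examples and the example at node $u$ carries label $0$ only when $u$ is the target, a legal positive example placed at a node $u \ne (b,b)$ deletes exactly the single hypothesis $\hypothesis_u$ from the version space and keeps all others. Consequently: (a) the learner moves only when the teacher deletes the node she currently occupies, since otherwise her current node is at distance $0$ and is strictly preferred; and (b) when her node is deleted she jumps to a nearest surviving node, and because the ``larger-coordinate'' preference makes the up-neighbour $(x,y{+}1)$ and right-neighbour $(x{+}1,y)$ dominate the down/left neighbours, every forced move raises one coordinate while both coordinates are below $b$, i.e.\ it strictly decreases the $L1$ distance to $(b,b)$.

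For the adaptive upper bound I would give the following two-phase strategy. While the observed hypothesis $(x,y)$ has $x<b$ and $y<b$, the teacher simply deletes the current node; the learner is pushed to $(x{+}1,y)$ or $(x,y{+}1)$, whichever she selects, at a cost of one example per unit of $L1$ progress. Once a coordinate reaches $b$, say the learner sits at $(b,y)$, the teacher first deletes the overshoot node $(b{+}1,y)$ and then deletes $(b,y)$; with the right-neighbour gone the only maximal surviving neighbour is $(b,y{+}1)$, so the learner is locked onto the boundary and marches to $(b,b)$ at two examples per step. Because she never leaves the boundary once she reaches it, the total $L1$ path length is exactly $2(b-a)$ and at most $b-a$ of those steps are boundary steps, giving cost at most $(b-a)+2(b-a)=3(b-a)$. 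Observing the true current hypothesis is precisely what lets the adaptive teacher block only the single boundary the learner actually rides.

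For the non-adaptive lower bound I would argue that one fixed sequence must simultaneously (i) delete every node on a monotone $(a,a)\to(b,b)$ path and (ii) delete every overshoot node on \emph{both} boundaries. For (i), the teacher must in particular defeat the adversary that always moves up/right; this yields a monotone trajectory inside $[a,b]^2$ whose $2(b-a)$ non-terminal nodes must each be deleted (each deletion is what moves the learner off it), contributing $2(b-a)$ deleted nodes, all inside $[a,b]^2$. For (ii), I would first establish that overshoot is irreversible: from a node with $x=b{+}1$ the maximal neighbours are up and right, and even after those are blocked the leftward move $(b,y)$ and the downward move $(b{+}1,y{-}1)$ are incomparable, so the adversary can always refuse the leftward step and keep the learner in column $b{+}1$; hence a learner pushed into column $b{+}1$ (or row $b{+}1$) can never be forced back to $(b,b)$. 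It follows that, to win, the sequence must delete every $(b{+}1,y)$ with $a\le y\le b{-}1$ and every $(x,b{+}1)$ with $a\le x\le b{-}1$, for otherwise the adversary routes the learner onto the relevant boundary and escapes through the unblocked gap. These $2(b-a)$ overshoot nodes all have a coordinate equal to $b{+}1$, so they lie outside $[a,b]^2$ and are disjoint from the path nodes of (i); together this forces at least $4(b-a)$ deletions, hence at least $4(b-a)$ examples.

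The main obstacle is the routing claim inside (ii): to conclude that an unblocked overshoot node $(b{+}1,y_0)$ is genuinely exploitable I must exhibit an adversarial tie-breaking behaviour that drives the learner to $(b,y_0)$ while $(b{+}1,y_0)$ is still present, using only the teacher's fixed, unobserved deletion schedule. I would handle this by maintaining, as an invariant along the schedule, a ``reachable boundary segment'' of positions into which the adversary can steer the learner, and showing that any gap in the blocking of a boundary necessarily lies inside this reachable segment; the irreversibility of overshoot then closes the contradiction. The remaining pieces — the $L1$ progress bookkeeping for the adaptive strategy and the disjointness count for the lower bound — are routine.
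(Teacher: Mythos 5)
Your adaptive upper bound is correct and is essentially the paper's own argument (block the single overshoot neighbour only once a coordinate has reached $b$, giving $2(b-a)$ deletions along the path plus at most $b-a$ blocks). The problem is in your non-adaptive lower bound: the necessity claim in step (ii) is false. A successful non-adaptive sequence does \emph{not} have to delete the overshoot nodes on both boundary lines. Concretely, consider the teacher that drives the learner right along the bottom row and then up the column $x=b$, killing every tie just before it arises: the sequence $(a,a{+}1),(a,a),(a{+}1,a{+}1),(a{+}1,a),\dots,(b{-}1,a{+}1),(b{-}1,a)$, followed by $(b{+}1,a),(b,a),(b{+}1,a{+}1),(b,a{+}1),\dots,(b{+}1,b{-}1),(b,b{-}1)$. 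Every move of the learner is forced (at each deletion of her current node there is a \emph{unique} surviving nearest node of maximal coordinate sum), so this works against every tie-breaking adversary, uses exactly $4(b-a)$ examples, and never places a single example on the row $y=b{+}1$. Hence your claim that all $2(b-a)$ overshoot nodes $\{(b{+}1,y)\}\cup\{(x,b{+}1)\}$ must be deleted is contradicted, and with it the disjointness count ``$2(b-a)$ nodes inside $[a,b]^2$ plus $2(b-a)$ nodes outside'' that your bound rests on. The underlying reason is exactly the obstacle you flagged: your ``reachable boundary segment'' invariant does not hold, because a teacher who eliminates ties as it goes leaves the adversary with no routing freedom at all, so an unblocked gap on a boundary the trajectory never approaches is not exploitable.

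The paper's accounting avoids this by charging the extra deletions per \emph{step} rather than per boundary node: to make the learner's trajectory deterministic, at every one of the $2(b-a)$ steps of the (necessarily monotone) path the teacher must have already deleted the tied alternative among the up/right neighbours of the current node, and one checks these tie-killing blocks are pairwise distinct and distinct from the path nodes. Crucially, the blocks may lie \emph{inside} $[a,b]^2$ (in the counterexample above, they are the nodes $(x,a{+}1)$ sitting directly above the bottom-row segment), which is precisely what your geometric classification misses. A secondary issue: your irreversibility claim (``a learner pushed into column $b{+}1$ can never be forced back to $(b,b)$'') is too strong as stated, since the teacher can always recover by deleting every remaining hypothesis except $\hstar$; what is true, and what a repaired argument would need, is that recovery costs strictly more examples than the bound being proved.
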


\begin{figure*}[!h]
  \centering
  \begin{subfigure}{\textwidth}
    \centering
    \includegraphics[width=.25\textwidth]{./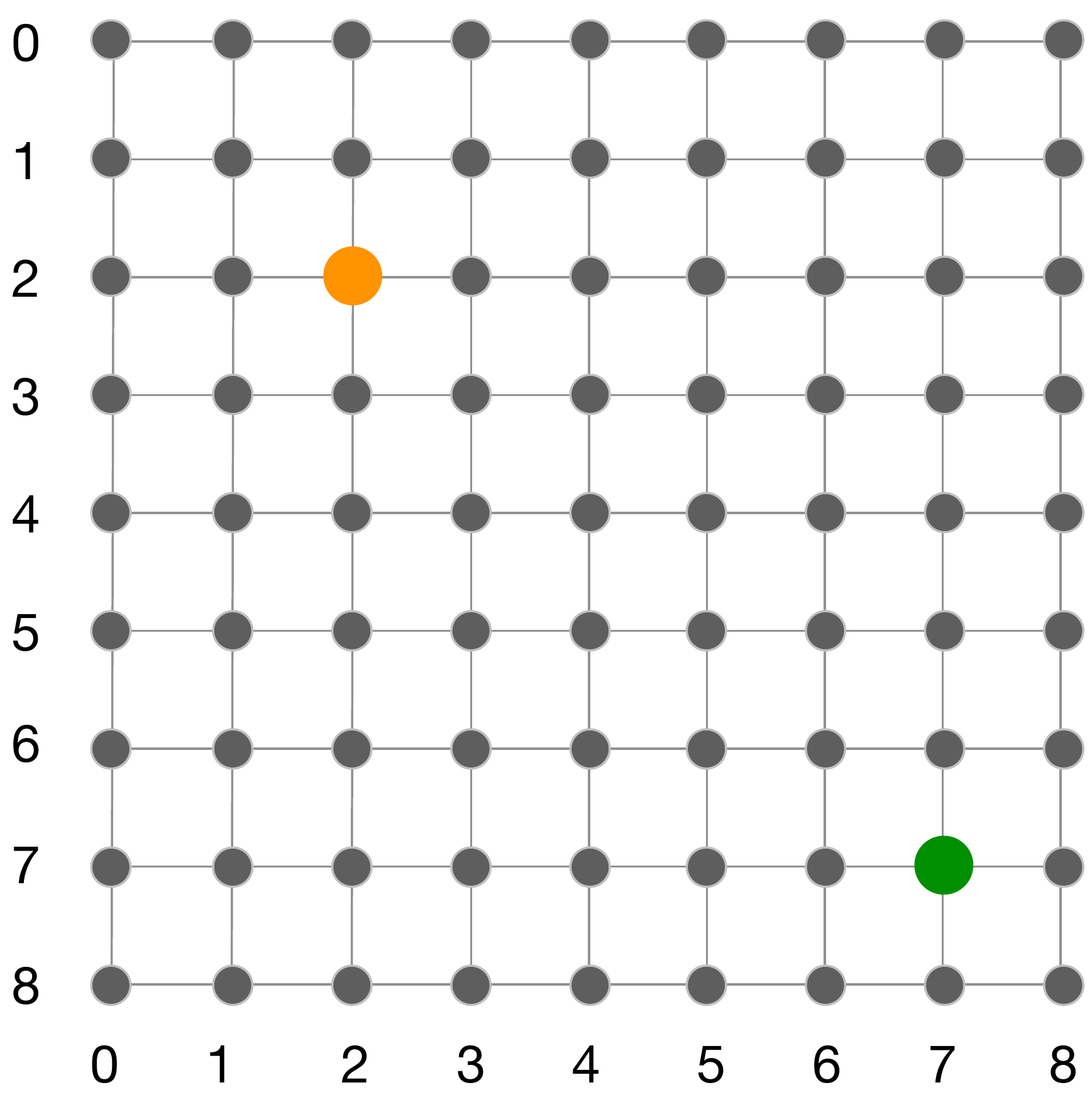}
    \caption{Initial configuration. $\hinit=(2,2)$, $\hstar=(7,7)$, $n=9$.}
    \label{fig:lattice:illustration:adal:init}
  \end{subfigure} \\ \vspace{10pt}
  \begin{subfigure}{\textwidth}
    \centering
    \includegraphics[width=.25\textwidth]{./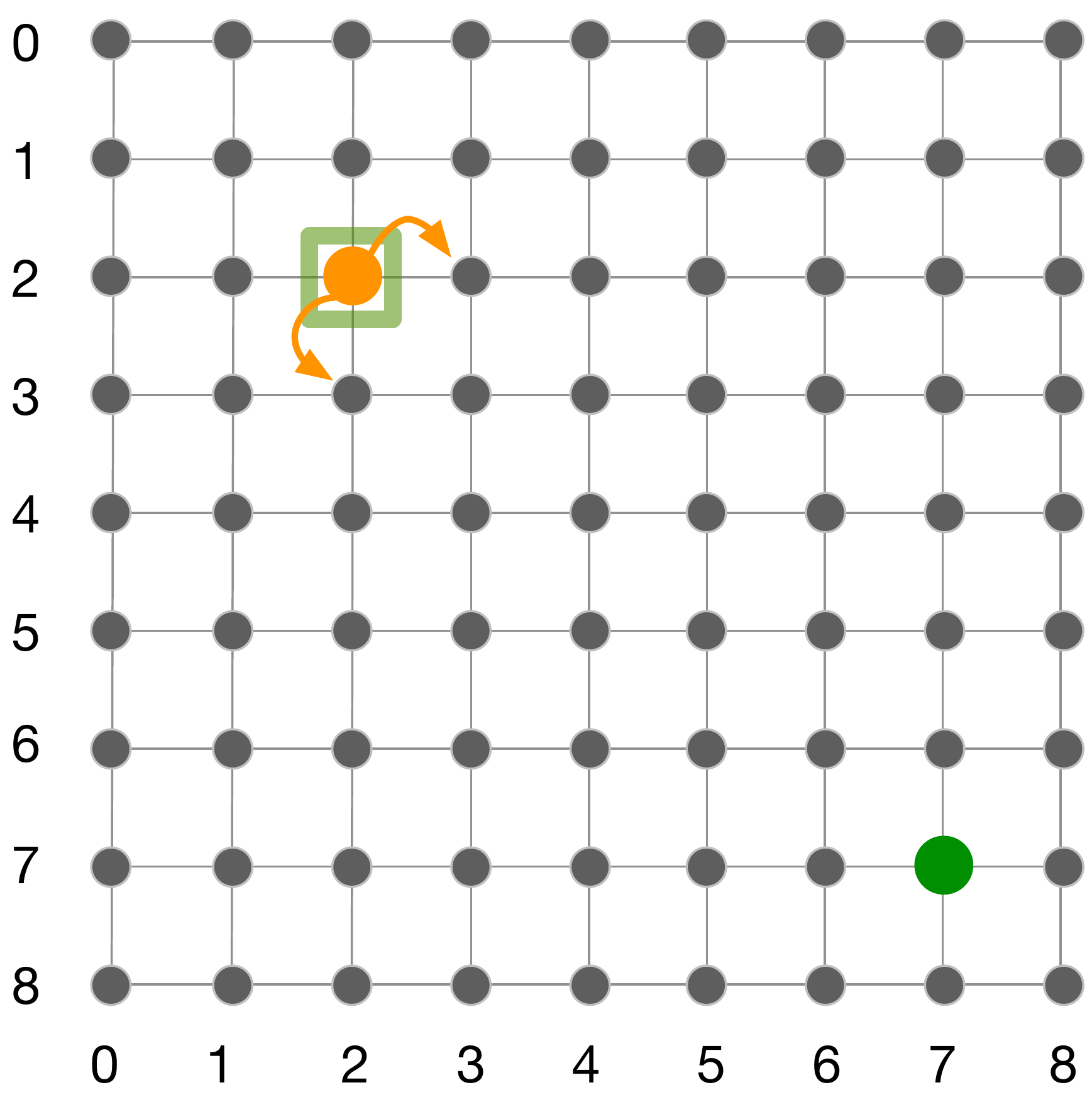}
    \qquad
    \includegraphics[width=.25\textwidth]{./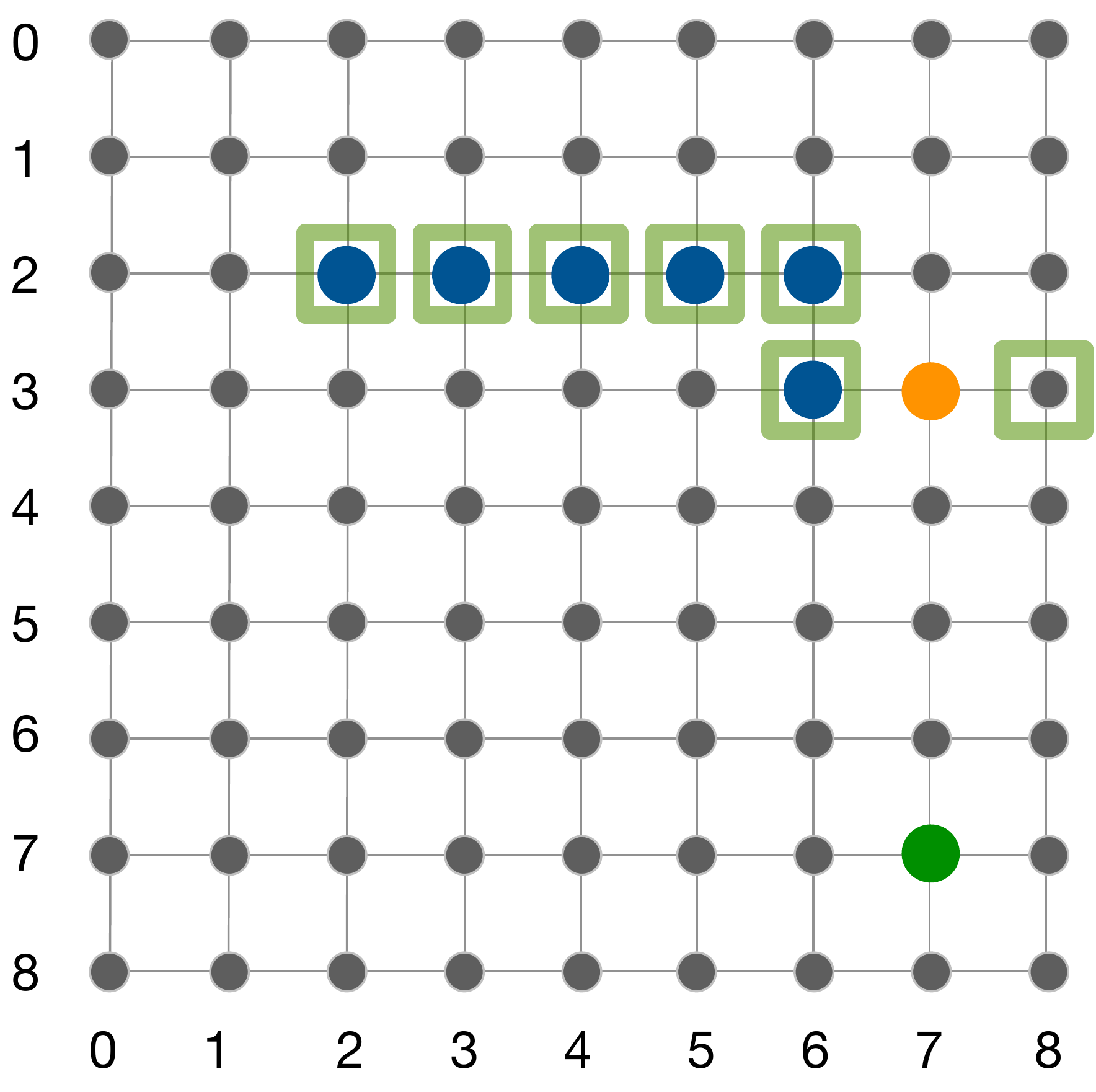}
    \qquad
    \includegraphics[width=.25\textwidth]{./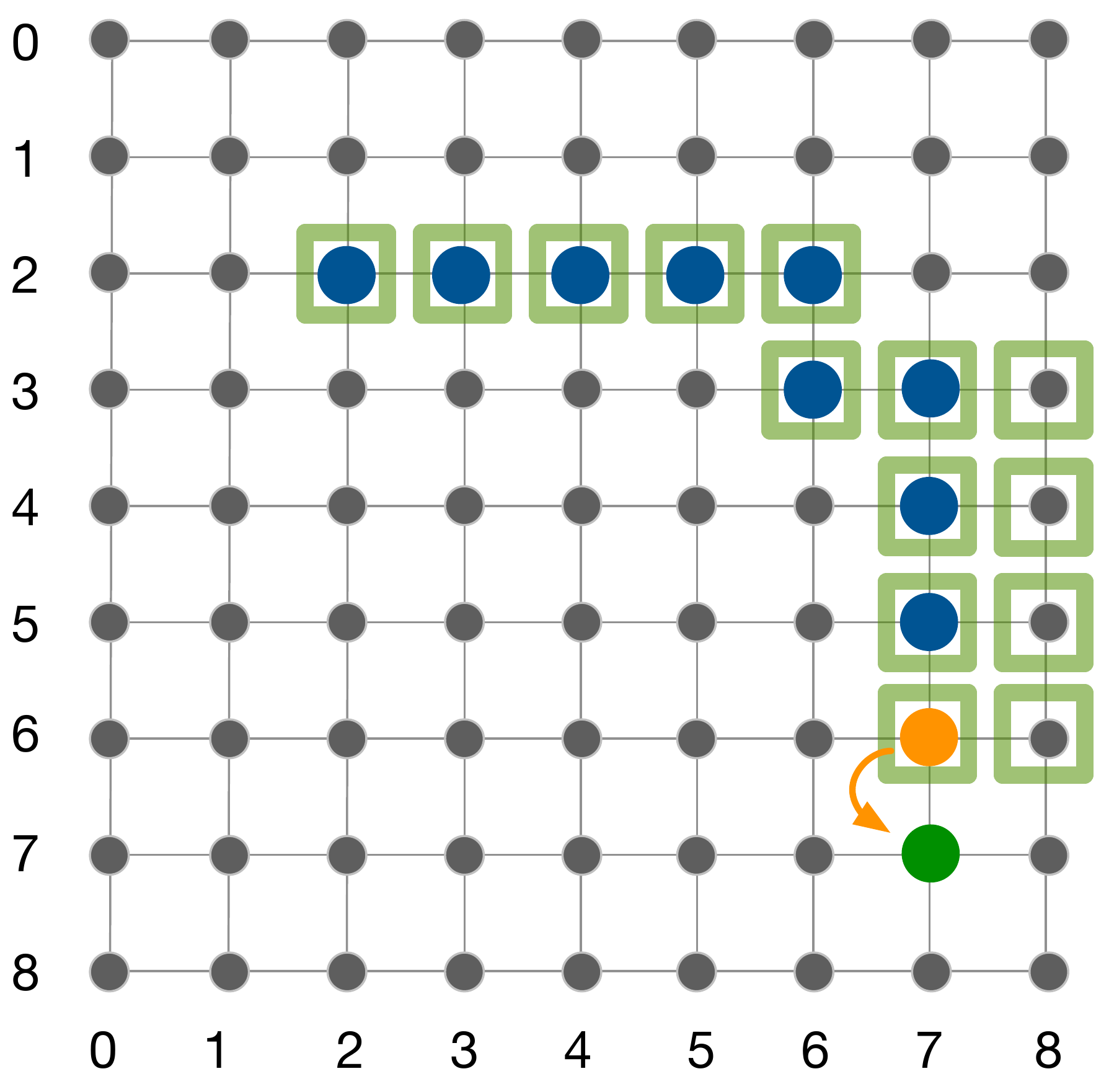}
    \caption{Illustration of the adaptive teaching sequence. In the left figure, when provided with teaching example $(2,2)$ (represented by the solid green square), the learner prefers nodes with larger coordinates. In the middle figure, the learner is at $(7,3)$. The adaptive teacher blocks the lattice node $(8,3)$, so that after providing the teaching example $(7,3)$ the learner will not move further away from the target $\hstar=(7,7)$. In the right figure, we show the set of examples provided by the adaptive teacher. In these figures, the solid blue dots represent the path taken by the learner.}
    \label{fig:lattice:illustration:adal:proof}
  \end{subfigure} \\ \vspace{10pt}
  \begin{subfigure}{\textwidth}
    \centering
    \includegraphics[width=.25\textwidth]{./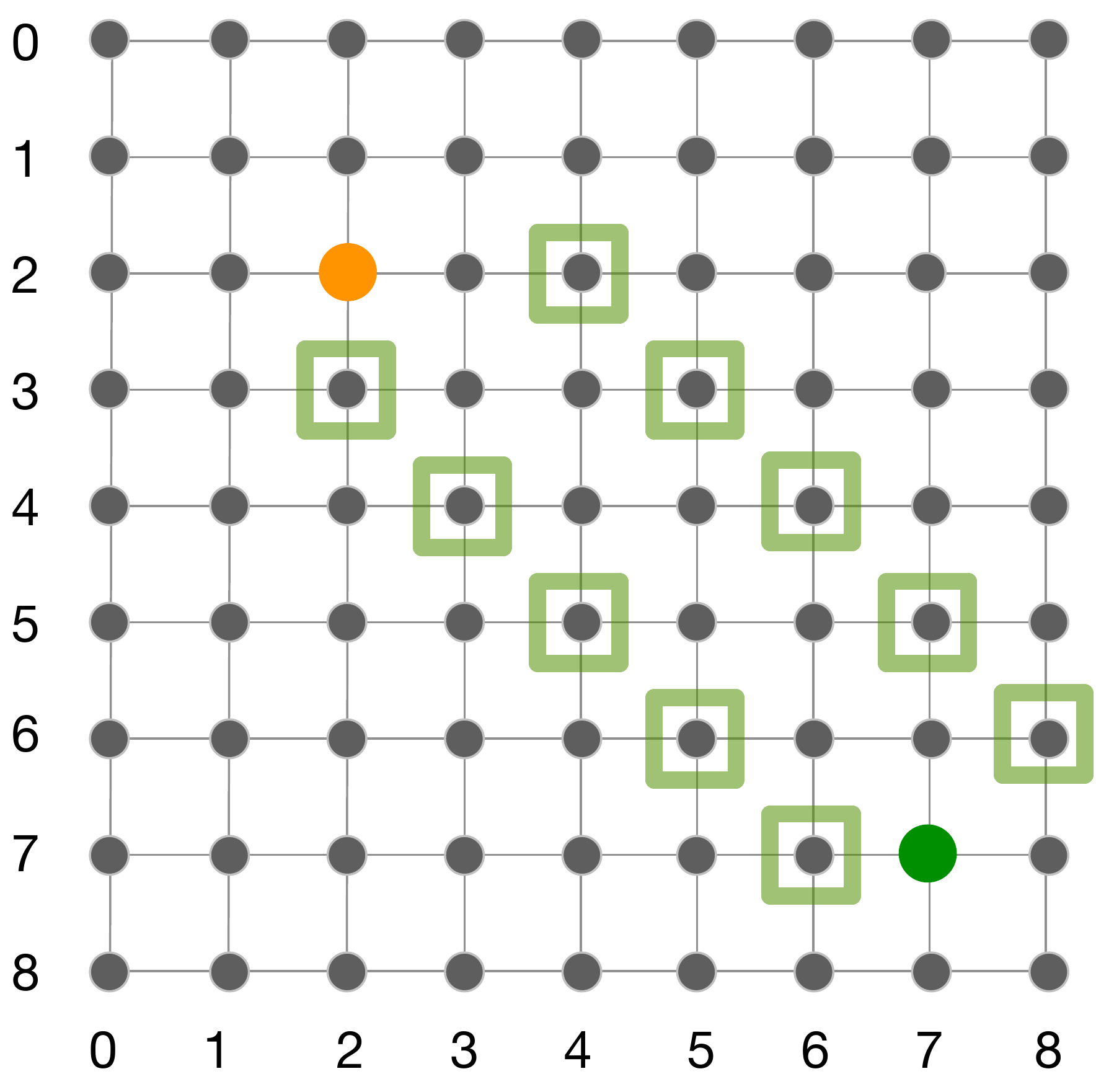}
    \qquad
    \includegraphics[width=.25\textwidth]{./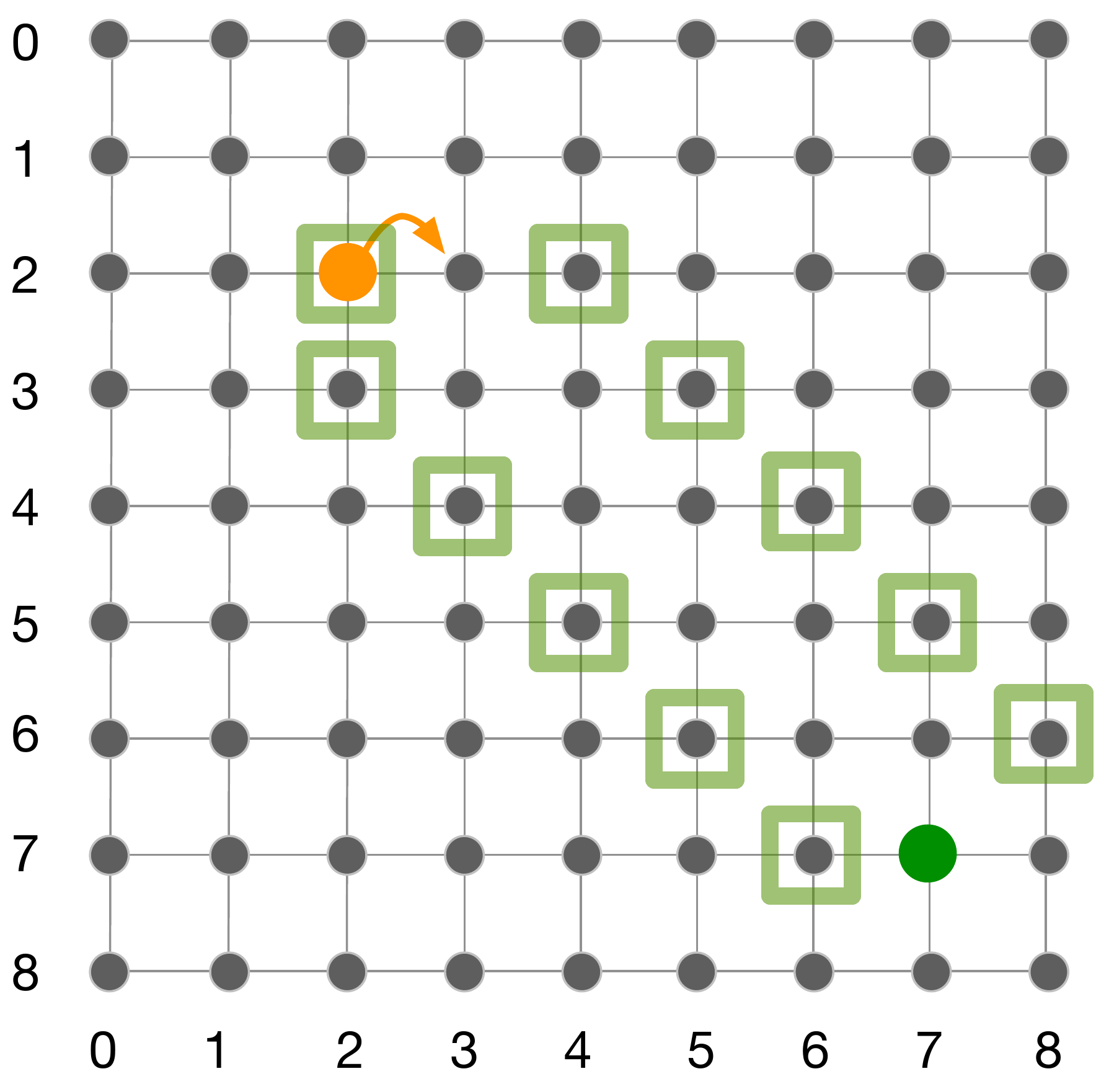}
    \qquad
    \includegraphics[width=.25\textwidth]{./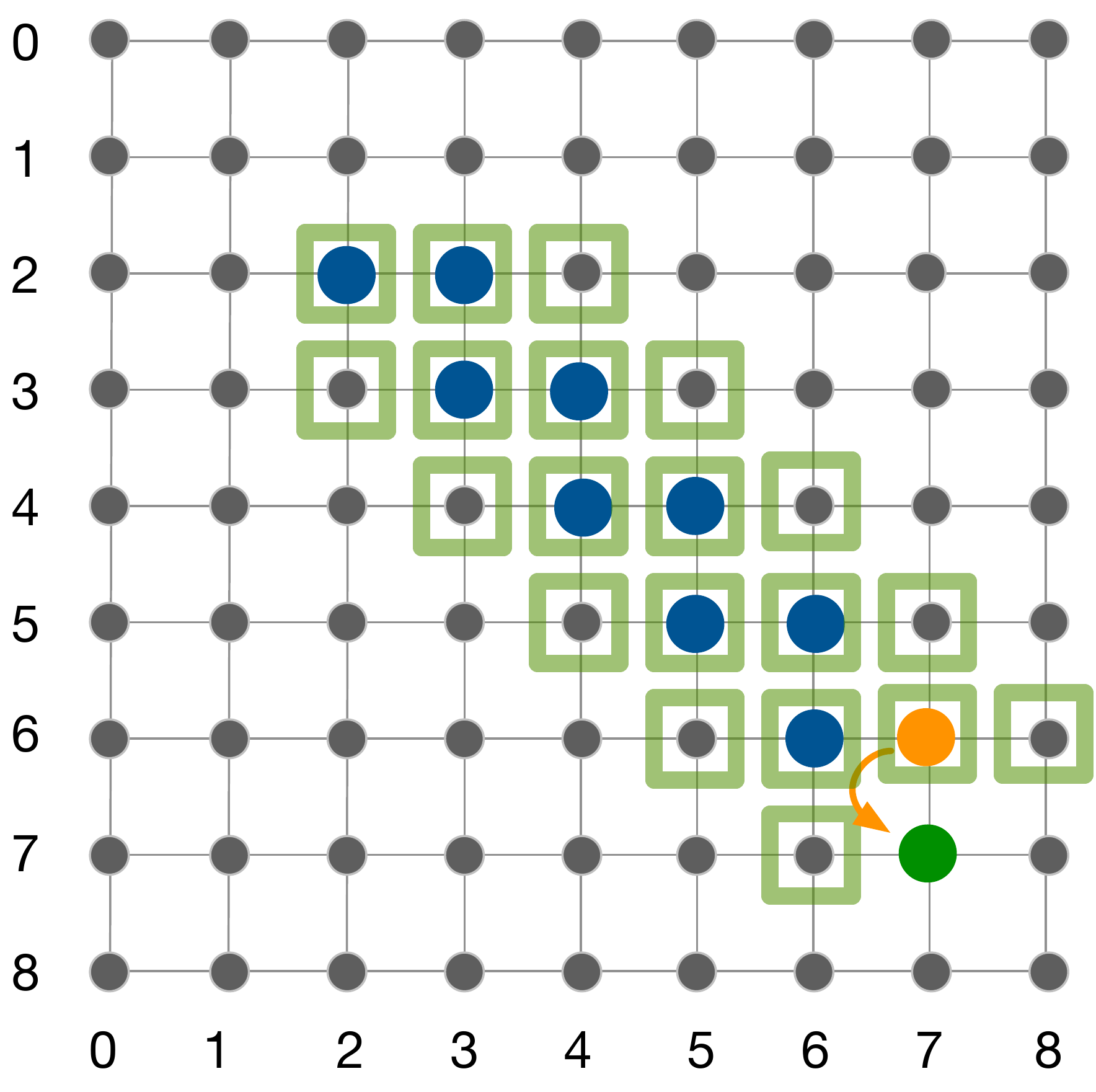}
    \caption{Non-adaptive teaching sequence. The non-adaptive teacher first constructs a deterministic path from $\hinit$ to $\hstar$, and then sequentially steers the learner along the path (represented by the solid blue dots) towards $\hstar$.}
    \label{fig:lattice:illustration:nonl:proof}
  \end{subfigure}
  \caption{Illustration for the \lattice class.}
  \label{fig:illustration:adaptivelattice}
\end{figure*}

\begin{proof}
  We consider the example as illustrated in \figref{fig:lattice:illustration:adal:init}, with $a=2, b=7$ and  $n=9$. That is, $\hinit=(2,2)$ and $\hstar=(7,7)$ on a $9\times 9$ integer lattice.
  
  Under the adaptive setting, we consider the following strategy of constructing a teaching sequence: at each iteration, before removing the learner's current hypothesis, the teacher first tries to ``block'' the next immediate hypotheses (i.e., the most preferred neighboring lattice nodes) which are further away from $\hstar$. Let us refer to such hypotheses as ``bad neighbors''. Since by assumption the learner prefers to move to hypotheses with larger coordinates, there are no bad ``bad neighbors'' until the learner reaches the nodes with coordinates $(7, \cdot)$ or $(\cdot, 7)$. Therefore, the number of teaching examples required by the adaptive teacher is at most
  $$2(b-a) + (b-a) = 3(b-a),$$
  where $2(b-a)$ is the length of the path between $\hinit$ and $\hstar$, and $(b-a)$ is the maximum number of ``bad neighbors'' the adaptive teacher needs to block.

  Under the non-adaptive setting, the teacher has to pre-compute a set of teaching example that ensures the learner taking a deterministic path. Compared to the adaptive strategy which only blocks ``bad'' neighbors, the non-adaptive teacher also has to block additional lattice nodes to form a deterministic path from $\hinit$ to $\hstar$. As illustrated in \figref{fig:lattice:illustration:adal:proof}, the number of teaching examples required by any non-adaptive teacher is at least
  $$2(b-a) + 2(b-a) = 4(b-a),$$
  where the first $2(b-a)$ denotes the length of the path between $\hinit$ and $\hstar$, and the second $2(b-a)$ is the number of lattice nodes the teacher has to block to construct the (deterministic) path. \end{proof}

\begin{proof}[Proof of \thmref{thm:adaptivity}]
  Combining \lemref{lm:tworec} and \lemref{lm:lattice}, we obtain the proof of \thmref{thm:adaptivity}.
\end{proof}

\clearpage
\section{Proof of \thmref{thm:greedy_suff}}\label{app:proof:greedy:suff}
In this section, 
we prove the upper bound of the greedy cost as presented in \thmref{thm:greedy_suff}.

\paragraph{Notations and formal definition of cost} Let us use $\pi$ to denote an adaptive teaching algorithm and $\randomstate$ to denote the internal randomness of the learner. 
We fix $\randomstate$, the preference $\ordering$, the target hypothesis $\hstar$, the learner's initial hypothesis $\hinit$, and the initial version space $\Hypotheses$. 
We denote the learner's hypothesis after running $\pi_t$ (i.e., running $\pi$ for $t$ steps) as $\hypothesis(\pi_t, \randomstate, \ordering, \hstar, \hinit, \Hypotheses)$. To be consistent with the notations used in the main paper, we use $h_t$ as the shorthand notation for $\hypothesis(\pi_t, \randomstate, \ordering, \hstar, \hinit, \Hypotheses)$ whenever it is unambiguous.

Given $\ordering, \hstar, \hinit$ and the version space $\Hypotheses$, the worst-case cost of an algorithm $\pi$ is formally defined as
\begin{align*}
  \cost(\pi \given \ordering, \hstar, \hinit, \Hypotheses) = \max_\randomstate \min_t t, ,~\text{s.t.,}~ \hypothesis(\pi_t, \randomstate, \ordering, \hstar, \hinit, \Hypotheses) = \hstar.
\end{align*}
We use $\optcost(\ordering, \hstar, \hinit, \Hypotheses) := \min_\pi \cost(\pi \given \ordering, \hstar, \hinit, \Hypotheses)$ to denote the cost of an optimal algorithm, and use $\greedycost(\ordering, \hstar, \hinit, \Hypotheses)$ to denote the cost of the greedy algorithm with respect to $\futurecostapprox$ (Eq.~\eqref{eq:objrank}).


\paragraph{Preferred version space} At time step $t$, we define the \emph{preferred version space}
\begin{align}
  \prefset(\ordering, \hstar, \hypothesis_t, \Hypotheses_t) := \{\hypothesis' \in \Hypotheses_t: \orderingof{\hypothesis'}{\hypothesis_t} \leq \orderingof{\hstar}{\hypothesis_t}\}\label{eq:prefset}
\end{align}
to be the set of hypotheses that are preferred over $\hstar$ (according to $\ordering$) from $\hypothesis_t$.

\paragraph{Greedy vs. optimal} We now proceed to the proof of \thmref{thm:greedy_suff}. First, we provide a lower bound on the cost of the optimal algorithm.

\begin{lemma}\label{lm:opt_lowerbound}
  Let $\uordering$ be the uniform preference function. Assume that $\ordering$ satisfies Condition~\ref{thm:suffcond:1} of \thmref{thm:greedy_suff}. Then, the following inequality holds:
  $$\optcost(\ordering, \hstar, \hinit, \Hypotheses) \geq \optcost(\uordering, \hstar, \hinit, \prefset(\ordering, \hstar, \hinit, \Hypotheses)).$$
\end{lemma}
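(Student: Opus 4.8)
The plan is to prove the lemma through a strengthened statement that holds at \emph{every} intermediate learner state and then specialize it to the initial state. Concretely, I claim that for every state $(h,\hypotheses)$ with $\hstar\in\hypotheses$,
\[
\optcost(\ordering,\hstar,h,\hypotheses)\ \ge\ \optcost(\uordering,\hstar,h,\prefset(\ordering,\hstar,h,\hypotheses)),
\]
where $\prefset(\ordering,\hstar,h,\hypotheses)$ is the preferred version space of \eqref{eq:prefset}. Taking $(h,\hypotheses)=(\hinit,\Hypotheses)$ recovers exactly the statement of the lemma. I would also record at the outset that, because uniform preferences make adaptivity irrelevant (Proposition~\ref{prop:ada-state-ind} together with \eqref{eq:td}), the right-hand side equals $\TD(\hstar,\prefset(\ordering,\hstar,h,\hypotheses))$, i.e.\ the set-cover number needed to isolate $\hstar$ within the preferred version space.

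I would establish the claim by strong induction on $\optcost(\ordering,\hstar,h,\hypotheses)$, driving the recursion with the optimality condition for $\futurecostopt$ from the preceding proposition. The base case is $\optcost=0$, i.e.\ $h=\hstar$: using that staying is strictly most preferred we get $\prefset(\ordering,\hstar,\hstar,\hypotheses)=\{\hstar\}$, so the right-hand side is $0$. For the inductive step, let $\example$ attain the minimum in the recursion, write $\hypotheses_\example=\hypotheses\cap\Hypotheses(\{\example\})$, and let $\mathbf{C}=\mathbf{C}(h,\hypotheses,\ordering,\example)$ be the induced candidate set, so that $\optcost(\ordering,\hstar,h,\hypotheses)=1+\max_{h'\in\mathbf{C}}\optcost(\ordering,\hstar,h',\hypotheses_\example)$.

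The crux, and the only place Condition~\ref{thm:suffcond:1} is used, is the inclusion
\[
\prefset(\ordering,\hstar,h,\hypotheses)\cap\Hypotheses(\{\example\})\ \subseteq\ \prefset(\ordering,\hstar,h',\hypotheses_\example)\qquad\text{for every }h'\in\mathbf{C}.
\]
Indeed, any $h''$ in the left-hand set lies in $\hypotheses_\example$ and satisfies $\orderingof{h''}{h}\le\orderingof{\hstar}{h}$; since $h'$ minimizes $\orderingof{\cdot}{h}$ over $\hypotheses_\example$ we have $\orderingof{h'}{h}\le\orderingof{h''}{h}\le\orderingof{\hstar}{h}$, and Condition~\ref{thm:suffcond:1} with $h_i=h'$, $h_j=h''$ yields $\orderingof{h''}{h'}\le\orderingof{\hstar}{h'}$, i.e.\ $h''\in\prefset(\ordering,\hstar,h',\hypotheses_\example)$. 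Combining this inclusion with the inductive hypothesis and the monotonicity of $\TD(\hstar,\cdot)$ in its hypothesis set gives, for each $h'\in\mathbf{C}$, $\optcost(\ordering,\hstar,h',\hypotheses_\example)\ge\TD\big(\hstar,\prefset(\ordering,\hstar,h,\hypotheses)\cap\Hypotheses(\{\example\})\big)$. Finally, since every teaching example is labelled by $\hstar$ (hence $\hstar\in\Hypotheses(\{\example\})$), the one-step set-cover inequality $\TD(\hstar,S)\le 1+\TD(\hstar,S\cap\Hypotheses(\{\example\}))$ applied to $S=\prefset(\ordering,\hstar,h,\hypotheses)$ closes the induction.

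I expect the main obstacle to be conceptual rather than computational: the naive invariant ``the learner stays inside the preferred version space of $\hinit$'' can fail, because a teacher may force the learner onto a hypothesis that is globally less preferred than $\hstar$. The fix is to phrase the invariant relative to the \emph{current} hypothesis $h$ via $\prefset(\ordering,\hstar,h,\cdot)$ and to carry it through the optimality recursion. Condition~\ref{thm:suffcond:1} is then precisely the no-shortcut guarantee that \emph{every} candidate successor $h'\in\mathbf{C}$ still ranks all surviving preferred hypotheses at least as high as $\hstar$, which is what makes the key inclusion hold uniformly over $\mathbf{C}$ rather than for a single convenient successor; the remaining ingredients (monotonicity of set cover and the one-example recursion) are routine. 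The only delicate technical point is the base case, which needs staying to be strictly most preferred so that $\prefset(\ordering,\hstar,\hstar,\hypotheses)=\{\hstar\}$.
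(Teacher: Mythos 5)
Your proof is correct, and its engine is the same as the paper's: the only place Condition~\ref{thm:suffcond:1} is used is the inclusion $\prefset(\ordering,\hstar,h,\hypotheses\cap\Hypotheses(\{\example\}))\subseteq\prefset(\ordering,\hstar,h',\hypotheses\cap\Hypotheses(\{\example\}))$ for any learner-chosen successor $h'$, which is exactly steps (a)--(b) of Eq.~\eqref{eq:prefset:ht} in the paper's proof. What differs is the architecture built around that inclusion. The paper fixes the learner's internal randomness $\randomstate$, unrolls a full optimal teaching sequence $\example_0,\dots,\example_{m-1}$ with hypothesis trace $\hypothesis_0,\dots,\hypothesis_{m-1},\hstar$, and telescopes the inclusion along the trace (Eq.~\eqref{eq:prefset:h0}) to conclude that these $m$ examples eliminate every hypothesis of $\prefset(\ordering,\hstar,\hinit,\Hypotheses)\setminus\{\hstar\}$, i.e.\ they constitute a feasible set cover, so $m$ is at least the set-cover number on the right-hand side. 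You instead strengthen the statement to a per-state invariant and run a well-founded induction on $\optcost$ through the Bellman recursion of the optimality proposition, closing with monotonicity of $\TD(\hstar,\cdot)$ and the one-step inequality $\TD(\hstar,S)\le 1+\TD\left(\hstar,S\cap\Hypotheses(\{\example\})\right)$. The trade-off: the paper's trajectory argument needs no optimality recursion and exhibits the cover explicitly, while your induction makes the adversarial branching over the candidate set $\mathbf{C}$ explicit (the paper's ``fix $\randomstate$ and take the trace'' step quietly does the same work) and yields the stronger statement at every intermediate state for free. One point in your favor: both arguments need $\prefset(\ordering,\hstar,\hstar,\cdot)=\{\hstar\}$ at the moment the learner reaches the target --- the paper asserts this silently as the final equality of Eq.~\eqref{eq:prefset:h0}, whereas you correctly flag it as an assumption (staying at the current hypothesis is strictly most preferred), which is needed for the lemma to hold at all in the degenerate cases.
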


\begin{proof}
  Fix $\randomstate$. Denote the full teaching sequence of the optimal algorithm under preference $\ordering$ as $\examples_m = \{\example_0, \example_1, \dots, \example_{m-1}\}$, and the trace of the learner's hypotheses as $\{\hypothesis_0, \hypothesis_1, \dots, \hypothesis_{m-1}, \hstar\}$. By definition, we have $\Hypotheses(\examples_t) = \Hypotheses_{t}$. 

  Upon receiving teaching example $\example_t$, the set of hypotheses eliminated from the preferred version space at time step $t$ is
  \begin{align}
    & \prefset(\ordering, \hstar, \hypothesis_t, \Hypotheses_{t}) \setminus \prefset(\ordering, \hstar, \hypothesis_t, \Hypotheses(\examples_{t+1})) \notag \\
    \stackrel{}{=}~& \prefset(\ordering, \hstar, \hypothesis_t, \Hypotheses_{t}) \setminus \{\hypothesis' \in \Hypotheses_{t+1}: \orderingof{\hypothesis'}{\hypothesis_t} \leq \orderingof{\hstar}{\hypothesis_t}\} \notag \\
    \stackrel{(a)}{=}~& \prefset(\ordering, \hstar, \hypothesis_t, \Hypotheses_{t}) \setminus \{\hypothesis' \in \Hypotheses_{t+1}: \orderingof{\hypothesis_{t+1}}{\hypothesis_t} \leq \orderingof{\hypothesis'}{\hypothesis_t} \leq \orderingof{\hstar}{\hypothesis_t}\} \notag \\
    \stackrel{(b)}{\supseteq}~& \prefset(\ordering, \hstar, \hypothesis_t, \Hypotheses_{t}) \setminus \{\hypothesis' \in \Hypotheses_{t+1}: \orderingof{\hypothesis'}{\hypothesis_{t+1}} \leq \orderingof{\hstar}{\hypothesis_{t+1}}\} \notag \\
    \stackrel{}{=}~& \prefset(\ordering, \hstar, \hypothesis_t, \Hypotheses_{t}) \setminus \prefset(\ordering, \hstar, \hypothesis_{t+1}, \Hypotheses_{t+1}) \label{eq:prefset:ht}
  \end{align}
  Here, step (a) is due to the fact that $\Hypotheses_{t+1} \cap \prefset(\ordering, \hypothesis_{t+1}, \hypothesis_t, \Hypotheses_{t}) = h_{t+1}$, and step (b) follows from Condition~\ref{thm:suffcond:1} of \thmref{thm:greedy_suff}.

  Further observe that
  \begin{align}
    \prefset(\ordering, \hstar, \hypothesis_0, \Hypotheses)
    &\subseteq \prefset(\ordering, \hstar, \hypothesis_0, \Hypotheses) \setminus \prefset(\ordering, \hstar, \hypothesis_1, \Hypotheses_{1}) \cup \prefset(\ordering, \hstar, \hypothesis_1, \Hypotheses_{1}) \notag\\
    &\subseteq \bigcup_{t=0}^{m-1} \left( \prefset(\ordering, \hstar, \hypothesis_t, \Hypotheses_{t}) \setminus \prefset(\ordering, \hstar, \hypothesis_{t+1}, \Hypotheses_{t+1}) \right) \cup \prefset(\ordering, \hstar, \hstar, \Hypotheses_{m}) \notag\\
    &= \bigcup_{t=0}^{m-1} \left( \prefset(\ordering, \hstar, \hypothesis_t, \Hypotheses_{t}) \setminus \prefset(\ordering, \hstar, \hypothesis_{t+1}, \Hypotheses_{t+1}) \right) \cup \{\hstar\} \label{eq:prefset:h0}
  \end{align}
  Combining Eq.~\eqref{eq:prefset:ht} and \eqref{eq:prefset:h0}, we obtain
  \begin{align*}
    \prefset(\ordering, \hstar, \hypothesis_0, \Hypotheses) \setminus \{\hstar\}
    &\subseteq \bigcup_{t=0}^{m-1} \left( \prefset(\ordering, \hstar, \hypothesis_t, \Hypotheses_{t}) \setminus \prefset(\ordering, \hstar, \hypothesis_{t+1}, \Hypotheses_{t+1}) \right) \\
    & \subseteq \bigcup_{t=0}^{m-1} \left( \prefset(\ordering, \hstar, \hypothesis_t, \Hypotheses_{t}) \setminus \prefset(\ordering, \hstar, \hypothesis_t, \Hypotheses_{t+1}) \right)
  \end{align*}
  That is, providing teaching examples $\{\example_1, \example_2, \dots, \example_{m-1}\}$ is guaranteed to eliminate $\prefset(\ordering, \hstar, \hinit, \Hypotheses) \setminus \{\hstar\}$. By definition, $\optcost(\uordering, \hstar, \hinit, \prefset(\ordering, \hstar, \hinit, \Hypotheses))$ is the minimal number of examples required to eliminate $\prefset(\ordering, \hstar, \hinit, \Hypotheses) \setminus \{\hstar\}$. Since the optimal cost is defined as the worst-case cost for all $\randomstate$, it follows that $\optcost(\uordering, \hstar, \hinit, \prefset(\ordering, \hstar, \hinit, \Hypotheses)) \leq m \leq \optcost(\ordering, \hstar, \hinit, \Hypotheses)$.
\end{proof}

In the following, we will focus on the analysis of the greedy algorithm with preference $\ordering$.

\begin{lemma}\label{lm:greedy_gain}
  Assume that the preference function $\ordering$ and the structure of tests satisfy Condition 1 and 2 from \thmref{thm:greedy_suff}. Suppose we have run the myopic algorithm (w.r.t. Eq.~\ref{eq:objrank}) for $m$ time steps. Let $\example_t$ be the current teaching example, $\examples_t = \{\example_0, \dots, \example_{t-1}\}$ be the set of examples chosen by the greedy teacher up to $t$, and $\hypothesis_t$ be the learner's current hypothesis. For any given example $\example$, let $\hypothesis_z$ be the learner's next hypothesis assuming the teacher provides $\example$. Then,
  \begin{align}
    & |\prefset(\ordering, \hstar, \hypothesis_t,  \Hypotheses(\examples_t))| - |\prefset(\ordering, \hstar, \hypothesis_{t+1},  \Hypotheses(\examples_t \cup \{\example_t\}))| \notag \\
    \geq ~&\frac12 \max_\example \left( |\prefset(\ordering, \hstar, \hinit,  \Hypotheses(\examples_t))|  - |\prefset(\ordering, \hstar, \hinit,  \Hypotheses(\examples_t \cup \{\example\}))| \right)
  \end{align}
\end{lemma}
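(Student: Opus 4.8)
The plan is to run a greedy set-cover analysis in the ``preferred version space'' potential $\Phi(\hypothesis,\examples):=|\prefset(\ordering,\hstar,\hypothesis,\Hypotheses(\examples))|=\futurecostapprox(\hypothesis,\Hypotheses(\examples),\hstar)$, where the real work is to reconcile the fact that the greedy objective is measured from the \emph{moving} reference $\hypothesis_t$ while the lower bound of \lemref{lm:opt_lowerbound} is measured from the \emph{fixed} reference $\hinit$. First I would record two structural facts. Since $A:=|\prefset(\ordering,\hstar,\hypothesis_t,\Hypotheses(\examples_t))|$ does not depend on the next test, the greedy rule (which minimizes $\futurecostapprox$ at the next state, i.e. minimizes $B$) is exactly the rule that maximizes the left-hand side; hence it suffices to exhibit \emph{one} test whose gain meets the bound. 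Second, by Condition~\ref{thm:suffcond:1} (the inclusion already established in the proof of \lemref{lm:opt_lowerbound}, now applied with reference $\hypothesis_t$), letting the learner jump only shrinks the preferred set, $\prefset(\ordering,\hstar,\hypothesis_{t+1},\Hypotheses_{t+1})\subseteq\prefset(\ordering,\hstar,\hypothesis_t,\Hypotheses_{t+1})$. Since the preference condition with fixed reference $\hypothesis_t$ is unaffected by shrinking the version space, for every test $\example$ the greedy gain $A-B$ is at least the \emph{fixed-reference} gain $|\prefset(\ordering,\hstar,\hypothesis_t,\Hypotheses(\examples_t))\cap\bar{\hypotheses}(\{\example\})|$.

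Next I would reduce the claim to a counting statement about a single optimal test. Let $\example^*$ attain the maximum on the right-hand side and set $R_0:=\prefset(\ordering,\hstar,\hinit,\Hypotheses(\examples_t))\cap\bar{\hypotheses}(\{\example^*\})$, so the right-hand side equals $\tfrac12|R_0|$. I would partition the $\hinit$-preferred survivors removed by $\example^*$ by their $\hinit$-rank relative to $\hypothesis_t$: let $Q$ collect those $\hypothesis'$ with $\orderingof{\hypothesis_t}{\hinit}\le\orderingof{\hypothesis'}{\hinit}\le\orderingof{\hstar}{\hinit}$ and $P$ collect those strictly more $\hinit$-preferred than $\hypothesis_t$. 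Applying Condition~\ref{thm:suffcond:1} with reference $\hinit$ (taking $\hypothesis_i=\hypothesis_t$, $\hypothesis_j=\hypothesis'$) shows every hypothesis in $Q$ is also $\hypothesis_t$-preferred over $\hstar$, so $Q\cap\bar{\hypotheses}(\{\example^*\})\subseteq\prefset(\ordering,\hstar,\hypothesis_t,\Hypotheses(\examples_t))\cap\bar{\hypotheses}(\{\example^*\})$. Combining this with the fixed-reference bound above (using $\example^*$ directly, or invoking Condition~\ref{thm:suffcond:2} to realize a test that eliminates exactly $Q\cap\bar{\hypotheses}(\{\example^*\})\setminus\{\hypothesis_t\}$ and keeps the learner at $\hypothesis_t$) yields a greedy gain of at least $|Q\cap\bar{\hypotheses}(\{\example^*\})|$, up to a harmless $\pm1$ for $\hypothesis_t$ itself.

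It therefore remains to show $|Q\cap\bar{\hypotheses}(\{\example^*\})|\ge\tfrac12|R_0|$, equivalently that the ``over-preferred'' hypotheses $P\cap\bar{\hypotheses}(\{\example^*\})$ — those removed by $\example^*$ that are more $\hinit$-preferred than the current $\hypothesis_t$ and may fail to be $\hypothesis_t$-preferred, hence invisible to $\Phi$ — account for at most half of $R_0$. This is the main obstacle: the greedy potential is blind to $P$, so all the slack sits here, and the factor of two in the statement (and ultimately in \thmref{thm:greedy_suff}) should come precisely from bounding $|P\cap\bar{\hypotheses}(\{\example^*\})|$ against $|Q\cap\bar{\hypotheses}(\{\example^*\})|$. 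I expect to control $P$ from the way the learner's trajectory is generated: because each jump is \emph{local}, $\hypothesis_t$ need not be $\hinit$-extremal in the current version space, so $P$ can be nonempty and must be argued small via Condition~\ref{thm:suffcond:1}, or else the complementary case (where $Q$ already carries at least half of $R_0$, handled above) must be shown to exhaust the possibilities. Once the per-step gain bound of this lemma is in place, feeding it into the standard greedy set-cover telescoping against the lower bound of \lemref{lm:opt_lowerbound} delivers the logarithmic approximation guarantee of \thmref{thm:greedy_suff}.
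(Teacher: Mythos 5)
Your setup is sound as far as it goes: the observation that greedy maximizes the left-hand side, the inclusion $\prefset(\ordering,\hstar,\hypothesis_{t+1},\Hypotheses_{t+1})\subseteq\prefset(\ordering,\hstar,\hypothesis_t,\Hypotheses_{t+1})$ from Condition~\ref{thm:suffcond:1}, and the idea of using Condition~\ref{thm:suffcond:2} to realize a test that spares $\hypothesis_t$ all correspond to steps in the paper's argument. But the step you yourself flag as the ``main obstacle'' is a genuine gap, and your conjecture about how it gets resolved is wrong. The paper never bounds $|P|$ against $|Q|$; it proves that the problematic hypotheses --- elements of the $\hinit$-preferred version space that are invisible to the potential measured from $\hypothesis_t$ --- simply do not exist along the greedy trajectory. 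The missing idea is a trajectory invariant (Eq.~\eqref{eq:prefvschange} in the paper's proof): because of the teacher's tie-breaking rule (prefer examples that keep the learner at its current hypothesis, cf.\ the footnote of \thmref{thm:greedy_suff}), the greedy sequence never adds a hypothesis to the preferred version space, so by induction $\prefset(\ordering,\hstar,\hypothesis_t,\Hypotheses(\examples_t))=\prefset(\ordering,\hstar,\hinit,\Hypotheses(\examples_t))$ at every step. The induction is driven by exactly the twin-test construction you mention: for any candidate $\example$ inconsistent with $\hypothesis_t$, Condition~\ref{thm:suffcond:2} gives $\example'$ with $\Hypotheses(\{\example'\})\setminus\Hypotheses(\{\example\})=\{\hypothesis_t\}$, and Condition~\ref{thm:suffcond:1} gives $\prefset(\ordering,\hstar,\hypothesis_t,\Hypotheses(\examples_t\cup\{\example\}))\subseteq\prefset(\ordering,\hstar,\hypothesis_{\example},\Hypotheses(\examples_t\cup\{\example\}))$; hence either the jump induced by $\example$ adds no new hypotheses (and greedy's gain equals the maximal fixed-reference gain), or $\example'$ weakly dominates $\example$ in the greedy objective and the tie-break makes the teacher choose $\example'$, keeping the learner put. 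This cannot be recovered by a pointwise argument at time $t$: Condition~\ref{thm:suffcond:1} alone does not limit $|P|$ at an arbitrary reachable state, and the control you hope for really does come from how the trajectory was generated --- i.e., from the induction you have not set up.

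Consequently, the factor $\tfrac12$ has a different origin than you conjecture. With the invariant in hand, the greedy potential ``sees'' all of $R_0$; the only loss occurs in the tie-break case, where the chosen twin $\example'$ spares exactly one hypothesis, namely $\hypothesis_t$ itself, so greedy's gain is $M-1$ where $M$ is the maximum on the right-hand side. Since the gain is at least $1$ before $\hstar$ is reached, $M-1\geq\tfrac12 M$, which is precisely the inequality of the lemma. In other words, the ``harmless $\pm1$ for $\hypothesis_t$ itself'' in your sketch is the entire source of the constant $2$, while the set $P$ on which you planned to spend the factor of two never materializes.
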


\begin{proof}
  The myopic algorithm w.r.t. Eq.~\ref{eq:objrank} picks the example which leads to the smallest preferred version space. That is,
  \begin{align*}
    \example_t = \argmin_\example |\prefset(\ordering, \hstar, \hypothesis_z, \Hypotheses(\examples_t \cup \{\example\}))|
  \end{align*}
  Here, $\hypothesis_z$ denotes the hypothesis that the learner takes if she observes example $z$ at time step $t$. If $\example$ is inconsistent
  with $\hypothesis_{t}$, then by Condition \ref{thm:suffcond:2} of
  \thmref{thm:greedy_suff}, there exists an example $\example'$ which is
  consistent with $\hypothesis_{t}$ and only differs from $\example$ at
  $\hypothesis_{t}$, i.e.,
  $\Hypotheses(\{\example'\})\setminus \Hypotheses(\{\example\}) =
  \{\hypothesis_{t}\}$.
  By Condition~\ref{thm:suffcond:1} of \thmref{thm:greedy_suff} (and step (b) of Eq.~\eqref{eq:prefset:ht}), we know $\prefset(\ordering, \hstar, \hypothesis_{t}, \Hypotheses(\examples_t \cup \{\example\})) \subseteq \prefset(\ordering, \hstar, \hypothesis_\example, \Hypotheses(\examples_t \cup \{\example\}))$.
  Therefore
  \begin{align}
    |\prefset(\ordering, \hstar, \hypothesis_{t}, \Hypotheses(\examples_t \cup \{\example'\}))| - 1 =
    |\prefset(\ordering, \hstar, \hypothesis_{t}, \Hypotheses(\examples_t \cup \{\example\}))| \leq |
    \prefset(\ordering, \hstar, \hypothesis_\example, \Hypotheses(\examples_t \cup \{\example\}))|\label{eq:hzvshzp}
  \end{align}
  which gives us $|\prefset(\ordering, \hstar, \hypothesis_{t}, \Hypotheses(\examples_t \cup \{\example'\}))| \leq
  \prefset(\ordering, \hstar, \hypothesis_\example, \Hypotheses(\examples_t \cup \{\example\}))| + 1$.

  We consider the following three cases.
  \begin{enumerate}[C-1]
  \item \label{enum:suff:case1} $  |\prefset(\ordering, \hstar, \hypothesis_{t}, \Hypotheses(\examples_t \cup \{\example'\}))| =
    |\prefset(\ordering, \hstar, \hypothesis_\example, \Hypotheses(\examples_t \cup \{\example\}))| + 1$. Then, by Eq.~\eqref{eq:hzvshzp}, we have
    $$\prefset(\ordering, \hstar, \hypothesis_{t}, \Hypotheses(\examples_t \cup \{\example\})) = \prefset(\ordering, \hstar, \hypothesis_\example, \Hypotheses(\examples_t \cup \{\example\})).$$
    That is, even the example $\example$ can bring the learner to a new hypothesis $\hypothesis_\example$, it does \emph{not} introduce new hypotheses into the preferred version space.

  \item \label{enum:suff:case2} $  |\prefset(\ordering, \hstar, \hypothesis_{t}, \Hypotheses(\examples_t \cup \{\example'\}))| <
    \prefset(\ordering, \hstar, \hypothesis_\example, \Hypotheses(\examples_t \cup \{\example\}))| + 1$. In this case, the greedy teacher will not pick $\example$, because the gain of example $\example'$ is no less than the gain of $\example$ in terms of the greedy heuristic. In the special case where $|\prefset(\ordering, \hstar, \hypothesis_{t}, \Hypotheses(\examples_t \cup \{\example'\}))| =
    \prefset(\ordering, \hstar, \hypothesis_\example, \Hypotheses(\examples_t \cup \{\example\}))|$, according to our tie-breaking rule in the myopic algorithm, the teacher does not pick $\example$, because it makes the learner move away from its current hypothesis and hence is less preferred.
  \end{enumerate}

  For completeness, we also consider the case when the example $\example$ is consistent with $\hypothesis_t$:
  \begin{enumerate}[C-3]
  \item \label{enum:suff:case3} If the teacher picks a consistent example $\example$, then the learner does not move away from her current hypothesis $\hypothesis_t$. As a result, the preference ordering among set $\prefset(\ordering, \hstar, \hypothesis_{t}, \Hypotheses(\examples_t \cup \{\example'\}))$ remains the same.
  \end{enumerate}

  With the above three cases set up, we now reason about the gain of the myopic algorithm. An important observation is that, the greedy teaching examples never add any hypotheses into the preferred version space. Therefore, at time step $t$, for any example $\example$, we have
  \begin{align}
    \label{eq:prefvschange}
    \prefset(\ordering, \hstar, \hypothesis_{t}, \Hypotheses(\examples_t \cup \{\example\}))
    &= \prefset(\ordering, \hstar, \hypothesis_{t-1}, \Hypotheses(\examples_t \cup \{\example\})) \notag \\
    &= \dots \notag \\
    &= \prefset(\ordering, \hstar, \hypothesis_{0}, \Hypotheses(\examples_t \cup \{\example\}))
  \end{align}

  Next, we look into the gain for each of the three cases above.
  \begin{enumerate}[C-1]
  \item Adding $\example_t$ changes the learner's hypothesis, i.e., $\hypothesis_{t+1} \neq \hypothesis_t$, but the resulting preferred version space induced by $\hypothesis_{t+1}$ is the same with that of $\hypothesis_{t}$. In this case,
    \begin{align}
      \label{eq:greedy_case1}
      & |\prefset(\ordering, \hstar, \hypothesis_t,  \Hypotheses(\examples_t))| - |\prefset(\ordering, \hstar, \hypothesis_{t+1},  \Hypotheses(\examples_t \cup \{\example_t\}))| \notag \\
      = ~&|\prefset(\ordering, \hstar, \hinit,  \Hypotheses(\examples_t))|  - \min_\example |\prefset(\ordering, \hstar, \hypothesis_{t+1},  \Hypotheses(\examples_t \cup \{\example\}))| \notag \\
      = ~&|\prefset(\ordering, \hstar, \hinit,  \Hypotheses(\examples_t))|  - \min_\example |\prefset(\ordering, \hstar, \hinit,  \Hypotheses(\examples_t \cup \{\example\}))| \notag \\
      = ~&\max_\example \left( |\prefset(\ordering, \hstar, \hinit,  \Hypotheses(\examples_t))|  - |\prefset(\ordering, \hstar, \hinit,  \Hypotheses(\examples_t \cup \{\example\}))| \right)
    \end{align}
  \item In this case, we have
    \begin{align*}
      |\prefset(\ordering, \hstar, \hypothesis_{t+1}, \Hypotheses(\examples_t \cup \{\example_t\}))|
      &= |\prefset(\ordering, \hstar, \hypothesis_z, \Hypotheses(\examples_t \cup \{\example\}))|\\
      &= |\prefset(\ordering, \hstar, \hypothesis_t, \Hypotheses(\examples_t \cup \{\example'\}))|
    \end{align*}
    and the myopic algorithm picks $\example_t=\example'$ according to the tie-breaking rule. The learner does not move away from her current hypothesis: $\hypothesis_{t+1} = \hypothesis_t$. However, since $\Hypotheses(\{\example'\})\setminus \Hypotheses(\{\example\}) = \{\hypothesis_{t}\}$, we get
    \begin{align}
      |\prefset(\ordering, \hstar, \hypothesis_{t+1},  \Hypotheses(\examples_t \cup \{\example_t\}))|
      &= |\prefset(\ordering, \hstar, \hypothesis_t, \Hypotheses(\examples_t \cup \{\example'\}))| \notag \\
      &= |\prefset(\ordering, \hstar, \hypothesis_t, \Hypotheses(\examples_t \cup \{\example\}))| + 1\notag \\
      &\stackrel{(a)}{=} \min_{\example''} |\prefset(\ordering, \hstar, \hypothesis_t,  \Hypotheses(\examples_t \cup \{\example''\}))| + 1\notag \\
      &\stackrel{\eqref{eq:prefvschange}}{=} \min_{\example''} |\prefset(\ordering, \hstar, \hinit,  \Hypotheses(\examples_t \cup \{\example''\}))| + 1 \notag 
    \end{align}
    where step (a) is due to the greedy choice of the myopic algorithm.
    Further note that before reaching $\hstar$, the gain of a greedy teaching example is positive. Therefore,
    \begin{align}
      \label{eq:greedy_case2}
      & |\prefset(\ordering, \hstar, \hypothesis_t,  \Hypotheses(\examples_t))| - |\prefset(\ordering, \hstar, \hypothesis_{t+1},  \Hypotheses(\examples_t \cup \{\example_t\}))| \notag \\
      \geq ~&\frac12 \left( 1 + |\prefset(\ordering, \hstar, \hypothesis_t,  \Hypotheses(\examples_t))| - |\prefset(\ordering, \hstar, \hypothesis_{t+1},  \Hypotheses(\examples_t \cup \{\example_t\}))| \right) \notag \\
      = ~&\frac12 \max_\example \left( |\prefset(\ordering, \hstar, \hinit,  \Hypotheses(\examples_t))|  - |\prefset(\ordering, \hstar, \hinit,  \Hypotheses(\examples_t \cup \{\example\}))| \right)
    \end{align}
  \item In this case, $\example_t$ is consistent with $h^t$, the greedy gain amounts to the maximal number of hypotheses removed from the preferred version space. Thus we have
    \begin{align}
      \label{eq:greedy_case3}
      & |\prefset(\ordering, \hstar, \hypothesis_t,  \Hypotheses(\examples_t))| - |\prefset(\ordering, \hstar, \hypothesis_{t+1},  \Hypotheses(\examples_t \cup \{\example_t\}))| \notag \\      = ~&\max_\example \left( |\prefset(\ordering, \hstar, \hinit,  \Hypotheses(\examples_t))|  - |\prefset(\ordering, \hstar, \hinit,  \Hypotheses(\examples_t \cup \{\example\}))| \right)
    \end{align}
  \end{enumerate}
  Combining Eq.~\eqref{eq:greedy_case1}, \eqref{eq:greedy_case2}, \eqref{eq:greedy_case3} finishes the proof.
\end{proof}

\paragraph{Proof of \thmref{thm:greedy_suff}}

We are now ready to provide the proof for \thmref{thm:greedy_suff}.

\begin{proof}[Proof of \thmref{thm:greedy_suff}]

  Based on the discussions in \lemref{lm:greedy_gain}, we know that the teaching sequence provided by the myopic algorithm that greedily minimizes \eqref{eq:objrank} never adds new hypotheses into the initial preferred version space $\prefset(\ordering, \hstar, \hinit,  \Hypotheses(\examples_t))$, and neither does it move \emph{consistent} hypotheses out of $\prefset(\ordering, \hstar, \hinit,  \Hypotheses(\examples_t))$. The teaching objective thus reduces to a set cover objective, and the teaching finishes once all hypotheses, except $\hstar$, in the initial preferred version space are covered.

  In \lemref{lm:greedy_gain}, we show that at each time step, the gain of the myopic algorithm is at least $\frac12$ the gain of the greedy set cover algorithm. Therefore, the myopic algorithm is a 2-approximate greedy set cover algorithm \cite{wolsey1982analysis}. The logarithmic approximation result then follows from  \cite{wolsey1982analysis, golovin2011adaptive}:
  \begin{align}
    \greedycost(\ordering, \hstar, \hypothesis, \prefset) \leq 2 \left( \log{\futurecostapprox(\hinit, \Hypotheses)} + 1 \right)\optcost(\uordering, \hstar, \hinit, \prefset(\ordering, \hstar, \hinit, \Hypotheses)).\label{eq:approx_greedy_setcover}
  \end{align}

  Combining Eq.~\ref{eq:approx_greedy_setcover} with \lemref{lm:opt_lowerbound} completes the proof.
\end{proof}

\clearpage
\section{\tworec: Algorithmic Details} \label{app:tworec}
In this section, we provide the detailed specification of the \tworec hypothesis class introduced in \secref{sec:adaptivity}, and present the adaptive algorithm \adar and non-adaptive algorithm \nonadar. 

\subsection{Preference Structure}\label{app:tworec:structure}
As described earlier, \tworec contains two (non-overlapping) subclasses $\Hypotheses^1$ and $\Hypotheses^2$ with different complexity. 
Let $\Hypotheses_t$ be the learner's version space at time step $t$, and $\hypothesis_t$ be the learner's current hypothesis. 

We consider two special subsets of hypotheses of $\Hypotheses^2$.

\paragraph{$\Hypotheses^1$ union singleton:} In the first special subset, each hypothesis can be considered as a $\Hypotheses^1$ hypothesis union a singleton grid cell: Let $r_1: \Hypotheses^2 \rightarrow \Hypotheses^1$ (resp. $r_2$) denote the function that maps a hypothesis $h\in \Hypotheses^2$ to the first (resp. second) rectangle it contains. Then the set of all such ($\Hypotheses^1$ union singleton) hypotheses is
\begin{align}
  \label{eq:singletonM2}
  S_1 = \{h \in \Hypotheses^2: |r_1(h)| = 1 \vee |r_2(h)| = 1\}.
\end{align}



\paragraph{$\Hypotheses^1$ splits:} In the second special subset, each hypothesis can be considered as a \emph{split} from a $\Hypotheses^1$ hypothesis:
Given $h\in \Hypotheses^2$, we call $h$ a $\Hypotheses^1$ \emph{split}, if and only if there exists no other $\Hypotheses^2$ hypothesis in the minimal rectangle that encloses $h$. We denote the set of all $\Hypotheses^1$ split as $S_2$:
\begin{align}
  \label{eq:splitM2}
  S_2 = \{h \in \Hypotheses^2: \text{$h$ is a $\Hypotheses^1$ split }\}.
\end{align}
We consider the subsets $S_1$ and $S_2$ as shortcuts between as $\Hypotheses^1$ and $\Hypotheses^2$. In the following, we will describe our preference model of the learners, based on the subclasses previously defined.


For any pair of hypotheses $h, h'$ from the \emph{same subclass}, define $\edgedist(\hypothesis, \hypothesis')$ to be the minimal number of edge movements required to move from $\hypothesis$ to $\hypothesis'$. For example, $\max_{h,h'\in \Hypotheses^1} \edgedist(h,h') \leq 4$ and $\max_{h,h'\in \Hypotheses^2} \edgedist(h,h') \leq 8$. 

\begin{figure}[!h]
  \centering
  \includegraphics[width=.45\textwidth]{./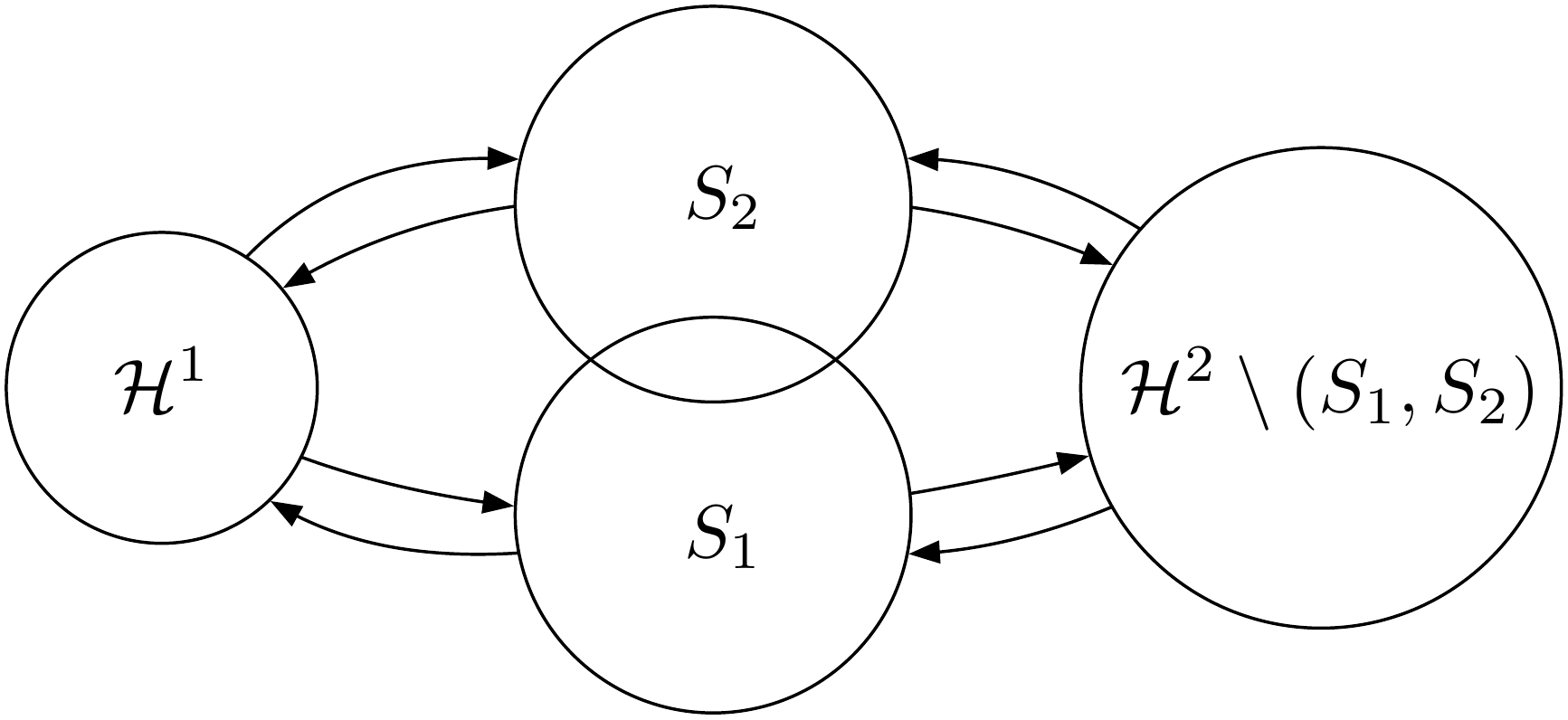}
  \caption{Transitions between subclasses of \tworec upon receiving \emph{one} teaching example}
  \label{fig:tworec_transition}
\end{figure}

Our general assumption about the learner's preference structure $\ordering$ for \tworec is that learners prefer to make small moves among hypotheses. In \figref{fig:tworec_transition}, we depict the valid transitions among \tworec hypotheses upon receiving an example. That includes the following cases:
\begin{enumerate}[C-1]
\item $h_t \in \Hypotheses^1$. In this case, the learner prefers hypotheses in $\Hypotheses^1$, then hypotheses in $S_1$, and lastly hypotheses in $\Hypotheses^2\setminus S_1$. More specifically,
  \begin{enumerate}
  \item Within $\Hypotheses^1$, the learner prefers a hypothesis with smaller distance from $h_t$;
  \item Within $S_1$, the learner has uniform preferences

    ---if learner makes a jump to $S_1$, it corresponds to ``draw new rectangle'' operation.
  \item Within $\Hypotheses^2\setminus S_1$, the learner also has uniform preferences from $h_t$. 

    Note that case (c) is not needed for designing an adaptive teacher, because the learner always needs to move to a hypothesis in $S_1\cup S_2$ from  $h_t$, prior to moving to $\Hypotheses^2\setminus (S_1 \cup S_2)$. 
  \end{enumerate}
\item $h_t \in S_1 \cup S_2$. In this case,
  \begin{enumerate}
  \item The current hypothesis is always the most preferred.
  \item Next,
    \begin{enumerate}
    \item if $h_t\in S_1$, the learner prefers hypotheses in $\Hypotheses^1$ which share a rectangle with $h_t$ ---this corresponds to a ``delete a rectangle'' operation.
    \item if $h_t\in S_2$, the learner prefers the hypothesis in $\Hypotheses^1$ which is the minimal rectangle that encloses $h_t$

      ---this corresponds to a ``merge two existing rectangles'' operation.
    \end{enumerate}

  \item Hypotheses further down in the preference list are $\Hypotheses^2$ hypotheses. These hypotheses are ordered by their distances $\edgedist(\cdot, h_t)$ towards $h_t$.
  \item All other hypotheses in $\Hypotheses^1$ are least preferred.
  \item Within $\Hypotheses^1$, the learner prefers the ones that overlap with one of its rectangles.
  \end{enumerate}
\item $h_t \in \Hypotheses^2\setminus (S_1 \cup S_2)$. In this case, the learner prefers hypotheses in $\Hypotheses^2$ over $\Hypotheses^1$. More specifically,
  \begin{enumerate}
  \item Within $\Hypotheses^2$, the learner prefers a hypothesis with smaller distance.
  \item Within $\Hypotheses^1$, the learner has uniform preferences.

    Note that under such preference, the learner always needs to move to a hypothesis in $S$ prior to moving to $\Hypotheses^1$.
  \end{enumerate}
\end{enumerate}
Intuitively, our oracle generates the intermediate hypotheses from the ``short-cut'' hypothesis set.





\subsection{\adar}
\paragraph{The oracle} An essential component in \algref{alg:non-myopic} is the oracle $\oracle(\hypothesis_t, \Hypotheses_t, \hstar)$, which defines the intermediate target hypothesis
at each time step. We define
$\oracle(\hypothesis_t, \Hypotheses_t, \hstar) = \argmin_{\hypothesis \in \oracle(\hypothesis_t, \Hypotheses_t, \hstar)} \futurecostapprox(\hypothesis, \Hypotheses_t, \hstar)$, where $\oracle(\hypothesis_t, \Hypotheses_t, \hstar)$ contains a set of candidate intermediate targets defined as follows.
For \adar, we employ the following adaptive oracle. Consider the four teaching scenarios:
\begin{itemize}
\item [$\Hypotheses^{1\rightarrow 1}$]: $\hstar \in \Hypotheses^1 \wedge h_t \in \Hypotheses^1$, we have $\oracle(\hypothesis_t, \Hypotheses_t, \hstar) = \{\hstar\}$.
\item [$\Hypotheses^{2\rightarrow 2}$]: $\hstar \in \Hypotheses^2 \wedge h_t \in \Hypotheses^2$, we have $\oracle(\hypothesis_t, \Hypotheses_t, \hstar) = \{\hstar\}$.
\item [$\Hypotheses^{1\rightarrow 2}$]: $\hstar \in \Hypotheses^2 \wedge h_t \in \Hypotheses^1$, in this case,
  \begin{enumerate}
  \item If $h_t = r_1(\hstar) \vee h_t = r_2(\hstar)$, $\oracle(\hypothesis_t, \Hypotheses_t, \hstar) = \{h\in S_1: h_t = r_1(h) \vee h_t = r_2(h)\}$.
  \item Otherwise, $\oracle(\hypothesis_t, \Hypotheses_t, \hstar) = \{r_1(\hstar)\}$, where $r_1(\hstar)$ denotes the first rectangle of $\hstar$.
  \end{enumerate}

\item [$\Hypotheses^{2\rightarrow 1}$]: Now let us consider the case $\hstar \in \Hypotheses^1 \wedge h_t \in \Hypotheses^2$.
  \begin{enumerate}
  \item If both rectangles in $h_t$ overlap with $\hstar$:
    \begin{enumerate}
    \item If $h_t$ is a split of $\hstar$, $\oracle(\hypothesis_t, \Hypotheses_t, \hstar) = \{\hstar\}$.
    \item Otherwise, the oracle returns a subset of hypotheses of $S_2$, where (1) each hypothesis $h$ is a split of $\hstar$, and (2) each of the two rectangles contained in $h$ overlaps with exactly one rectangle in $h_t$.

      For discussion simplicity, let us refer to such subset as the set of valid splits of $\hstar$.
    \end{enumerate}
  \item If at least one of the rectangles in $h_t$ is disjoint with $\hstar$,
    \begin{enumerate}
    \item If $h_t \in S_1$, $\oracle(\hypothesis_t, \Hypotheses_t, \hstar) = \{h\in \Hypotheses^1: h_t \text{ is $h$ union singleton}\}$
    \item Otherwise, the oracle returns a subset of hypothesis of $S_1$, where each hypothesis contains (1) a rectangle that fully aligns with one of the rectangles that are disjoint with $\hstar$, and (2) another singleton rectangle which is in the other rectangle of $h_t$.
    \end{enumerate}
  \end{enumerate}
\end{itemize}

\paragraph{The adaptive teacher}

A useful observation is that for teaching $\Hypotheses^{1\rightarrow 1}, \Hypotheses^{2 \rightarrow 2}$, and $\Hypotheses^{1 \rightarrow 2}$, an optimal teacher needs to provide at most 12 teaching examples:
\begin{itemize}
\item To teach $\Hypotheses^{1\rightarrow 1}$, it is sufficient to provide the two positive corner instances in the diagonal positions (say, the lower left corner and the upper right corner), and the two adjacent negative instances for each of the positive corners---this amounts to 6 examples in total.
\item To teach $\Hypotheses^{2\rightarrow 2}$ and $\Hypotheses^{1\rightarrow 2}$, it is sufficient to provide 6 corner examples for each of the rectangles---this amounts to 12 examples in total.
\end{itemize}

There are two implications from the above observation. First, instances lie on the diagonal corners are useful for teaching targets from the same subclass. Second, even though one can design smart algorithms for teaching the above cases (via adaptivity and exhaustive search), we are not likely to benefit from it by much. Therefore, in such cases, \adarteacher goes through the, at most 12, candidate corner point candidates and proposes an example that brings the learner closer to the target hypothesis.

The more challenging, yet inspiring case, is $\Hypotheses^{2\rightarrow 1}$. To bring the learner to the intermediate targets, \adarteacher runs a greedy heuristic derived from Eq.~\eqref{eq:objrank}: it picks an example $z$ so that after the learner makes a move, the number of hypotheses before reaching the closest $\hstar$ is the minimal:
\begin{align}\label{eq:objrank-tworec}
  \example^* \in \argmin_\example \min_i |\{\hypothesis' \in \Hypotheses_t\cap \Hypotheses(\{\example\}): \orderingof{\hypothesis'}{\hypothesis_z} \leq \orderingof{\hstar_i}{\hypothesis_z}\}|.
\end{align}
Here, $h_z$ denotes the learner's next hypothesis if provided with teaching example $z$.

Now, let us go through each case to analyze the performance of the above greedy heuristic.
\begin{itemize}
\item When the learner's hypothesis is at Scenario [$\Hypotheses^{2\rightarrow 1}$]--1--(a)
  or $\Hypotheses^{2\rightarrow 1}$--2--(a), the learner is ready to make a jump to $\Hypotheses^1$.
  A \emph{single} example suffices to achieve this, and hence the greedy heuristic is
  optimal.

\item When the learner's hypothesis is at Scenario [$\Hypotheses^{2\rightarrow 1}$]--1--(b), the goal of teaching is to reach any of the hypothesis in $\oracle(\hypothesis_t, \Hypotheses_t, \hstar)$---the set of valid splits of $\hstar$. Here, we consider two different cases: 
  \begin{enumerate}
  \item Either of the two rectangles of $\hypothesis_t$ is not aligned with $\hstar$ on exactly 3 edges.

    In this case, \adarteacher picks examples from the corner instances of $\hstar$ to bring the edges of two rectangles to $\hstar$. In the worst case, we need all 12 corner instances of $\hstar$ to ensure that.
  \item Both rectangles of $\hypothesis_t$ are aligned with $\hstar$ on exactly 3 edges.

    In this case, the distance from $\hypothesis_t$ to any valid splits of $\hstar$ is 1. \adarteacher follows the greedy heuristic to pick the next example. Note that before reaching the target, the distance between any hypothesis of the learner its closest target remains to be 1. Therefore, the greedy heuristic (Eq.~\ref{eq:objrank-tworec}) leads to a binary search algorithm. Let the maximal length of $\hstar$ be $\ell$, then the teacher needs $\bigO{\log |\ell| + 1}$ examples in the worst case to eliminates \emph{all} the intermediate targets.
  \end{enumerate}

\item When the learner's hypothesis is at Scenario [$\Hypotheses^{2\rightarrow 1}$]--2--(b), the goal of teaching reduces to reaching any of the hypotheses in $\oracle(\hypothesis_t, \Hypotheses_t, \hstar)$ by providing \emph{negative} examples in the rectangle which contains the singleton intermediate targets. To be consistent with the notation in \lemref{lm:tworec}, we refer to such rectangle by $r_2$. 
  It is not difficult to see that no matter what examples the teacher picks, the distances (defined by $\edgedist$) from the resulting hypothesis of the learner to any of the intermediate target hypothesis are equal. Hence the learner's preference over the intermediate target hypotheses is uniform, and the greedy objective (Eq.~\ref{eq:objrank-tworec}) leads to a binary search algorithm. Therefore, the teacher needs $\bigO{\log |r_2| + 1}$ examples in the worst case to eliminates \emph{all} the intermediate targets.

\end{itemize}

\begin{algorithm}[t]
  \caption{\adar-\teachalg: the adaptive teacher for \tworec (subroutine for \adar/\algref{alg:non-myopic})}\label{alg:adar}
  \begin{algorithmic}
    \STATE {\bf input:}
    $\Hypotheses$, $\ordering$, current $\hypothesis_t$, selected examples $\examples^t$, intermediate target 
    $\oracle(\hypothesis_t, \Hypotheses, \hstar) = \hstar_t$
    \IF{$(\hstar\in \Hypotheses^1 \wedge h_t\in \Hypotheses^2)$}
    \IF{$\hstar_t$ is $\Hypotheses^1$ splits from $\hstar$ and $\dist(\hypothesis_t, \hstar_t) > 1$}
    \STATE $T \leftarrow \text{GenerateAllCorners}(\hstar)$
    \STATE $z \leftarrow \text{Sample}(T\setminus \examples^t)$
    \ELSE
    \STATE $z \leftarrow \argmin_\example |\{\hypothesis' \in \Hypotheses(z): \orderingof{\hypothesis'}{\hypothesis_z} \leq \orderingof{\hstar_t}{\hypothesis_z}\}| $
    \ENDIF
    \ELSE
    \STATE T $\leftarrow \text{GenerateDiagonalCorners}(\hstar_t)$
    \STATE $z \leftarrow \text{Sample}(T\setminus \examples^t)$
    \ENDIF
    \STATE {\bf output:} next teaching example $z$
  \end{algorithmic}
\end{algorithm}

The pseudo code of \adarteacher is given in \algref{alg:adar}.

\subsection{\nonadar}
Next, we present the non-adaptive algorithm, \nonadar, which is used in our simulation.

According to our modeling assumption, other than the learner's initial hypothesis, the non-adaptive teacher does not observe how the learner updates her hypotheses. However, this does not affect teaching the easy scenarios, namely $\Hypotheses^{1\rightarrow 1}$, $\Hypotheses^{2\rightarrow 2}$, and $\Hypotheses^{1\rightarrow 2}$. In such cases, the non-adaptive teacher provides all the diagonal corner examples (including both positive and negative) as described earlier, which needs at most 6 examples for $\Hypotheses^1$ target, and 12 for $\Hypotheses^2$.

When teaching $\Hypotheses^{2\rightarrow 1}$, in particular, for the case of [$\Hypotheses^{2\rightarrow 1}$]--1--(b) and [$\Hypotheses^{2\rightarrow 1}$]--2--(b), it is not possible for the non-adaptive teacher to perform a binary search. The reason is that the learner's behavior is highly non-deterministic at every iteration, and the uncertainty of the learner's hypotheses diffuses at an exponential rate. 
The best thing a non-adaptive teacher can do (in the worst case) is a linear scan over the candidate teaching examples, in which case it requires \bigOmega{|\ell|} examples for [$\Hypotheses^{2\rightarrow 1}$]--1--(b), and \bigOmega{|r_2|} examples for [$\Hypotheses^{2\rightarrow 1}$]--2--(b).

The pseudocode of \nonadar is provided in \algref{alg:nonadar}.

\begin{algorithm}[t]
  \caption{\nonadar: the non-adaptive teaching algorithm for \tworec}\label{alg:nonadar}
  \begin{algorithmic}
    \STATE {\bf input:}
    $\Hypotheses$, $\ordering$, initial $\hypothesis_0$, selected examples $\examples^t$, oracle $\oracle(\hinit, \Hypotheses, \hstar) = \{\hstar_1, \dots, \hstar_k\}$
    \IF{$(\hstar\in \Hypotheses^1 \wedge \hinit\in \Hypotheses^2)$}
    \IF{$\{\hstar_1, \dots, \hstar_k\}$ are $\Hypotheses^1$ splits from $\hstar$}
    \STATE $T_1 \leftarrow \text{GenerateAllCorners}(\hstar)$
    \STATE $T_2 \leftarrow \text{GenerateAllEdgeInstances}(\hstar)$
    \\\COMMENT{\commentfmt{$\drsh$ provide all the (positive) teaching examples on the edges/borders of $\hstar$ to make the learner ``merge'' the two rectangles in $\hinit$.}}
    \STATE $Z \leftarrow (T_1, T_2)$
    \ELSE
    \STATE $T_1 \leftarrow \text{GenerateAllConsistentInstances}(r(\hinit) \text{ which contains the singleton rectangles})$
    \\\COMMENT{\commentfmt{$\drsh$ if one of the rectangles of $\hinit$ is disjoint with $\hstar$, provide all the examples inside this rectangle to make the learner ``delete'' it.}}
    \STATE $T_2 \leftarrow \text{GenerateAllCorners}(\hstar)$
    \STATE $Z \leftarrow (T_1, T_2)$
    \ENDIF
    \ELSE
    \STATE $Z \leftarrow \text{GenerateDiagonalCorners}(\hstar_1)$
    \ENDIF
    \STATE {\bf output:} Sequence of teaching examples $Z$
  \end{algorithmic}
\end{algorithm}


}
{
}

\end{document}